\documentclass{article}

\usepackage[preprint]{neurips_2026}

\usepackage[utf8]{inputenc} % allow utf-8 input
\usepackage[T1]{fontenc}    % use 8-bit T1 fonts
\usepackage[hidelinks]{hyperref}
\usepackage{url}            % simple URL typesetting
\usepackage{amsmath,amssymb,amsfonts,amsthm}
\usepackage{booktabs}       % professional-quality tables
\usepackage{algorithm}
\usepackage{algorithmic}
\usepackage{graphicx}
\usepackage{float}          % enables [H] to force figure placement at source
\usepackage{nicefrac}       % compact symbols for 1/2, etc.
\usepackage{microtype}      % microtypography
\usepackage{xcolor}         % colors
\usepackage{enumitem}       % list customization (itemsep, label= options)

\newtheorem{theorem}{Theorem}
\newtheorem{corollary}{Corollary}
\newtheorem{proposition}{Proposition}
\newtheorem{lemma}{Lemma}
\theoremstyle{definition}
\newtheorem{definition}{Definition}
\theoremstyle{remark}
\newtheorem{remark}{Remark}
\theoremstyle{plain}

\newcommand{\blfootnote}[1]{%
  \begingroup
  \renewcommand\thefootnote{}\footnote{#1}%
  \addtocounter{footnote}{-1}%
  \endgroup
}

\newcommand{\thetahat}{\hat{\theta}}
\newcommand{\Var}{\mathrm{Var}}
\newcommand{\Cov}{\mathrm{Cov}}
\newcommand{\E}{\mathbb{E}}
\newcommand{\R}{\mathbb{R}}
\newcommand{\X}{\mathcal{X}}
\newcommand{\cI}{\mathcal{I}}
\newcommand{\Lag}{\mathcal{L}}
\newcommand{\F}{\mathcal{F}}

\newcommand{\cF}{\mathcal{F}}
\newcommand{\dlim}{\xrightarrow{d}}
\newcommand{\plim}{\xrightarrow{p}}
\newcommand{\1}{\mathbbm{1}}
\usepackage{bbm}

\title{Active Multiple-Prediction-Powered Inference}

\author{
  Nicholas Peter Brawand \\
  The MITRE Corporation \\
  \texttt{nbrawand@mitre.org}
  \And
  Nima Leclerc \\
  The MITRE Corporation \\
  \texttt{nleclerc@mitre.org}
  \And
  Anhthy Ngo \\
  The MITRE Corporation \\
  \texttt{ango@mitre.org}
  \And
  Matthew Peterson \\
  The MITRE Corporation \\
  \texttt{mpeterson@mitre.org}
  \And
  Sriram Vishwanath \\
  The MITRE Corporation \\
  Georgia Institute of Technology \\
  \texttt{sriram@ece.gatech.edu}
  \And
  Laith Alhussein \\
  The MITRE Corporation \\
  \texttt{lalhussein@mitre.org}
  \And
  Ben Wellner \\
  The MITRE Corporation \\
  \texttt{wellner@mitre.org}
}

\begin{document}
\maketitle
\blfootnote{%
\begin{center}
Approved for Public Release; Distribution Unlimited. Public Release Case Number 26-0953.\\
\copyright 2026 The MITRE Corporation. ALL RIGHTS RESERVED.
\end{center}%
}

% ============================================================
\begin{abstract}
% ============================================================
Post-deployment monitoring of healthcare AI requires statistically valid, label-efficient methods, but gold-standard labels from clinician chart review are expensive. Prediction-powered inference (PPI) and active statistical inference (ASI) reduce label cost by combining a small labeled sample with abundant model predictions, but both are restricted to a single predictor, a poor fit for modern clinical pipelines that have multiple predictors of differing cost and accuracy available at inference time. We propose Active Multiple-Prediction-Powered Inference (AM-PPI), which routes each instance to a cost-appropriate predictor subset, samples gold-standard labels in proportion to the chosen subset's residual uncertainty, and reweights predictions to minimize estimator variance, all under a single deployment-time budget. AM-PPI generalizes ASI to leverage multiple predictors and extends Multiple-PPI from global per-predictor allocation to per-instance adaptive routing. We derive closed-form Karush-Kuhn-Tucker (KKT) conditions for all three decisions and prove, via biconvexity and strong duality, that the resulting fixed point is a global optimum despite the joint problem being non-jointly-convex. We establish asymptotic normality with valid coverage, minimum-variance unbiasedness within the linear-prediction augmented inverse propensity weighted (AIPW) class, and a closed-form criterion identifying when multiple predictors help. On synthetic data and three healthcare monitoring tasks, AM-PPI produces 10 to 40 percent narrower confidence intervals (CIs) than single-predictor ASI in the budget regime where routing matters, and matches the better baseline elsewhere.
\end{abstract}

% ============================================================
\section{Introduction}
\label{sec:intro}
% ============================================================

AI models are increasingly deployed in clinical settings for tasks such as disease diagnosis, electronic health record summarization, treatment planning, and patient monitoring~\citep{kelly2019key, chung2025verifact}, but model performance is not static in deployment settings. Distribution shifts in patient demographics, clinical protocols, medical equipment, and disease prevalence can degrade deployed models and lead to missed diagnoses or delayed interventions~\citep{finlayson2021clinician, park2021reliable}. Post-deployment monitoring is therefore essential, and both the FDA-issued Software as a Medical Device (SaMD) framework~\citep{fda2021samd} and the NIST AI Risk Management Framework~\citep{nist2023airmf} call for ongoing performance evaluation. Despite this, a recent review found that only 9\% of FDA-registered AI-based healthcare tools include a post-deployment surveillance plan~\citep{wu2021fda}. \citet{keyes2026monitoringdeployedaisystems} propose a practical monitoring framework, deployed at Stanford Health Care, organized around system integrity, performance, and impact, and highlight maintaining accurate system behavior as clinical practices and input distributions shift over time as a central challenge. \citet{dolin2025postdeployment} argue that post-deployment monitoring should be grounded in label-efficient, statistically valid testing frameworks, and identify label scarcity as a key open problem.

The core challenge of developing statistically valid post-deployment testing frameworks is that monitoring requires gold-standard labels (e.g., clinician chart review, pathology confirmation), which are costly and slow to obtain. Classical inference from a small labeled sample yields wide CIs. At the same time, AI predictions are abundant but potentially biased. Prediction-powered inference (PPI)~\citep{angelopoulos2023ppi} bridges this gap by combining a small labeled dataset with a large set of AI predictions to produce valid CIs that are tighter than classical inference alone. PPI++~\citep{angelopoulos2023ppiplus} introduced a tuning parameter $\lambda$ that ensures PPI does not increase variance relative to labeled-only inference. Stratified PPI~\citep{fisch2024stratppi} further tightens intervals by stratifying on prediction quality. Further extensions of PPI are covered in Section~\ref{sec:related}. All of these methods sample labels uniformly.

Active statistical inference (ASI)~\citep{zrnic2024active} introduced a complementary idea: rather than sampling labels uniformly, allocate the labeling budget to instances where the predictor is most uncertain, thereby reducing the width of CIs while making the best use of human labeling efforts in an online setting. Robust ASI~\citep{li2025robust} improved the efficiency of the active sampling algorithm when uncertainty estimates are poorly calibrated, but both methods are restricted to a single predictor. In practice, modern clinical deployments often have several predictors of varying costs and accuracy available at inference time. A hospital may have a low-cost EHR-based triage model, a production model, and a more expensive specialized model or specialist chart review for borderline or high-risk cases. The predictors in such a pipeline differ substantially in both cost and accuracy, and the most informative predictor to trust varies from one instance to the next. Neither PPI nor ASI can exploit this structure.

We propose Active Multiple-Prediction-Powered Inference (AM-PPI), which jointly optimizes three coupled decisions under a unified budget constraint that accounts for predictor quality, query costs, and labeling costs. AM-PPI determines (1) \emph{which subset of predictors} to query for each instance; (2) \emph{whether to collect a gold-standard label}, sampling proportional to the residual uncertainty of the chosen predictor subset; and (3) \emph{how to weight the predictions} from each predictor in the subset to minimize estimator variance. Our contributions are (i) optimality conditions for these three decisions together with practical calibration and deployment algorithms for post-deployment monitoring (Section~\ref{sec:method}), recovering ASI as a special case, and a biconvexity / strong-duality argument certifying that these conditions identify a global optimum despite the joint variance-minimization problem not being jointly convex (Appendix~\ref{app:opt_global}); (ii) asymptotic normality of the AM-PPI estimator yielding valid CIs, plus a minimum-variance unbiased optimality result within the natural class of AIPW estimators that respect the active-sampling mechanism (Section~\ref{sec:theory}); (iii) a closed-form expression for the variance advantage of AM-PPI over single-predictor methods in a simplified data setting, identifying when multiple predictors help (Appendix~\ref{app:advantage} and \ref{app:ablation}); and (iv) empirical validation on synthetic data and three healthcare monitoring tasks, showing consistently narrower CIs compared to ASI baselines while maintaining valid coverage (Section~\ref{sec:experiments}).

% ============================================================
\section{Related work}
\label{sec:related}
% ============================================================

\paragraph{Prediction-powered inference.} PPI~\citep{angelopoulos2023ppi} introduced the idea of combining a small labeled dataset with a large ML-predicted dataset to form valid CIs for population parameters. PPI++~\citep{angelopoulos2023ppiplus} added a tuning parameter $\lambda$ that interpolates between classical and prediction-powered estimators, ensuring PPI does not increase variance. Cross-PPI~\citep{zrnic2024crossppi} uses cross-fitting to avoid overfitting the ML model to labeled data. Stratified PPI~\citep{fisch2024stratppi} partitions the data into strata based on autorater predictions and applies stratum-specific tuning, substantially reducing variance when prediction quality varies across subdomains. PSPS~\citep{miao2024psps} provides a task-agnostic framework for post-prediction inference. Multiple-PPI~\citep{cowenbreen2026multippi} extends PPI to multiple predictors, solving for a cost-aware global allocation of query counts across predictors under uniform labeling for prospective experimental design. AM-PPI differs in three important ways: (i) we adaptively route individual instances to predictor subsets based on their estimated residual variance rather than fixing a global per-predictor query count; (ii) we jointly optimize active label sampling alongside predictor selection; and (iii) our setting is post-deployment monitoring of a fixed stream of unlabeled instances rather than planning a future experiment.

\paragraph{Active statistical inference.} ASI~\citep{zrnic2024active} designs an optimal sampling rule $\pi(x)$ that prioritizes labeling instances where the predictor is uncertain, yielding substantially tighter CIs than uniform sampling. Robust ASI~\citep{li2025robust} improves active sampling when uncertainty estimates are inaccurate by interpolating between active and uniform sampling along a budget-preserving path. Both methods use a single predictor, and AM-PPI extends ASI from one predictor to multiple predictors with cost-aware subset routing. AM-PPI recovers ASI as a special case. We focus on the comparison with ASI but future work will include robust optimization extensions. 

\paragraph{Post-deployment monitoring for health AI.} \citet{dolin2025postdeployment} frame post-deployment monitoring as a collection of hypothesis tests and identify label scarcity as the central open challenge. Label-free approaches to performance estimation under distribution shift exist~\citep{chen2022estimating, garg2022leveraging}, but sacrifice the statistical guarantees that regulatory frameworks require. Sequential shift detection methods~\citep{amoukou2024sequential} can flag when performance may have changed, but cannot quantify the magnitude of degradation without labels. AM-PPI fills these gaps by providing a cost-efficient way to produce statistically valid CIs in a post-deployment setting.

% ============================================================
\section{Active multiple-prediction-powered inference}
\label{sec:method}
% ============================================================

\subsection{The AM-PPI estimator}

We observe $n$ unlabeled instances $X_1, \ldots, X_n$ drawn i.i.d.\ from $P_X$. Each instance has an unobserved gold-standard label $Y_i$ that is expensive to obtain (e.g., clinician review). We have access to $k$ predictors $f_1, \ldots, f_k$ with per-predictor query costs $c_1, \ldots, c_k$; a subset $I \subseteq \{1, \ldots, k\}$ of predictors has combined query cost $c_I = \sum_{j \in I} c_j$. The total budget $B$ covers both predictor queries and label collection (at cost $c_{\text{label}}$ per label). Our goal is to estimate $\theta^* = \E[Y]$ and construct a valid, minimal-length $(1-\alpha)$ CI subject to the budget constraint; the framework extends to general M-estimation, but we focus on mean estimation for clarity. AM-PPI extends PPI++~\citep{angelopoulos2023ppiplus} and ASI~\citep{zrnic2024active} by introducing a per-instance predictor query subset $I_i \subseteq \{1, \ldots, k\}$ with predictions $f_{I_i}(X_i) \in \mathbb{R}^{|I_i|}$ and a weight vector $\lambda_{I_i} \in \mathbb{R}^{|I_i|}$ that combines them. Let $\xi_i \sim \mathrm{Bern}(\pi_{I_i}(X_i))$ indicate whether $Y_i$ is collected, with subset-dependent sampling probability $\pi_{I_i}(X_i) \in (0,1]$. The AM-PPI estimator is:
\begin{equation}
    \hat{\theta} = \frac{1}{n} \sum_{i=1}^{n} \left[ \lambda_{I_i}^\top f_{I_i}(X_i) + \left( Y_i - \lambda_{I_i}^\top f_{I_i}(X_i) \right) \frac{\xi_i}{\pi_{I_i}(X_i)} \right].
    \label{eq:amppi}
\end{equation}
When $k=1$ and $\lambda = 1$, this reduces to the ASI estimator of \citet{zrnic2024active}.

\subsection{Variance minimization under budget constraints}

The asymptotic variance of the AM-PPI estimator is (see Appendix~\ref{app:proof_clt}):
\begin{equation}
    \Var(\hat{\theta})
    = \frac{1}{n}\,\Var(Y)
    + \frac{1}{n}\,\E\!\left[ r_{I(X)}(X)\!\left(\frac{1}{\pi_{I(X)}(X)} - 1\right) \right],
    \label{eq:variance}
\end{equation}
where $r_I(x) := \E\bigl[(Y - \lambda_I^\top f_I(x))^2 \mid X = x\bigr]$
is the conditional residual variance under predictor subset~$I$ and
combination weights~$\lambda_I$. We let $u_I(x) := \sqrt{r_I(x)}$ denote the per-instance residual standard deviation, our population-level measure of model uncertainty. Our goal is to construct the shortest valid $(1-\alpha)$ CI for $\theta^*$ subject to a per-instance budget constraint. Since the interval width scales with $\sqrt{\Var(\hat{\theta})}$, this reduces to minimizing~\eqref{eq:variance} jointly over the labeling policy~$\pi$, the combination weights~$\lambda$, and the subset policy~$I$. The $\Var(Y)$ term does not depend on $(\pi,\lambda,I)$ and therefore drops out of the optimization:
\begin{equation}
    \min_{\pi,\, \lambda,\, I} \;
    \E\!\left[ r_{I(X)}(X)\!\left(\frac{1}{\pi_{I(X)}(X)} - 1\right) \right]
    \quad \text{s.t.} \quad
    \E\!\left[ c_{I(X)} + \pi_{I(X)}(X) \cdot c_{\mathrm{label}} \right] \leq b,
    \label{eq:opt}
\end{equation}
where $b = B/n$ is the per-instance budget. Introducing a Lagrange
multiplier $\mu \geq 0$ for the budget constraint gives:
\begin{equation}
    \mathcal{L}
    = \frac{1}{n}\,\E\!\left[ r_{I(X)}(X)\!\left(\frac{1}{\pi_{I(X)}(X)} - 1\right) \right]
    + \mu \!\left(
        \E\!\left[ c_{I(X)} + \pi_{I(X)}(X) \cdot c_{\mathrm{label}} \right] - b
    \right).
\end{equation}

Taking the functional derivative with respect to $\pi(x)$ and setting it to zero yields the unconstrained optimum (Appendix~\ref{app:optimization})
\begin{equation}
    \pi^*_I(x) = \frac{u_I(x)}{\sqrt{n\,\mu \, c_{\text{label}}}}.
    \label{eq:pi_star}
\end{equation}
The sampling rule is proportional to the per-instance uncertainty $u_I(x)$, so instances with higher model uncertainty are sampled more often. Since $\pi$ must be a valid probability, AM-PPI uses the clipped quantity
\begin{equation}
    \hat\pi_I(x) := \min\!\bigl(1,\, \pi^*_I(x)\bigr)
    \label{eq:pi_clipped}
\end{equation}
in all downstream substitutions. Differentiating with respect to $\lambda_I$ yields the weighted least-squares condition
\begin{equation}
    \E\!\left[w_I(X)\,(Y - \lambda_I^\top f_I(X))\,f_I(X)\right] = 0, \quad w_I(x) = \frac{1}{\hat\pi_I(x)} - 1,
    \label{eq:lambda_star}
\end{equation}
where $w_I = 0$ on $\{\hat\pi_I = 1\}$ correctly drops the prediction term for instances always labeled. The multiplier $\mu$ is determined by the clipped budget constraint
\begin{equation}
    \E\!\left[c_{I(X)} + c_{\text{label}}\,\hat\pi_{I(X)}(X)\right] = b,
    \label{eq:mu_condition}
\end{equation}
whose left-hand side is monotonically non-increasing in $\mu$, so a unique root exists and is found by bisection. In the unclipped, single-fixed-subset case, Eq.~\eqref{eq:mu_condition} admits the closed form $\mu = (c_{\text{label}}/n)\bigl(\E_X[\sqrt{r_I(X)}]/(b - \E[c_I])\bigr)^2$, recovering the ASI scaling rule~\citep{zrnic2024active} when $c_I = 0$. In general the routing $I^*(x)$ itself depends on $\mu$ via Eq.~\eqref{eq:I_star}, so we solve Eq.~\eqref{eq:mu_condition} numerically, recomputing the routing at each candidate $\mu$. The optimal subset minimizes the per-instance Lagrangian cost
\begin{equation}
    I^*(x) = \arg\min_{I} \!\left\{ \frac{r_I(x)}{n}\!\left(\frac{1}{\hat\pi_I(x)} - 1\right) + \mu\,c_I + \mu\,c_{\text{label}}\,\hat\pi_I(x) \right\},
    \label{eq:I_star}
\end{equation}
which on the unclipped set collapses to $\frac{2\sqrt{\mu c_{\text{label}}}}{\sqrt{n}}\sqrt{r_I(x)} - r_I(x)/n + \mu c_I$ (Appendix~\ref{app:optimization}) and is non-decreasing in $r_I(x)$ across both regimes, capturing the per-instance tradeoff between query cost and residual variance.

\subsection{Practical algorithms}

The optimality conditions above depend on the unknown $r_I(x)$, which we estimate from a burn-in dataset of $N$ fully-labeled samples and refine iteratively (Algorithm~\ref{alg:ampi_calib}). Algorithm~\ref{alg:ampi_deploy} then deploys the calibrated quantities $(\hat{\lambda}_I, \hat{u}_I, \hat{\mu})$ to a stream of $n$ unlabeled instances, producing the estimator $\hat\theta$ and CI $\mathcal{C}_\alpha$.

\begin{algorithm}[H]
\caption{AM-PPI Calibration}
\label{alg:ampi_calib}
\begin{algorithmic}[1]
\REQUIRE Burn-in data $\{(X_j, Y_j, f_1(X_j), \ldots, f_k(X_j))\}_{j=1}^{N}$; candidate subsets $\mathcal{I} \subseteq 2^{\{1,\ldots,k\}}$; costs $\{c_I\}_{I \in \mathcal{I}}, c_{\text{label}}$; per-instance budget $b$; deployment sample size $n$; max iterations $T$
\FOR{each $I \in \mathcal{I}$}
    \STATE $\hat{\lambda}_I \leftarrow \widehat{\Cov}(f_I, f_I)^{-1}\, \widehat{\Cov}(f_I, Y)$ \hfill \textit{(OLS initialization)}
    \STATE $\hat{u}_I(\cdot) \leftarrow$ fit $|Y - \hat{\lambda}_I^\top f_I(X)|$ 
\ENDFOR
\FOR{$t = 1, \ldots, T$}
    \STATE $\hat{r}_I(x) \leftarrow \hat{u}_I(x)^2$
    \STATE $\hat{\mu} \leftarrow$ root of Eq.~\eqref{eq:mu_condition} \hfill \textit{(bisection; $I(x)$ evaluated at each $\hat{\mu}$)}
    \STATE $\hat{\lambda}_I \leftarrow$ weighted least squares solution of Eq.~\eqref{eq:lambda_star}
    \STATE $\hat{u}_I(\cdot) \leftarrow$ refit $|Y - \hat{\lambda}_I^\top f_I(X)|$ 
    \STATE \textbf{break} if $\hat{\mu}$ and $\hat{\lambda}_I$ have converged
\ENDFOR
\STATE \textbf{return} $\hat{\lambda}_I,\; \hat{u}_I,\; \hat{\mu}$
\end{algorithmic}
\end{algorithm}

\begin{algorithm}[H]
\caption{AM-PPI Deployment}
\label{alg:ampi_deploy}
\begin{algorithmic}[1]
\REQUIRE Unlabeled instances $X_1, \ldots, X_n$; calibrated $\hat{\lambda}_I, \hat{u}, \hat{\mu}$ from Algorithm~\ref{alg:ampi_calib}; candidate subsets $\mathcal{I}$; costs $\{c_I\}_{I \in \mathcal{I}}, c_{\text{label}}$; confidence level $\alpha$
\FOR{$i = 1, \ldots, n$}
    \STATE $\hat\pi_I(X_i) \leftarrow \min\!\bigl(1,\,\sqrt{\hat u_I(X_i)^2/(n\,\hat\mu\,c_{\text{label}})}\bigr)$ for each $I \in \mathcal{I}$ \hfill \textit{(Eq.~\ref{eq:pi_clipped})}
    \STATE $I_i \leftarrow \arg\min_{I} \left\{ \tfrac{\hat u_I(X_i)^2}{n}\!\left(\tfrac{1}{\hat\pi_I(X_i)} - 1\right) + \hat\mu\,c_I + \hat\mu\,c_{\text{label}}\,\hat\pi_I(X_i) \right\}$; query $f_{I_i}(X_i)$ \hfill \textit{(Eq.~\ref{eq:I_star})}
    \STATE $\hat{\pi}(X_i) \leftarrow \hat\pi_{I_i}(X_i)$;\; $\xi_i \sim \mathrm{Bern}(\hat{\pi}(X_i))$;\; if $\xi_i = 1$, collect $Y_i$
\ENDFOR
\STATE $\hat{\theta} \leftarrow \frac{1}{n} \sum_{i=1}^{n} \left[ \hat{\lambda}_{I_i}^\top f_{I_i}(X_i) + \left( Y_i - \hat{\lambda}_{I_i}^\top f_{I_i}(X_i) \right) \frac{\xi_i}{\hat{\pi}(X_i)} \right]$
\STATE $\hat{\sigma}^2 \leftarrow \frac{1}{n} \sum_{i=1}^{n} \left( \hat{\lambda}_{I_i}^\top f_{I_i}(X_i) + \left( Y_i - \hat{\lambda}_{I_i}^\top f_{I_i}(X_i) \right) \frac{\xi_i}{\hat{\pi}(X_i)} - \hat{\theta} \right)^2$
\STATE \textbf{return} $\hat{\theta},\; \mathcal{C}_\alpha = \hat{\theta} \pm z_{1-\alpha/2} \cdot \hat{\sigma} / \sqrt{n}$
\end{algorithmic}
\end{algorithm}

The burn-in data may be a pre-existing labeled dataset or a small initial investment from the total budget $B = nb$. The deployment uncertainty models $\hat{u}_I$ are trained on covariates $X$ alone. In practice the uncertainty models need not operate on the same feature representation as the predictors themselves and can instead rely on lightweight covariates that are cheap to extract (e.g., the length of a discharge summary rather than its full text), as we demonstrate in Section~\ref{sec:experiments}.

% ============================================================
\section{Theoretical analysis}
\label{sec:theory}
% ============================================================

\subsection{Asymptotic normality and coverage}
\label{sec:asymptotic}

We establish that the AM-PPI estimator is asymptotically normal and produces valid CIs. Throughout this section and the asymptotic proofs in Appendix~\ref{app:proof_clt}, $\pi^*$ refers to the population sampling policy evaluated at the oracle $(\lambda^*, I^*, \mu^*)$. 

\begin{theorem}[Asymptotic normality of AM-PPI]
\label{thm:clt}
Suppose the following regularity conditions hold: (i) the moments $\E[Y^4]$ and $\E[\|f_I(X)\|^4]$ are bounded for all $I \in \mathcal{I}$; (ii) the estimated quantities $\hat{\pi}$, $\hat{\lambda}_I$, and $\hat{I}$ converge in probability to their population counterparts $\pi^*$, $\lambda_I^*$, and $I^*$; (iii) the sampling probabilities are bounded away from zero, $\pi(x) \geq \pi_{\min} > 0$ for all $x$. Then the AM-PPI estimator satisfies
\[
\sqrt{n}(\hat{\theta} - \theta^*) \xrightarrow{d} \mathcal{N}(0, V),
\]
where $V = \Var(Y) + \E\!\left[(Y - \lambda_{I^*}^\top f_{I^*}(X))^2\!\left(\tfrac{1}{\pi^*(X)} - 1\right)\right]$ is the asymptotic variance from Eq.~\eqref{eq:variance}.
\end{theorem}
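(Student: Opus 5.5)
I would decompose $\sqrt{n}(\hat\theta-\theta^*)$ into an oracle term that is an i.i.d. sum and a remainder driven by the estimation error in $(\hat\pi,\hat\lambda,\hat I)$. The oracle term is the classical CLT; the remainder is shown to be $o_p(1)$ using the fact that the nuisances are calibrated on the independent burn-in data.

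Write $\psi(X_i,Y_i,\xi_i;\lambda,\pi,I) := \lambda_I^\top f_I(X_i) + (Y_i-\lambda_I^\top f_I(X_i))\xi_i/\pi_I(X_i)$. Then $\hat\theta = n^{-1}\sum_i \psi(\cdot;\hat\lambda,\hat\pi,\hat I)$. Define the oracle $\tilde\theta$ with $(\lambda^*,\pi^*,I^*)$ in place of the estimates, and split
\[
\sqrt{n}(\hat\theta-\theta^*) = \sqrt{n}(\tilde\theta-\theta^*) + \sqrt{n}(\hat\theta-\tilde\theta).
\]

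\textbf{Oracle term.} Conditional on $X_i$, the label indicator satisfies $\E[\xi_i\mid X_i,Y_i]=\pi^*(X_i)$. Hence $\E[\psi\mid X_i,Y_i]=Y_i$, so $\E\psi=\theta^*$. The summands are i.i.d.: $\xi_i$ is drawn independently given $X_i$, and the routing $I^*$ is a deterministic function of $x$.

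For the variance, the law of total variance conditioning on $(X,Y)$ gives
\[
\Var(\psi)=\Var(Y)+\E\!\left[(Y-\lambda_{I^*}^\top f_{I^*})^2\,\frac{1-\pi^*}{\pi^*}\right] = V.
\]
Finiteness of $V$ follows from (i) together with (iii), since $1/\pi^*\le 1/\pi_{\min}$. The Lindeberg–Lévy CLT then gives $\sqrt{n}(\tilde\theta-\theta^*)\dlim\mathcal N(0,V)$. Fourth moments are more than needed here; they are used later for consistency of $\hat\sigma^2$.

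\textbf{Remainder.} Treat $(\hat\lambda,\hat u,\hat\mu)$ as fixed by conditioning on the burn-in sample, which is independent of the deployment stream. Then the differences $D_i=\psi(\cdot;\hat\cdot)-\psi(\cdot;{}^*)$ are conditionally i.i.d. Each $D_i$ has conditional mean zero, because both versions are unbiased for $Y_i$ given $(X_i,Y_i)$ for any fixed routing and any propensity bounded below; this is the AIPW double-robustness in the $\pi$-known direction. Consequently
\[
\E\bigl[n(\hat\theta-\tilde\theta)^2\mid\text{burn-in}\bigr]=\E[D^2\mid\text{burn-in}].
\]
I would bound $\E[D^2]$ by splitting on the event $\{\hat I(X)=I^*(X)\}$. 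On that event, $D$ is controlled by $\|\hat\lambda-\lambda^*\|\,\|f\|$ and $|1/\hat\pi-1/\pi^*|\,|Y-\lambda^\top f|$, both of which vanish by (ii) and (iii). On the complement, I would bound $D^2$ by a fourth-moment-integrable envelope times an indicator whose probability tends to zero, and conclude via Cauchy–Schwarz using (i). Conditional Chebyshev then gives $\sqrt{n}(\hat\theta-\tilde\theta)=o_p(1)$, and Slutsky finishes the proof.

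\textbf{Main obstacle.} A subtlety is that $\hat\pi$ depends on $n$ through the factor $\sqrt{n\hat\mu}$. Condition (ii) must therefore be read as convergence of the product $n\hat\mu$ (equivalently, of $\hat\pi$ as a function), and I would state it that way. The harder step is the one that relies on the burn-in independence. If the nuisances were instead estimated from the deployment data, the conditional-mean-zero argument would fail, and one would need cross-fitting or a Donsker/empirical-process condition on the class of $\psi$'s. The discontinuity of the $\arg\min$ routing makes that class awkward, so I would lean on sample splitting.

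For the CI, I would show $\hat\sigma^2\plim V$ by the weak law of large numbers using fourth moments, and then apply Slutsky to obtain asymptotic $(1-\alpha)$ coverage.
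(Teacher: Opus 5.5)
Your oracle-plus-remainder decomposition is a genuinely different route from the paper's and can be made to work. As written, though, the remainder step has a gap. You never say how the oracle's label lottery is realized alongside the algorithm's, and the two facts you use about $D_i$ require incompatible constructions.

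If $\tilde\theta$ reuses the deployed $\xi_i\sim\mathrm{Bern}(\hat\pi(X_i))$, then $\E[\psi(\cdot;\lambda^*,\pi^*,I^*)\mid X_i,Y_i,\mathcal{F}_N]=Y_i-R_i^*\bigl(1-\hat\pi(X_i)/\pi^*(X_i)\bigr)$. Your bound of $D$ by $|1/\hat\pi-1/\pi^*|\,|Y-\lambda^\top f|$ implicitly assumes this shared lottery. In that case the oracle term is not centered at $\theta^*$ and $D_i$ is not mean zero. Moreover, $\sqrt{n}(\hat\theta-\tilde\theta)$ carries a bias $\sqrt{n}\,\E[R^*(1-\hat\pi/\pi^*)\mid\mathcal{F}_N]$, which (ii) alone does not make $o_p(1)$.

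If instead the oracle gets an independent $\xi_i^*\sim\mathrm{Bern}(\pi^*(X_i))$, then $D_i$ is mean zero. But now $\E[D_i^2\mid\mathcal{F}_N]\to 2\,\E[R^{*2}(1/\pi^*-1)]$, which is generally nonzero.

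What you need is the monotone coupling. Since only the law of $\hat\theta$ matters, you may realize $\xi_i=\mathbf{1}\{U_i\le\hat\pi(X_i)\}$ and define $\xi_i^*=\mathbf{1}\{U_i\le\pi^*(X_i)\}$ with a common $U_i\sim\mathrm{Unif}(0,1)$. On $\{\hat I=I^*\}$ this gives
\[
D_i=(\hat\lambda_{I^*}-\lambda^*_{I^*})^\top f_{I^*}(X_i)\Bigl(1-\frac{\xi_i}{\hat\pi(X_i)}\Bigr)+R_i^*\Bigl(\frac{\xi_i}{\hat\pi(X_i)}-\frac{\xi_i^*}{\pi^*(X_i)}\Bigr).
\]
The last factor is not controlled by $|1/\hat\pi-1/\pi^*|$ alone, because $\xi_i\neq\xi_i^*$ with conditional probability $|\hat\pi(X_i)-\pi^*(X_i)|$. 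A direct computation gives
\[
\E\bigl[(\xi_i/\hat\pi-\xi_i^*/\pi^*)^2\mid X_i,\mathcal{F}_N\bigr]=\frac{|\hat\pi(X_i)-\pi^*(X_i)|}{\hat\pi(X_i)\,\pi^*(X_i)},
\]
which is linear rather than quadratic in the propensity error. Cauchy--Schwarz with (i) and (iii) still bounds this contribution by $\pi_{\min}^{-2}(\E[R^{*4}])^{1/2}\|\hat\pi-\pi^*\|_{L_2(P_X)}$. With the coupling made explicit and this term added, your argument goes through.

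For comparison, the paper never constructs an oracle estimator. It conditions on $\mathcal{F}_N$ and applies the CLT directly to the conditionally i.i.d.\ increments of $\hat\theta$, whose conditional variance is $V_N=\Var(Y)+\E[\hat R^2(1-\hat\pi)/\hat\pi\mid\mathcal{F}_N]$. It then proves $V_N\plim V$ by splitting $V_N-V$ into a propensity term (A) and a residual term (B), the latter split on $\{\hat I=I^*\}$ exactly as you do. Finally, it lifts to the unconditional limit by dominated convergence on conditional characteristic functions. Because it compares only variances, it needs no coupling.

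Your route compares the estimators themselves in conditional $L_2$. That gives the stronger conclusion that $\hat\theta$ is asymptotically equivalent to the oracle. Also, $\E[n(\hat\theta-\tilde\theta)^2\mid\mathcal{F}_N]=\E[D^2\mid\mathcal{F}_N]$ carries no explicit $n$, so Slutsky handles $n,N\to\infty$ jointly. By contrast, the paper combines a CLT in $n$ at fixed $\mathcal{F}_N$ with $V_N\plim V$ as $N\to\infty$, which reads most naturally as a sequential limit.

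The analytic inputs are the same in both routes. Your coupled propensity term is bounded exactly like the paper's (A), and your $\lambda$ and routing-mismatch terms mirror its $B_1$ and $B_2$.
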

See Appendix~\ref{app:proof_clt} for full proof.

\begin{corollary}[Valid CIs]
\label{cor:ci}
Under the conditions of Theorem~\ref{thm:clt}, the variance estimator $\hat{\sigma}^2$ is consistent for $V$ (Appendix~\ref{app:proof_ci}), and the CI $\mathcal{C}_\alpha = \hat{\theta} \pm z_{1-\alpha/2}\, \hat{\sigma}/\sqrt{n}$ satisfies
\[
\lim_{n \to \infty} \mathbb{P}(\theta^* \in \mathcal{C}_\alpha) \geq 1 - \alpha.
\]
\end{corollary}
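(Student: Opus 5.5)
The corollary combines Theorem~\ref{thm:clt} with a consistency result for $\hat\sigma^2$ and Slutsky's lemma, so the work lies in showing $\hat\sigma^2 \plim V$. Write the per-instance summand as
\[
\psi_i(\hat\lambda,\hat\pi,\hat I) = \hat\lambda_{I_i}^\top f_{I_i}(X_i) + \bigl(Y_i - \hat\lambda_{I_i}^\top f_{I_i}(X_i)\bigr)\frac{\xi_i}{\hat\pi(X_i)},
\]
so that $\hat\theta = \frac1n\sum_i \psi_i$ and $\hat\sigma^2 = \frac1n\sum_i \psi_i^2 - \hat\theta^2$.

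\textbf{Oracle version.} First I would replace the estimated quantities by their oracle values $(\lambda^*, \pi^*, I^*)$, giving i.i.d.\ summands $\psi_i^*$. Conditioning on $(X_i, Y_i)$ and using $\E[\xi_i \mid X_i, Y_i] = \pi^*(X_i)$ gives $\E[\psi_i^*] = \theta^*$. Expanding the square gives
\[
\E[(\psi_i^*)^2] - \theta^{*2} = \Var(Y) + \E\!\left[(Y-\lambda_{I^*}^\top f_{I^*}(X))^2\left(\tfrac{1}{\pi^*(X)}-1\right)\right] = V,
\]
which is the same computation behind Eq.~\eqref{eq:variance} and Theorem~\ref{thm:clt}. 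By condition (iii), $|\psi_i^*| \le C(|Y_i| + \|f_{I^*}(X_i)\|)/\pi_{\min}$, so condition (i) gives a finite second moment; in fact the fourth moment is finite. The weak law of large numbers then gives $\frac1n\sum(\psi_i^*)^2 \plim \E[(\psi^*)^2]$ and $\frac1n\sum \psi_i^* \plim \theta^*$, hence the oracle variance estimate converges to $V$.

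\textbf{Perturbation step (the main obstacle).} I would bound $\frac1n\sum_i |\psi_i^2 - (\psi_i^*)^2|$. Two facts control it:
\begin{itemize}[itemsep=0pt]
\item For finitely many subsets $I \in \mathcal{I}$, we have $\hat\lambda_I - \lambda_I^* = o_p(1)$. This gives $|\hat\lambda_I^\top f_I - \lambda_I^{*\top} f_I| \le o_p(1)\,\|f_I\|$.
\item Condition (iii) keeps $1/\hat\pi \le 1/\pi_{\min}$, so $|1/\hat\pi - 1/\pi^*| \le |\hat\pi - \pi^*|/\pi_{\min}^2$.
\end{itemize}
Using $|a^2 - b^2| \le |a-b|(|a|+|b|)$ and Cauchy--Schwarz, the difference is at most
\[
\Bigl(\tfrac1n\sum (\psi_i - \psi_i^*)^2\Bigr)^{1/2} \cdot O_p(1).
\]
The $O_p(1)$ factor comes from the moment bounds. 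The first factor is bounded by $o_p(1)$ times moment averages of $Y$ and $f_I$, provided the convergence in (ii) is uniform, i.e.\ $\sup_x|\hat\pi(x) - \pi^*(x)| \plim 0$ and $\hat I(X_i) = I^*(X_i)$ outside a vanishing fraction of instances.

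For the routing mismatch set, each mismatched instance contributes at most a bounded-moment amount. Hölder with fourth moments then bounds the contribution by $(\text{fraction mismatched})^{1/2}\cdot O_p(1) = o_p(1)$. I read condition (ii) as supplying these uniform and fractional convergences. Since the calibrated quantities come from burn-in data independent of the deployment stream, I would argue conditionally on them, treating them as fixed functions. This avoids empirical-process arguments.

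\textbf{Conclusion.} Combining the two steps gives $\hat\sigma^2 \plim V$. Here $V \ge \Var(Y) > 0$ when $Y$ is nondegenerate, so $\hat\sigma \plim \sqrt V$. Slutsky applied to Theorem~\ref{thm:clt} yields
\[
\sqrt n(\hat\theta - \theta^*)/\hat\sigma \dlim \mathcal N(0,1),
\]
so $\mathbb P(\theta^* \in \mathcal C_\alpha) \to 1-\alpha$, which in particular satisfies the stated $\ge$ bound.
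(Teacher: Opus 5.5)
There is a genuine gap in how you treat the labeling indicators. In deployment $\xi_i\sim\mathrm{Bern}(\hat\pi(X_i))$, not $\mathrm{Bern}(\pi^*(X_i))$.

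\emph{Reading 1: $\psi_i^*$ reuses the realized $\xi_i$.} Your perturbation step controls only the changes in $\lambda$, in the routing, and in the denominator $1/\pi$, so it implicitly keeps the realized $\xi_i$ inside $\psi_i^*$. But then the identity $\E[\xi_i\mid X_i,Y_i]=\pi^*(X_i)$ used in your oracle step is false. Conditionally on the burn-in, with $R_i^*=Y_i-\lambda^{*\top}_{I^*}f_{I^*}(X_i)$,
\[
\E[\psi_i^*\mid X_i,Y_i,\cF_N]=\lambda^{*\top}_{I^*}f_{I^*}(X_i)+R_i^*\,\frac{\hat\pi(X_i)}{\pi^*(X_i)}\neq Y_i .
\]
So the $\psi_i^*$ do not have the oracle law, and the LLN does not return $V$ without an additional population-level perturbation argument.

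\emph{Reading 2: $\psi_i^*$ uses fresh draws $\xi_i^*\sim\mathrm{Bern}(\pi^*(X_i))$.} The oracle step is then correct, but $\frac1n\sum_i(\psi_i-\psi_i^*)^2$ need not vanish unless you couple the indicators. For example, take $\xi_i=\1\{U_i\le\hat\pi(X_i)\}$ and $\xi_i^*=\1\{U_i\le\pi^*(X_i)\}$ with a common uniform $U_i$, so the mismatch has conditional probability $|\hat\pi(X_i)-\pi^*(X_i)|$. No such coupling appears in the proposal.

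Separately, you require $\sup_x|\hat\pi(x)-\pi^*(x)|\plim 0$. Condition (ii) only gives $\|\hat\pi-\pi^*\|_{L_2(P_X)}\plim 0$, so as written you prove the corollary under a stronger hypothesis. Uniformity is not actually needed. Conditionally on $\cF_N$, Markov's inequality reduces the empirical average to its conditional mean. That mean is controlled by the $L_2$ convergence, the fourth-moment bounds, and $|\hat\pi-\pi^*|\le 1$.

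The paper avoids both issues by never comparing to oracle summands at the sample level. Conditionally on $\cF_N$, the actual increments $\Delta_i$ are i.i.d.\ with variance $V_N$ and mean exactly $\theta^*$. The AIPW cancellation works here because the denominator $\hat\pi(X_i)$ matches the law of $\xi_i$. A conditional LLN, using the finite conditional fourth moment, then gives $\hat\sigma^2-V_N\plim 0$ after a dominated-convergence step removes the conditioning. The only nuisance comparison left is the population-level statement $V_N\plim V$ from Step~3 of the proof of Theorem~\ref{thm:clt}. There $\xi$ has been integrated out, and only three inputs are used: the $L_2(P_X)$ convergence of $\hat\pi$, the convergence of $\hat\lambda$, and $P_X(\hat I\neq I^*)\plim 0$.

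Your final Slutsky step, including the remark that $V\ge\Var(Y)>0$, is fine once $\hat\sigma^2\plim V$ is obtained this way.
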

This follows from Theorem~\ref{thm:clt} and Slutsky's theorem; see Appendix~\ref{app:proof_ci}.

\subsection{Optimality of the AM-PPI estimator}
\label{sec:optimality}

Beyond asymptotic validity, AM-PPI identifies a global minimum despite the absence of joint convexity in $(\pi, \lambda)$, and is the unique minimum-asymptotic-variance unbiased estimator within a natural AIPW class.

\paragraph{Global optimality of the optimization.} The joint objective $J(\pi, \lambda, I)$ in Eq.~\eqref{eq:opt} is not jointly convex in $(\pi, \lambda)$: decomposing $J = J_1 - J_2$ with $J_1 = \E[(Y - \lambda_I^\top f_I)^2/\pi_I]$ jointly convex in $(\pi,\lambda)$ on $\{\pi>0\}$ (perspective of $u^2/v$;~\citep[\S 3.2.6]{boyd2004convex}) and $J_2 = \E[(Y - \lambda_I^\top f_I)^2]$ concave in $\lambda$, the term $-J_2$ destroys joint convexity. Two structural properties nonetheless suffice to certify that the AM-PPI fixed point is a global minimizer. (\emph{A}) \emph{Biconvexity with closed-form partial minimizers}: for any fixed $\lambda$, $J(\cdot,\lambda;I)$ is convex in $\pi$ on the box-and-budget feasible set, with unique minimizer Eq.~\eqref{eq:pi_star}; for any fixed $\pi$, $J(\pi,\cdot;I)=\E[w_I(X)(Y-\lambda_I^\top f_I(X))^2]$ with $w_I=1/\hat\pi_I-1\ge 0$ is convex (strongly convex when $\E[w_I f_I f_I^\top]\succ 0$), with unique minimizer Eq.~\eqref{eq:lambda_star}. (\emph{B}) \emph{Strong duality of the constrained Lagrangian}: under Slater's condition $\E[c_I]+\pi_{\min}c_{\text{label}}<b$, the budget is a single scalar integral constraint, and the dual function obtained by pointwise minimization of $\Lag$ in $\pi$ followed by weighted least squares (WLS) minimization in $\lambda$ is concave in the multiplier $\mu$ as an infimum of affine functions. Strong duality then closes the gap~\citep[\S 5.2.3, \S 5.5.3]{boyd2004convex}, certifying that any KKT triple $(\pi^*,\lambda^*,\mu^*)$ satisfying Eqs.~\eqref{eq:pi_star},~\eqref{eq:lambda_star} and the budget identity Eq.~\eqref{eq:mu_condition} is a global minimum of the inner problem, with the discrete outer minimization over routings closed pointwise by Eq.~\eqref{eq:I_star}. Formal statements (Proposition~\ref{prop:compact}, Theorems~\ref{thm:biconvex}--\ref{thm:duality}) and proofs are in Appendix~\ref{app:opt_global}.

\paragraph{Minimum-variance unbiased estimator within the AIPW class.} Consider the natural class $\mathcal{C}$ (Definition~\ref{def:aipw_class}) of unbiased linear-prediction AIPW estimators that respect the AM-PPI sampling mechanism and budget constraint with the form
\[
\hat\theta(\lambda,\pi,I) \;=\; \tfrac{1}{n}\textstyle\sum_i \!\left[\lambda_{I_i}^\top f_{I_i}(X_i) + \bigl(Y_i - \lambda_{I_i}^\top f_{I_i}(X_i)\bigr)\,\xi_i\big/\pi_{I_i}(X_i)\right]
\]
with $\lambda_I\in\R^{|I|}$ and measurable $(\pi,I)$ obeying $\pi(x)\ge\pi_{\min}>0$ and $\E[c_I+\pi_I c_{\text{label}}]\le b$. The class subsumes ASI~\citep{zrnic2024active} (the $\lambda=1$ and $k=1$ case). AM-PPI is the special case $(\lambda^*,\hat\pi,I^*)$ from Eqs.~\eqref{eq:lambda_star},~\eqref{eq:pi_star}--\eqref{eq:pi_clipped},~\eqref{eq:I_star}.

\begin{theorem}[Minimum-variance unbiased estimator within $\mathcal{C}$]
\label{thm:umvu}
Under the conditions of Theorem~\ref{thm:clt}, the AM-PPI estimator uniquely achieves (up to $P_X$-null sets) the minimum asymptotic variance within the class $\mathcal{C}$ of Definition~\ref{def:aipw_class}, with $V$ as in Eq.~\eqref{eq:variance}.
\end{theorem}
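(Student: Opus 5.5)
The plan is to reduce the variance comparison over the class $\mathcal{C}$ to the constrained optimization problem~\eqref{eq:opt}. Then Theorem~\ref{thm:clt} and the global-optimality certificate of Appendix~\ref{app:opt_global} (Theorems~\ref{thm:biconvex}--\ref{thm:duality}) carry the rest of the argument.

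\emph{Step 1: variance of an arbitrary member.} Fix $\hat\theta(\lambda,\pi,I)\in\mathcal{C}$. Unbiasedness follows by conditioning on $X_i$: since $\E[\xi_i\mid X_i, Y_i]=\pi_{I_i}(X_i)$, each summand has conditional mean $\E[Y_i\mid X_i]$. For the variance, apply the law of total variance to a single summand $\psi_i$, conditioning on $(X_i,Y_i)$:
\[
\Var(\psi_i)=\Var(Y)+\E\!\left[(Y-\lambda_{I}^\top f_{I}(X))^2\Bigl(\tfrac{1}{\pi_{I}(X)}-1\Bigr)\right].
\]
This is the same computation as in Appendix~\ref{app:proof_clt}, now carried out at arbitrary $(\lambda,\pi,I)$. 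Repeating the argument of Theorem~\ref{thm:clt}, with moment condition (i) and $\pi\ge\pi_{\min}$, shows that every member of $\mathcal{C}$ has asymptotic variance $V(\lambda,\pi,I)=\Var(Y)+J(\pi,\lambda,I)$. Here $J$ is exactly the objective in~\eqref{eq:opt}. Minimizing asymptotic variance over $\mathcal{C}$ is therefore the same as minimizing $J$ over the feasible set defined by the budget constraint and $\pi\ge\pi_{\min}$.

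\emph{Step 2: AM-PPI attains the minimum.} For each fixed measurable routing $I$, invoke Theorem~\ref{thm:duality}. It states that the KKT triple $(\hat\pi,\lambda^*,\mu^*)$ defined by~\eqref{eq:pi_star}--\eqref{eq:mu_condition} globally minimizes the inner problem. The outer minimization over routings decouples pointwise in $x$ once $\mu^*$ is fixed, because the Lagrangian is an expectation of per-instance terms. Hence $I^*$ from~\eqref{eq:I_star} minimizes the integrand for $P_X$-a.e.\ $x$. Weak duality then gives $J(\pi,\lambda,I)\ge \inf\Lag(\cdot,\mu^*)=J(\hat\pi,\lambda^*,I^*)$ for every feasible triple. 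By Theorem~\ref{thm:clt}, the plug-in estimator has the oracle asymptotic variance $V$, so AM-PPI attains the infimum.

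\emph{Step 3: uniqueness up to null sets.} This is the main obstacle. I would start from strict convexity in $\pi$: the map $\pi\mapsto r/\pi+\mu c_{\text{label}}\pi$ is strictly convex wherever $r>0$. Any other minimizer must therefore satisfy $\pi=\hat\pi$ a.e.\ on $\{r_{I^*}>0\}$. On $\{r=0\}$ the estimator's summand does not depend on $\pi$, so the estimator is unchanged there. For $\lambda$, use strong convexity when $\E[w_I f_I f_I^\top]\succ0$, as stated in Theorem~\ref{thm:biconvex}. Any other minimizer then yields the same $\lambda_I^\top f_I$ a.e., and hence the same estimator. For $I$, uniqueness holds provided the argmin in~\eqref{eq:I_star} is a.e.\ unique. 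Where ties occur, the tied routings give identical values of the integrand, hence identical asymptotic variance. So I would state uniqueness at the level of the estimator's influence function, modulo $P_X$-null sets and ties. Making this precise, in particular handling the clipping boundary $\{\hat\pi=1\}$ where $w_I=0$ and the nondegeneracy of $\E[w_I f_If_I^\top]$, is where the care is needed. The argument also needs $\mu^*>0$, i.e.\ the budget binds, which I would justify via the Slater condition and the monotonicity of~\eqref{eq:mu_condition}.
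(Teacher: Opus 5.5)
Your proposal follows the paper's route exactly. It uses the same law-of-total-variance reduction to Eq.~\eqref{eq:opt}, the same appeal to Theorems~\ref{thm:biconvex}--\ref{thm:duality} plus pointwise routing via Eq.~\eqref{eq:I_star}, and the same uniqueness mechanism (strict convexity in $\pi$ on $\{r>0\}$, strong convexity of the WLS step). Your handling of routing ties is more careful than the paper's. The problem is that the step both arguments hinge on does not hold.

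In Step~2 you need $\inf_{(\pi,\lambda,I)}\Lag(\cdot;\mu^*)$ to be attained at $(\hat\pi,\lambda^*,I^*)$, but you only have blockwise facts.
\begin{itemize}
\item The Lagrangian splits into per-instance terms in $(\pi,I)$ only once $\lambda$ is frozen, because each $\lambda_I$ is one vector shared by every $x$ routed to $I$. So minimizing $\ell_I(x)$ at the AM-PPI weights says nothing about a different routing paired with its own optimal weights.
\item Theorem~\ref{thm:duality}(b), which you cite, passes from ``optimal in $\pi$ given $\lambda$, optimal in $\lambda$ given $\pi$'' to ``joint minimizer of $\Lag$.'' Biconvexity does not license that step.
\end{itemize}
Eliminating $\pi$ pointwise leaves the profile $\E[\phi(r_I(X;\lambda_I))]$, with $\phi(r)=2\sqrt{\mu c_{\text{label}}r/n}-r/n$ below the clipping threshold and constant above it. This is a concave nondecreasing function of a convex quadratic in $\lambda$, so it is not convex in general. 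Eq.~\eqref{eq:lambda_star}, whose weights are exactly $n\phi'(r_I)$, is only its stationarity condition. Duality in the scalar $\mu$ cannot repair this, since the nonconvexity sits in $\lambda$, which the budget does not involve.

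A two-point example shows that both the attainment and the uniqueness claims fail. Take $k=1$, $f\equiv 1$, and $X$ uniform on two points with $\E[Y\mid X]=\mp d$ and $\Var(Y\mid X)=\sigma^2$. Let the label budget be $\beta:=(b-c_1)/c_{\text{label}}\le 1/2$.
\begin{itemize}
\item By symmetry, $\lambda=0$ with $\hat\pi\equiv\beta$ solves Eqs.~\eqref{eq:pi_star}--\eqref{eq:mu_condition}: equal weights and $\E[Y]=0$ make the WLS equation hold. Its residual term is $(d^2+\sigma^2)(1/\beta-1)$.
\item Yet $\lambda=d$ with $\pi\propto\sqrt{r}$ gives $(\E\sqrt{r})^2/\beta-\E[r]$, which tends to $d^2/\beta-2d^2$ as $\sigma,\pi_{\min}\to 0$. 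This is strictly smaller.
\item In fact $\lambda=0$ is a local maximum of $\lambda\mapsto\min_\pi J$ whenever $\sigma^2(1-\beta)<\beta d^2$.
\item Reflecting $\lambda\mapsto-\lambda$ while swapping the two points preserves the problem. So the true minimizer over $\mathcal{C}$ comes as a mirror pair of distinct estimators.
\end{itemize}
Hence no refinement of your Step~3 (ties, the clipping boundary, $\E[w_If_If_I^\top]\succ 0$, $\mu^*>0$) can give uniqueness. Your strict-convexity arguments only exclude competitors that share your $\lambda$, or respectively your $\pi$.

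At best the blockwise conditions are necessary for a minimizer; they are not sufficient. To prove a statement of this form you need an extra hypothesis, such as uniqueness of solutions to the fixed-point system or convexity of the $\lambda$-profile. Alternatively, define AM-PPI as the best of its fixed points and drop the uniqueness claim.
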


The argument reduces in two steps to (i)~$\lambda=\lambda^*$ from the WLS condition Eq.~\eqref{eq:lambda_star} being the unique minimizer of the residual variance term in $\lambda$, and (ii)~$(\pi^*,I^*)$ solving the resulting budget-constrained variance problem via Theorems~\ref{thm:biconvex}--\ref{thm:duality}; see Appendix~\ref{app:umvu}.

\begin{theorem}[Local semiparametric efficiency]
\label{thm:semieff}
Suppose in addition that the linear span $\{x\mapsto\lambda^\top f_I(x):\lambda\in\R^{|I|}\}$ contains the true regression function $\mu(x)=\E[Y\mid X=x]$ for some $I\in\mathcal{I}$. Then the AM-PPI estimator saturates the semiparametric efficiency bound $V^*_{\mathrm{SP}}=\Var(\mu(X))+\E[\Var(Y\mid X)/\pi^*(X)]$ for $\theta^*=\E[Y]$ in the active-sampling model with general (not necessarily linear) prediction model and arbitrary inverse-propensity weighting; equivalently, $V$ in Theorem~\ref{thm:umvu} equals $V^*_{\mathrm{SP}}$.
\end{theorem}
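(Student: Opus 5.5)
The argument has two parts: first show that $V$ equals $V^*_{\mathrm{SP}}$ algebraically, then show that $V^*_{\mathrm{SP}}$ really is the efficiency bound in the active-sampling model.

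\textbf{Step 1: identify the AM-PPI variance.} By the linear-span assumption there is a subset $I_0$ and weights $\lambda_0$ with $\lambda_0^\top f_{I_0}(x)=\mu(x)$ almost surely. I would apply Theorem~\ref{thm:umvu} with the routing fixed to $I_0$. For any fixed $\pi$, the weighted least-squares objective $\E[w_{I_0}(X)(Y-\lambda^\top f_{I_0}(X))^2]$ splits as
\[
\E\bigl[w_{I_0}\Var(Y\mid X)\bigr]+\E\bigl[w_{I_0}(\mu(X)-\lambda^\top f_{I_0}(X))^2\bigr].
\]
The second term vanishes at $\lambda_0$, so $\lambda_0$ solves Eq.~\eqref{eq:lambda_star}, and it is the unique solution when $\E[w f f^\top]\succ0$. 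In that case $r_{I_0}(x)=\Var(Y\mid X=x)$.

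Substituting into Eq.~\eqref{eq:variance}, and using $\Var(Y)=\Var(\mu(X))+\E[\Var(Y\mid X)]$, gives
\[
V=\Var(\mu(X))+\E\!\left[\frac{\Var(Y\mid X)}{\pi^*(X)}\right].
\]
The remaining point is that the oracle routing $I^*$ cannot do worse than $I_0$ pointwise. I would argue that for every $I$, $r_I(x)\ge \Var(Y\mid X=x)=r_{I_0}(x)$, because the conditional mean minimizes conditional squared error. Since Eq.~\eqref{eq:I_star} is non-decreasing in $r_I$, $I^*$ either achieves the same residual or pays a cost tradeoff. For the efficiency claim I would state the comparison at the $\pi^*$ induced by the oracle routing. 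Then $r_{I^*}=\Var(Y\mid X)$ on the routed set whenever the spanning subset is selected, and in general $V\ge V^*_{\mathrm{SP}}(\pi^*)$, with equality under the hypothesis as read (realizability by the chosen subset).

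\textbf{Step 2: derive the bound.} Model the observed data as $O=(X,\xi,\xi Y)$, with $\xi\mid X\sim\mathrm{Bern}(\pi^*(X))$ known by design, and assume MAR, which holds since $\xi$ depends only on $X$. The standard Robins--Rotnitzky/Hahn computation gives the following. The tangent space factors into scores for $P_X$ and for $P_{Y\mid X}$; the propensity is known and so contributes nothing. Projecting the full-data influence function $Y-\theta^*$ onto the observed-data tangent space yields the efficient influence function
\[
\varphi(O)=\mu(X)-\theta^*+\frac{\xi}{\pi^*(X)}\bigl(Y-\mu(X)\bigr).
\]
Its variance is exactly $V^*_{\mathrm{SP}}$, since the two terms are uncorrelated and $\E[\xi/\pi^{*2}\mid X]=1/\pi^*$.

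\textbf{Step 3: conclude.} From the proof of Theorem~\ref{thm:clt}, the AM-PPI estimator is asymptotically linear with influence function $\lambda^{*\top} f_{I^*}(X)-\theta^*+\frac{\xi}{\pi^*}(Y-\lambda^{*\top}f_{I^*}(X))$. Under realizability this equals $\varphi$. An estimator that is regular and asymptotically linear with influence function equal to the efficient influence function attains the bound. Regularity follows from its linearity, together with the plug-in of $\hat\lambda$, $\hat\pi$, $\hat I$ contributing $o_p(n^{-1/2})$. That negligibility is automatic here: the estimator is unbiased for any fixed $\lambda$ because $\E[\xi/\pi\mid X]=1$, so the derivative with respect to $\lambda$ vanishes, and $\pi$ is a known design quantity.

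\textbf{Main obstacle.} I expect the hard part to be the interplay between the routing $I^*$ and the spanning subset $I_0$. The hypothesis guarantees realizability only for some $I\in\mathcal{I}$, while $I^*$ may route some $x$ elsewhere for cost reasons. In that case the residual exceeds $\Var(Y\mid X)$ on those $x$, and equality can fail. I would first try to show that on the set where $I^*(x)\neq I_0$, the difference $r_{I^*}-\Var(Y\mid X)$ vanishes, using the fact that $I^*$ already minimizes the Lagrangian. If that fails, I would interpret the statement locally, at a design where the realizing subset is routed, which is consistent with the word ``local'' in the theorem's title. I would make this interpretation explicit.
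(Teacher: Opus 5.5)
Your proposal is correct under the reading the paper itself uses, and it follows essentially the paper's route: the spanning weights solve the WLS condition by the tower property, so the AM-PPI influence function coincides with the efficient influence function $\mu(X)+(Y-\mu(X))\xi/\pi^*(X)-\theta^*$, and the law of total variance turns $V$ into $V^*_{\mathrm{SP}}$; your Steps~2--3 only spell out the EIF derivation and the regularity argument that the paper cites or leaves to Theorem~\ref{thm:clt}. Your main obstacle is real, and the paper's proof simply adopts your fallback by silently writing $\lambda^{*\top}_{I^*}f_{I^*}=\mu$, i.e.\ assuming the routed subset spans $\mu$. Your first idea, forcing $r_{I^*}=\Var(Y\mid X)$ off $I_0$ via Lagrangian minimality, cannot work: the cost penalty in Eq.~\eqref{eq:I_star} can make a cheaper non-spanning subset strictly preferred, which gives $V>V^*_{\mathrm{SP}}$, so state the realizability-by-the-routed-subset assumption explicitly.
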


The proof, given in Appendix~\ref{app:umvu}, identifies the AM-PPI influence function with the efficient influence function of \citet{robins1995semiparametric} when the linear span contains $\mu$. Without this spanning condition, AM-PPI is best-in-class within $\mathcal{C}$ but not globally efficient relative to the broader semiparametric model: efficiency in this setting is fundamentally limited by the predictive quality of the available models $\{f_I\}_{I\in\mathcal{I}}$.

% ============================================================
\section{Experiments}
\label{sec:experiments}
% ============================================================

We evaluate AM-PPI on one synthetic and three healthcare monitoring
tasks, comparing it against two ASI baselines: ASI (expensive), which
uses a single high-accuracy (sometimes higher-cost) predictor, and ASI (cheap),
which uses a single lower-accuracy predictor. We show AM-PPI's advantage when the costs of the two predictors are equal and when one is greater than the other. Both ASI baselines are run through the AM-PPI calibration loop with $|\mathcal{I}| = 1$ (a single fixed predictor), so they inherit the WLS combination weight $\lambda$ from Eq.~\eqref{eq:lambda_star} rather than the unit weight $\lambda = 1$ of the original ASI~\citep{zrnic2024active}; this strengthening tightens both baselines and isolates the per-instance routing decision as the sole remaining source of any AM-PPI advantage.

All methods target the population mean $\theta^* = \E[Y]$ and report 90\% CIs ($\alpha = 0.1$). Each experiment sweeps over a range of total budgets $B$ and averages results over multiple independent trials. The total budget $B$ covers only the deployment-time predictor queries and label collection. Figure~\ref{fig:experiments} shows average CI width (left panel), empirical coverage (middle panel), and CI-width ratio against baseline (right panel) as a function of the total budget. Each method's curve begins at the smallest budget where it can fund its predictor queries plus a minimum gold-label sample, so methods with higher predictor cost start further right on the x-axis (Appendix~\ref{app:exp_details}). When predictor costs are different, the crossover regime targeted in Figure~\ref{fig:experiments} is precisely the budget band where label scarcity binds and is the operationally relevant regime. Crucially, the crossover point is not known ahead of time, but AM-PPI uses the correct strategy and reduces to ASI(cheap), ASI(expensive), or mixed routing as the budget dictates, so the worst case is parity with the better single-predictor without prior knowledge of which one that is. Full per-experiment setup details are listed in Appendix~\ref{app:exp_details}.

\subsection{Synthetic regression}

We generate $n = 20{,}000$ instances with $d = 5$ features drawn from a standard normal distribution. Approximately 70\% of instances are ``easy'' (linear relationship, low noise: $Y = 2X_1 + \varepsilon$, $\varepsilon \sim \mathcal{N}(0, 0.1)$) and 30\% are ``hard'' (nonlinear, high noise: $Y = \sin(3X_1 X_2) + \varepsilon$, $\varepsilon \sim \mathcal{N}(0, 2)$). The cheap predictor is a decision tree of depth 2; the expensive predictor is a decision tree of depth 6. The predictor costs are different and chosen so that the two ASI baselines cross within the low budget range, delineating three operating regions ($\mathrm{R}_1$, $\mathrm{R}_2$, $\mathrm{R}_3$) shown in Figure~\ref{fig:experiments} (first row). 

In the central region~$\mathrm{R}_2$ of Figure~\ref{fig:experiments} (first row), AM-PPI achieves narrower CIs than both ASI baselines by adopting a mixed routing strategy. In the peripheral regions, AM-PPI dynamically reduces to whichever single predictor the budget favors (e.g., ASI (cheap) in $\mathrm{R}_1$ and ASI (expensive) in $\mathrm{R}_3$), so practitioners can safely default to AM-PPI regardless of budget regime. All three methods maintain valid coverage at or above the 90\% nominal level across all budget levels. ASI (expensive) is not able to operate deep within $\mathrm{R}_1$ due to budget constraints. Appendix~\ref{app:ablation} provides a synthetic mechanism analysis that decomposes this three-region behavior into the two ingredients of cost gap and accuracy gap, and verifies that random routing collapses AM-PPI onto the ASI baselines. 

\subsection{Healthcare monitoring tasks}

We evaluate AM-PPI on three healthcare datasets that represent practical monitoring scenarios: verifying AI-generated clinical summaries, screening for thyroid disease, and fact-checking AI-generated Brief Hospital Course narratives. In each case, the estimand is a clinically relevant prevalence or agreement rate, and the cheap and expensive predictors differ in the features or models they use. All models of $r(x)$ are classical machine learning models trained on derivative properties that are easy to compute. Results are shown in Figure~\ref{fig:experiments} (rows two through four).

\paragraph{MIMIC-III: EHR-discharge consistency monitoring.}
We consider a practical post-deployment monitoring scenario: an
AI-powered tool generates discharge summaries from structured EHR data,
and we wish to estimate the rate at which these summaries are
consistent with the underlying structured data~\citep{chung2025verifact}. Using the MIMIC-III v1.4 database~\citep{johnson2016mimic} (PhysioNet Credentialed Health Data License 1.5.0), we extracted and processed laboratory measurements recorded on the date of admission (``admission labs'') from the structured EHR to generate corresponding ``Labs on Admission'' summary sections via few-shot prompting of GPT-OSS-120B~\citep{openai2025gptoss120bgptoss20bmodel} (Apache 2.0). To construct
ground-truth consistency labels, we pair each summary with
both the original admission labs record (consistent) and a perturbed copy
(inconsistent), where perturbations are applied to a random subset of
lab entries within a record through value mutations (10--20\% deviation), label swaps,
and abnormal-flag inversions, following the error taxonomy of
EHRCon~\citep{kwon2024ehrcon}. We generate two difficulty levels by tuning the proportion of lab entries perturbed within each admission record: 15\% (hard to detect) and 50\% (easy to detect). The consistency classifiers serving as cheap and expensive predictors are
Nemotron-3-Nano-30B and Nemotron-3-Super-120B~\citep{nvidia2025nvidianemotron3efficient} (NVIDIA Open Model License) respectively, both
queried with few-shot prompting at temperature 0.1 (see Appendix~\ref{app:mimic_prompts} for the full prompt). Uncertainty models
use 11 easy-to-derive features (note length, number of lab measurements,
etc.) rather than the full clinical text. AM-PPI achieves a $\sim10\%$ reduction in CI width (Figure~\ref{fig:experiments}, second row) at the ASI crossing point budget.

\paragraph{Hypothyroid detection.} Using the hypothyroid dataset from OpenML (dataset 57, Garavan Institute; public domain), we estimate the prevalence of hypothyroidism. We explore a different situation where the predictor costs are equal. The ``cheap'' predictor is a gradient boosting model with 50 estimators trained on the Free Thyroxine Index (FTI) alone; the ``expensive'' predictor uses all 25 features but only has 10 estimators. This mirrors a screening scenario where the predictor costs are equal but the strengths are different. AM-PPI leverages both predictors simultaneously across the budget range, producing tighter CIs than either single-predictor baseline (Figure~\ref{fig:experiments}, third row), achieving up to $\sim40\%$ reduction in CI widths at the lowest viable budget and $\sim20\%$ at the largest budget swept.

\paragraph{VeriFact-BHC: proposition consistency monitoring.}
We consider a second LLM monitoring scenario at finer granularity: an AI BHC writer generates Brief Hospital Course narratives, the narratives are decomposed into atomic propositions, and we wish to estimate the rate at which those propositions are clinically supported by the patient's underlying EHR. We use the public VeriFact-BHC dataset~\citep{chung2025verifact}, which provides $13{,}070$ propositions extracted from human- and LLM-written BHCs together with adjudicated clinician verdicts. From the dataset, we retain only seventeen lightweight per-proposition metadata features (proposition and parent-chunk token counts, character counts, digit counts, claim-vs-sentence flag, BHC author flag, position within the BHC, length of stay) and the binary clinician label; no clinical text is kept downstream. Authorship of the example (model vs. human) correlates with difficulty. Both the cheap predictor (a depth-1 gradient boosting classifier on four structural features only, deliberately excluding the author flag) and the expensive predictor (a depth-3 gradient boosting classifier on all seventeen features) are trained on the metadata. Again, we set the two predictor query costs equal in this experiment, but the uncertainty models are less accurate compared to the previous example. AM-PPI achieves a $\sim15\%$ reduction in CI width at the lowest budgets and converges to the ASI baselines at high budgets (Figure~\ref{fig:experiments}, fourth row), demonstrating the impact of the uncertainty model accuracy on routing.

\begin{figure}[h!]
    \centering
    \includegraphics[width=\textwidth]{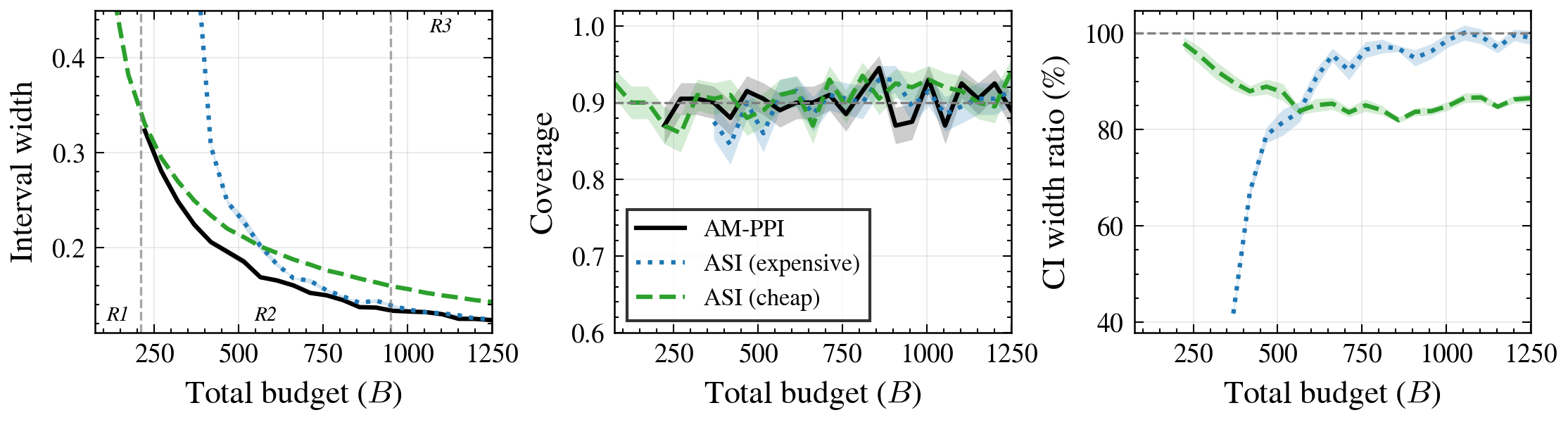}\\[4pt]
    \includegraphics[width=\textwidth]{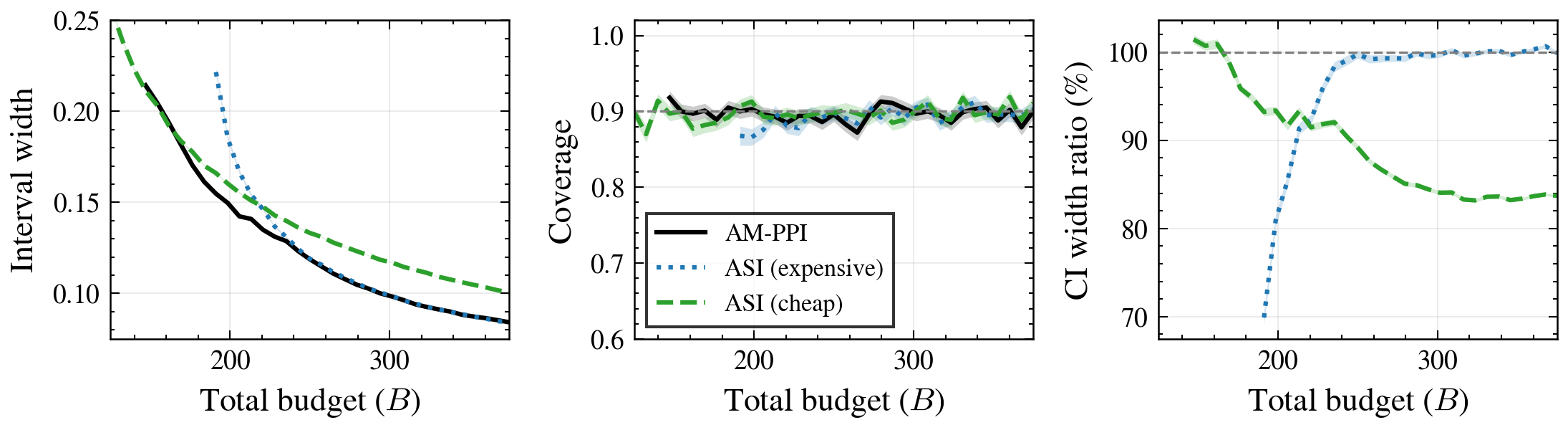}\\[4pt]
    \includegraphics[width=\textwidth]{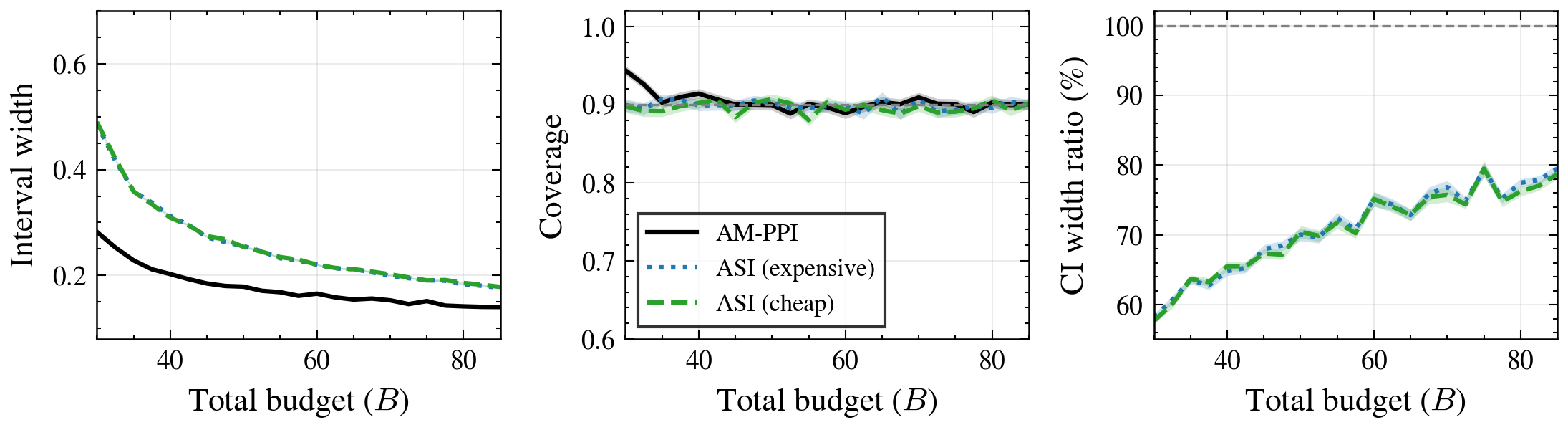}\\[4pt]
    \includegraphics[width=\textwidth]{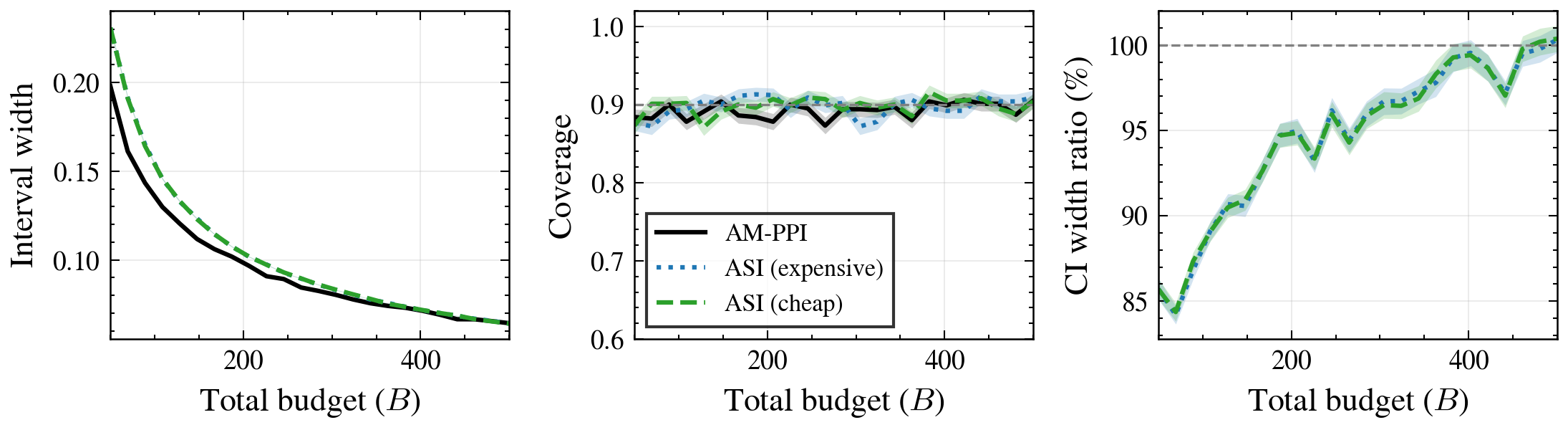}
    \caption{Experimental results. \textbf{Row 1:} Synthetic regression (costs different). \textbf{Row 2:} MIMIC EHR-discharge consistency (costs different). \textbf{Row 3:} Hypothyroid detection (costs equal). \textbf{Row 4:} VeriFact-BHC proposition consistency (costs equal). Left panels show CI width as a function of total budget; middle panels show empirical coverage with 90\% target (dashed); right panels show the AM-PPI / ASI CI-width ratio (\%), where values below 100\% indicate AM-PPI produces narrower intervals than the baseline. Each method's curve begins at the smallest budget where it can fund its predictor queries plus a minimum gold-label sample (Appendix~\ref{app:exp_details}). Shaded bands show $\pm 1$ standard error of the mean (SEM) across independent trials.}
    \label{fig:experiments}
\end{figure}

% ============================================================
\section{Discussion}
\label{sec:discussion}
% ============================================================

\paragraph{When AM-PPI helps most.} The analysis in (Appendix~\ref{app:ablation}) and experiments both point to the same conditions: AM-PPI provides the largest gains when prediction difficulty is heterogeneous across instances and predictors, so that AM-PPI can utilize per-instance routing to reduce the variance. The advantage is demonstrated when budgets are tight. The analysis covered the tradeoff between model capacity and cost and routing across predictors with differing strengths. When predictor costs are different, AM-PPI can adopt a distinct three regime behavior with a routing strategy influenced by cost and predicted residual. When costs are the same, AM-PPI exhibits a simpler behavior based on minimizing the per-instance residual. 

\paragraph{Limitations.} \emph{Calibration.} AM-PPI requires a burn-in labeled dataset to calibrate uncertainty surrogates. Similar to ASI, poor model calibration can degrade both the labeling policy and the subset router.  \emph{Drift.} In a real deployment, covariate distributions, the outcome distribution, the deployed predictors themselves, and even user behavior can shift over time, which would invalidate the cached uncertainty surrogates $\hat{u}_I$ and the routing policy $\hat{I}$ that AM-PPI relies on. Extending AM-PPI to detect and adapt to data, model, and user drift is an important direction for future work~\citep{kilian2025anytime}. Approaches could include periodically recalibrating $\hat{u}_I$ on fresh labels or treating sustained shifts in $\hat\theta$ and the estimated routing distribution as drift signals.

\paragraph{Broader impacts.} AM-PPI lowers the cost of evaluating AI systems deployed in healthcare, but poorly calibrated uncertainty surrogates can cause systematic over- or under-sampling of patient subgroups, biasing aggregate estimates and masking subgroup-specific degradation. We recommend validating $\hat{u}_I$ per subgroup and reporting empirical coverage on burn-in data before trusting AM-PPI intervals for safety-critical decisions.

\paragraph{Conclusion.} AM-PPI jointly optimizes predictor subset selection and active label sampling under a unified budget constraint, producing tighter CIs than single-predictor methods. In addition to asymptotic validity, we prove AM-PPI identifies a global minimum and is the unique minimum-asymptotoic-variance unbiased estimator within the AIPW class. The advantage  (Appendix~\ref{app:advantage}) and ablation (Appendix~\ref{app:ablation}) analyses give practitioners simple criteria for when multiple predictors help and explain the origin of the behaviors of AM-PPI. AM-PPI is able to adopt mixed routing strategies across budget regimes even where expensive single predictors are too costly to operate and when predictors specialize on different instances. AM-PPI pairs naturally with shift-detection pipelines~\citep{dolin2025postdeployment} for the quantification stage. For healthcare AI monitoring, where clinician labels are scarce and statistical validity is required, AM-PPI offers a principled path toward label-efficient post-deployment monitoring.

% ============================================================

\bibliographystyle{plainnat}
\bibliography{references}

% ============================================================
\appendix
% ============================================================

\section{Proof of Theorem~\ref{thm:clt} (Asymptotic Normality)}
\label{app:proof_clt}

\subsection*{Proof Overview}

The proof exploits the fact that all nuisance parameters $(\hat\pi, \hat\lambda, \hat I)$ are estimated from the burn-in sample and are therefore \emph{fixed constants} conditional on the burn-in data. Here we make the following assumptions:
\begin{enumerate}
    \item[(i)] \textbf{Moment bounds:} $\E[Y^4] < \infty$ and $\E[\|f_I(X)\|^4] < \infty$ for all $I \in \mathcal{I}$.
    \item[(ii)] \textbf{Nuisance convergence:} As the burn-in size $N \to \infty$,
    \begin{align*}
        \|\hat\pi - \pi^*\|_{L_2(P_X)} &\plim 0, \\
        \|\hat\lambda_I - \lambda^*_I\|_2 &\plim 0 \quad \text{for all } I \in \mathcal{I}, \\
        P_X(\hat I(X) \neq I^*(X)) &\plim 0.
    \end{align*}
    \item[(iii)] \textbf{Sampling lower bound:} $\pi^*(x) \geq \pi_{\min} > 0$ and $\hat\pi(x) \geq \pi_{\min} > 0$ for all $x$, where the latter holds either by assumption or by algorithmic clipping.
\end{enumerate}

This permits a direct application of the classical central limit theorem (CLT) to the increments:
\begin{enumerate}
    \item Compute the conditional mean over the given burn-in data and show it equals $\theta^*$ for any nuisance value with $\hat{\pi}>0$.
    \item Apply the Lindeberg--L\'evy CLT conditionally on the burn-in.
    \item Show the conditional variance converges in probability to $V$.
    \item Show unconditional convergence.
\end{enumerate}

\subsection*{Step 1: Conditional Mean}

Let $\cF_N$ denote the $\sigma$-algebra generated by the burn-in data. Conditional on $\cF_N$, the nuisance parameters $(\hat\pi, \hat\lambda_I, \hat I)$ are deterministic, and the deployment data $(X_i, Y_i)_{i=1}^n$ are i.i.d.\ with $\xi_i \sim \mathrm{Bern}(\hat\pi(X_i))$ drawn independently. The increments
\[
\Delta_i = \hat\lambda_{\hat I}^\top f_{\hat I}(X_i) + \frac{\bigl(Y_i - \hat\lambda_{\hat I}^\top f_{\hat I}(X_i)\bigr)\,\xi_i}{\hat\pi(X_i)}
\]
are therefore conditionally i.i.d.\ given $\cF_N$.

We compute the conditional mean of the summands given the burn-in and show it equals $\theta^*$ for any nuisance values.

We first apply linearity of expectation:
\begin{equation*}
\mathbb{E}[\Delta_i \mid X_i, \mathcal{F}_N] = \mathbb{E}\big[\hat\lambda_{\hat I}^\top f_{\hat I}(X_i) \mid X_i, \mathcal{F}_N\big] + \mathbb{E}\bigg[\frac{(Y_i - \hat\lambda_{\hat I}^\top f_{\hat I}(X_i))\,\xi_i}{\hat\pi(X_i)} \,\bigg|\, X_i, \mathcal{F}_N\bigg].
\end{equation*}

Conditional on $\mathcal{F}_N$, the functions $\hat\pi$, $\hat\lambda$, and $\hat I$ are deterministic; evaluating at the fixed point $X_i$ makes $\hat\pi(X_i)$, $\hat\lambda_{\hat I}$, and $f_{\hat I}(X_i)$ deterministic constants. The first term is therefore equal to $\hat\lambda_{\hat I}^\top f_{\hat I}(X_i)$, and $\hat\pi(X_i)$ pulls out of the second term:
\begin{equation*}
\mathbb{E}[\Delta_i \mid X_i, \mathcal{F}_N] = \hat\lambda_{\hat I}^\top f_{\hat I}(X_i) + \frac{1}{\hat\pi(X_i)}\,\mathbb{E}\big[(Y_i - \hat\lambda_{\hat I}^\top f_{\hat I}(X_i))\,\xi_i \mid X_i, \mathcal{F}_N\big].
\end{equation*}

Since $\xi_i$ is independent of $Y_i$ conditional on $X_i$, the expectation factors:
\begin{equation*}
\mathbb{E}\big[(Y_i - \hat\lambda_{\hat I}^\top f_{\hat I}(X_i))\,\xi_i \mid X_i, \mathcal{F}_N\big] = \mathbb{E}\big[Y_i - \hat\lambda_{\hat I}^\top f_{\hat I}(X_i) \mid X_i\big] \cdot \mathbb{E}[\xi_i \mid X_i, \mathcal{F}_N].
\end{equation*}

Using $\mathbb{E}[\xi_i \mid X_i, \mathcal{F}_N] = \hat\pi(X_i)$ and substituting back:
\begin{equation*}
\mathbb{E}[\Delta_i \mid X_i, \mathcal{F}_N] = \hat\lambda_{\hat I}^\top f_{\hat I}(X_i) + \mathbb{E}\big[Y_i - \hat\lambda_{\hat I}^\top f_{\hat I}(X_i) \mid X_i\big] \cdot \frac{\hat\pi(X_i)}{\hat\pi(X_i)}.
\end{equation*}

The $\hat\pi(X_i)$ factors cancel; this is the cancellation that makes this form structurally unbiased for any nuisance values with $\hat\pi(x) > 0$. The remaining prediction terms cancel as well:
\begin{equation}
\mathbb{E}[\Delta_i \mid X_i, \mathcal{F}_N] = \hat\lambda_{\hat I}^\top f_{\hat I}(X_i) + \mathbb{E}[Y_i \mid X_i] - \hat\lambda_{\hat I}^\top f_{\hat I}(X_i) = \mathbb{E}[Y_i \mid X_i]. \label{eq:conditional-mean}
\end{equation}

By the tower property:
\begin{equation*}
\mathbb{E}[\Delta_i \mid \mathcal{F}_N] =
\mathbb{E}[\mathbb{E}[\Delta_i \mid X_i,  \mathcal{F}_N]
 \mid \mathcal{F}_N]=
 \mathbb{E}[\mathbb{E}[Y_i \mid X_i ]
 \mid \mathcal{F}_N] = \mathbb{E}[Y_i] = \theta^*.
\end{equation*}

\subsection*{Step 2: Conditional CLT}

Since $\{\Delta_i\}_{i=1}^n$ are conditionally i.i.d.\ given $\mathcal{F}_N$ with mean $\theta^*$, we compute the conditional variance and invoke the CLT.

Define $\hat M_i = \hat\lambda_{\hat I}^\top f_{\hat I}(X_i)$ and $\hat R_i = Y_i - \hat M_i$. Then rewrite the summand as
\[
\Delta_i = \hat M_i + \hat R_i \cdot \frac{\xi_i}{\hat\pi(X_i)}.
\]
Conditional on $(X_i, Y_i, \mathcal{F}_N)$, the quantities $\hat\pi(X_i)$, $\hat I(X_i)$, $\hat M_i$, and $\hat R_i$ are deterministic, and the only remaining randomness is $\xi_i \sim \mathrm{Bern}(\hat\pi(X_i))$. Using $\mathbb{E}[\xi_i \mid X_i, Y_i, \mathcal{F}_N] = \hat\pi(X_i)$ (the labeling decision depends only on $X_i$) and $\mathrm{Var}(\xi_i \mid X_i, Y_i, \mathcal{F}_N) = \hat\pi(X_i)(1 - \hat\pi(X_i))$ (considering that $\text{Var}(aZ) = a^2\,\text{Var}(Z)$)  :
\begin{align*}
\mathbb{E}[\Delta_i \mid X_i, Y_i, \mathcal{F}_N] &= \hat M_i + \hat R_i \cdot \frac{\hat\pi(X_i)}{\hat\pi(X_i)} = Y_i, \\
\mathrm{Var}(\Delta_i \mid X_i, Y_i, \mathcal{F}_N) &= \left(\frac{\hat R_i}{\hat\pi(X_i)}\right)^2 \cdot \hat\pi(X_i)(1 - \hat\pi(X_i)) = \hat R_i^2 \cdot \frac{1 - \hat\pi(X_i)}{\hat\pi(X_i)}.
\end{align*}
Applying the law of total variance and using the independence of $(X_i, Y_i)$ from $\mathcal{F}_N$ to simplify $\mathrm{Var}(Y_i \mid \mathcal{F}_N) = \mathrm{Var}(Y)$:
\begin{equation}\label{eq:VN}
V_N := \mathrm{Var}(\Delta_i \mid \mathcal{F}_N) = \mathrm{Var}(Y) + \mathbb{E}\!\left[\frac{\hat R_i^2\,(1 - \hat\pi(X_i))}{\hat\pi(X_i)} \;\bigg|\; \mathcal{F}_N\right].
\end{equation}
The first term is the irreducible variance of the outcome. The second term is the additional variance introduced by the labeling lottery: instances with large residuals $\hat R_i^2$ and low sampling probabilities $\hat\pi(X_i)$ contribute the most variance, which is precisely the trade-off that the optimal sampling rule $\pi^*(x) \propto \sqrt{r_I(x)}$ is designed to minimize. Under assumption~(i), $\mathbb{E}[Y^2] < \infty$ and $\mathbb{E}[\|f_I(X)\|^2] < \infty$; combined with the boundedness of $\hat\lambda$ (which holds on a probability-one event under assumption~(ii)), this gives $\mathbb{E}[\hat M_i^2 \mid \mathcal{F}_N] < \infty$ and hence $\mathbb{E}[\hat R_i^2 \mid \mathcal{F}_N] < \infty$. Under assumption~(iii), $1/\hat\pi(X_i) \leq 1/\pi_{\min}$. Therefore $V_N < \infty$.

Since $\{\Delta_i\}$ are conditionally i.i.d.\ given $\mathcal{F}_N$ with finite conditional variance $V_N$, the Lindeberg--L\'evy CLT applied conditionally on $\mathcal{F}_N$ gives:
\begin{equation}\label{eq:cond_clt}
\sqrt{n}\bigl(\hat\theta - \theta^*\bigr) \mid \mathcal{F}_N \;\xrightarrow{d}\; \mathcal{N}(0, V_N).
\end{equation}

\subsection*{Step 3: Convergence of Conditional Variance}

We show $V_N \plim V$. From Step~2:
\[
V_N = \Var(Y) + \E\!\left[\frac{\hat R_i^2\,(1 - \hat\pi(X_i))}{\hat\pi(X_i)} \;\middle|\; \cF_N\right].
\]
At oracle values, the asymptotic variance defined in the statement of Theorem~\ref{thm:clt} (with $R_i^* = Y_i - \lambda_{I^*}^{*\top} f_{I^*}(X_i)$) is exactly
\[
V = \Var(Y) + \E\!\left[R_i^{*2}\!\left(\frac{1}{\pi^*(X_i)} - 1\right)\right] = \Var(Y) + \E\!\left[\frac{R_i^{*2}\,(1 - \pi^*(X_i))}{\pi^*(X_i)}\right],
\]
which already matches the form of $V_N$ derived in Step~2: both have a leading $\Var(Y)$ term plus a residual-variance term involving $1/\pi - 1$. We can therefore proceed directly to comparing the residual terms without any further rewriting of $V$.

With both quantities in the same form, the difference is:
\begin{equation}\label{eq:VN_diff}
V_N - V = \E\!\left[\frac{\hat R_i^2\,(1 - \hat\pi(X_i))}{\hat\pi(X_i)} \;\middle|\; \cF_N\right] - \E\!\left[\frac{R_i^{*2}\,(1 - \pi^*(X_i))}{\pi^*(X_i)}\right].
\end{equation}
Define $w(x, \pi) = (1 - \pi(x))/\pi(x)$ for shorthand. We decompose the difference into a sampling-weight term and a residual term:
\begin{equation}\label{eq:AB_decomp}
V_N - V = \underbrace{\E\!\bigl[\hat R_i^2\,(w(X_i, \hat\pi) - w(X_i, \pi^*))\bigr]}_{(A)} + \underbrace{\E\!\bigl[(\hat R_i^2 - R_i^{*2})\,w(X_i, \pi^*)\bigr]}_{(B)},
\end{equation}
where all expectations are over $(X_i, Y_i) \sim P_{X,Y}$ with nuisance parameters held fixed (i.e., conditional on $\cF_N$).

\medskip
\noindent\textit{Bounding (A):}
Since $w(x, \pi) = (1-\pi)/\pi$, we have
\[
|w(X_i, \hat\pi) - w(X_i, \pi^*)| = \left|\frac{1}{\hat\pi(X_i)} - \frac{1}{\pi^*(X_i)}\right| \cdot |1| = \frac{|\pi^*(X_i) - \hat\pi(X_i)|}{\hat\pi(X_i)\,\pi^*(X_i)} \leq \frac{|\hat\pi(X_i) - \pi^*(X_i)|}{\pi_{\min}^2}.
\]
Therefore:
\begin{align*}
|A| &\leq \frac{1}{\pi_{\min}^2}\,\E\!\bigl[\hat R_i^2 \cdot |\hat\pi(X_i) - \pi^*(X_i)| \,\bigm|\, \cF_N\bigr] \\
&\leq \frac{1}{\pi_{\min}^2}\,\bigl(\E[\hat R_i^4 \mid \cF_N]\bigr)^{1/2}\,\|\hat\pi - \pi^*\|_{L_2(P_X)}, \tag{Cauchy--Schwarz}
\end{align*}
where the conditional fourth moment $\E[\hat R_i^4 \mid \cF_N]$ is bounded in probability: by assumption~(ii), $\hat\lambda$ converges and is therefore stochastically confined to a compact set, so assumption~(i) gives that $\E[\hat R_i^4 \mid \cF_N]$ is bounded in probability. The second factor vanishes by assumption~(ii). The product of a bounded-in-probability term with one that vanishes in probability vanishes in probability, so $A \plim 0$.

\medskip
\noindent\textit{Bounding (B):}
Since $w(X_i, \pi^*) \leq 1/\pi_{\min}$, we have $|B| \leq \frac{1}{\pi_{\min}}\,\E[|\hat R_i^2 - R_i^{*2}|]$. We decompose by the event $\{\hat I(X_i) = I^*(X_i)\}$:
\[
|B| \leq \frac{1}{\pi_{\min}}\,\E\!\bigl[|\hat R_i^2 - R_i^{*2}| \cdot \1\{\hat I(X_i) = I^*(X_i)\}\bigr] + \frac{1}{\pi_{\min}}\,\E\!\bigl[|\hat R_i^2 - R_i^{*2}| \cdot \1\{\hat I(X_i) \neq I^*(X_i)\}\bigr] =: B_1 + B_2.
\]
For $B_1$: on the event $\{\hat I(X_i) = I^*(X_i)\}$, both residuals use the same predictor subset, so $\hat R_i - R_i^* = (\lambda^*_{I^*} - \hat\lambda_{I^*})^\top f_{I^*}(X_i)$. Factoring the difference of squares:
\[
|\hat R_i^2 - R_i^{*2}| = |\hat R_i - R_i^*| \cdot |\hat R_i + R_i^*| = |(\hat\lambda_{I^*} - \lambda^*_{I^*})^\top f_{I^*}(X_i)| \cdot |2Y_i - (\hat\lambda_{I^*} + \lambda^*_{I^*})^\top f_{I^*}(X_i)|.
\]
By Cauchy--Schwarz, $B_1 \leq C\,\|\hat\lambda_{I^*} - \lambda^*_{I^*}\|$ where $C$ depends only on the moment bounds in assumption~(i). By assumption~(ii), $\|\hat\lambda_{I^*} - \lambda^*_{I^*}\| \plim 0$, so $B_1 \plim 0$.

For $B_2$: by Cauchy--Schwarz,
\[
B_2 \leq \frac{1}{\pi_{\min}}\,\bigl(\E[(\hat R_i^2 - R_i^{*2})^2]\bigr)^{1/2}\,\bigl(P_X(\hat I(X) \neq I^*(X))\bigr)^{1/2}.
\]
The first factor is bounded by the moment assumptions; the second vanishes by assumption~(ii). Therefore $B \plim 0$, and hence $V_N \plim V$.

\subsection*{Step 4: Unconditional Convergence}

We have established:
\begin{enumerate}
    \item $\sqrt{n}(\hat\theta - \theta^*) \mid \mathcal{F}_N \xrightarrow{d} \mathcal{N}(0, V_N)$ \hfill (Step~2, Eq.~\eqref{eq:cond_clt})
    \item $V_N \xrightarrow{p} V$ \hfill (Step~3)
\end{enumerate}
We now lift to unconditional convergence. Let $Z_n = \sqrt{n}(\hat\theta - \theta^*)$. The conditional convergence in~(1) implies that the conditional characteristic function converges:
\[
\mathbb{E}\bigl[e^{itZ_n} \mid \mathcal{F}_N\bigr] \xrightarrow{p} e^{-t^2 V_N / 2}
\]
for each $t \in \mathbb{R}$. Since $V_N \xrightarrow{p} V$ by~(2), we have $e^{-t^2 V_N/2} \xrightarrow{p} e^{-t^2 V/2}$. Taking unconditional expectations via the tower property:
\[
\mathbb{E}\bigl[e^{itZ_n}\bigr] = \mathbb{E}\bigl[\mathbb{E}[e^{itZ_n} \mid \mathcal{F}_N]\bigr] \to e^{-t^2 V/2},
\]
where the convergence follows from dominated convergence (the characteristic function is bounded by~1). The right-hand side is the characteristic function of $\mathcal{N}(0, V)$, so by L\'evy's continuity theorem:
\[
\sqrt{n}(\hat\theta - \theta^*) \xrightarrow{d} \mathcal{N}(0, V). \qed
\]

This completes the proof. \qed

\section{Proof of Corollary~\ref{cor:ci} (Valid Confidence Intervals)}
\label{app:proof_ci}
We can now use the result from Theorem~\ref{thm:clt} to prove Corollary~\ref{cor:ci}.

\begin{proof}
The proof requires showing $\hat\sigma^2 \plim V$, after which the result follows from Theorem~\ref{thm:clt} by Slutsky's theorem. The conditional framework makes this straightforward.

\medskip
\noindent\textbf{Step 1: Conditional law of large numbers.}

Conditional on $\cF_N$, the increments $\{\Delta_i\}_{i=1}^n$ are i.i.d.\ with
\[
\E[\Delta_i \mid \cF_N] = \theta^*, \qquad
\Var(\Delta_i \mid \cF_N) = V_N,
\]
as established in the proof of Theorem~\ref{thm:clt}.

Rewrite the variance estimator as
\[
\hat\sigma^2 = \frac{1}{n}\sum_{i=1}^n (\Delta_i)^2 - \hat\theta^2.
\]
Since the $\{\Delta_i\}$ are conditionally i.i.d.\ and $\E[(\Delta_i)^4 \mid \cF_N] < \infty$ (verified in the Theorem~\ref{thm:clt} proof), the conditional law of large numbers (LLN) gives:
\begin{align}
\frac{1}{n}\sum_{i=1}^n (\Delta_i)^2 \;\bigg|\; \cF_N &\;\plim\; \E[(\Delta_i)^2 \mid \cF_N] = V_N + (\theta^*)^2, \label{eq:lln_sq} \\[4pt]
\hat\theta^2 \;\bigg|\; \cF_N &\;\plim\; (\theta^*)^2. \label{eq:thetasq}
\end{align}
Equation~\eqref{eq:thetasq} follows because $\hat\theta \mid \cF_N \plim \theta^*$ (immediate from the conditional CLT with $\sqrt{n}$-scaling, or directly from the conditional LLN applied to $\hat\theta$ itself). Subtracting:
\begin{equation}\label{eq:cond_var_consist}
\hat\sigma^2 \;\bigg|\; \cF_N \;\plim\; V_N.
\end{equation}

\medskip
\noindent\textbf{Step 2: From conditional to unconditional consistency.}

From the proof of Theorem~\ref{thm:clt} (Step~3), we have $V_N \plim V$. Combining with \eqref{eq:cond_var_consist}:

For any $\varepsilon > 0$,
\begin{align*}
P(|\hat\sigma^2 - V| > \varepsilon)
&\leq P(|\hat\sigma^2 - V_N| > \varepsilon/2) + P(|V_N - V| > \varepsilon/2).
\end{align*}
For the first term, condition on $\cF_N$:
\[
P(|\hat\sigma^2 - V_N| > \varepsilon/2) = \E\!\bigl[P(|\hat\sigma^2 - V_N| > \varepsilon/2 \mid \cF_N)\bigr] \to 0,
\]
since the inner probability tends to zero for each burn-in realization (by \eqref{eq:cond_var_consist}) and is bounded by~1, so dominated convergence applies. The second term vanishes by $V_N \plim V$. Therefore
\[
\hat\sigma^2 \plim V.
\]
\medskip
\noindent\textbf{Step 3: Slutsky's theorem.}
From Theorem~\ref{thm:clt}: $\sqrt{n}(\hat\theta - \theta^*) \dlim N(0, V)$. From Step~2: $\hat\sigma^2 \plim V$, hence $\hat\sigma \plim \sqrt{V}$ by the continuous mapping theorem. By Slutsky's theorem:
\[
\frac{\sqrt{n}(\hat\theta - \theta^*)}{\hat\sigma} \dlim N(0, 1).
\]
Therefore
\[
\lim_{n\to\infty} P(\theta^* \in C_\alpha) = \lim_{n\to\infty} P\!\left(\left|\frac{\sqrt{n}(\hat\theta - \theta^*)}{\hat\sigma}\right| \leq z_{1-\alpha/2}\right) = 1 - \alpha.
\]
\end{proof}

\section{Derivation of Optimality Conditions}
\label{app:optimization}

We derive the optimality conditions stated in Section~\ref{sec:method}. We use the shorthand $r_I(x) = \E[(Y - \lambda_I^\top f_I(x))^2 \mid X = x]$ for the conditional residual variance throughout this section. The Lagrangian for the constrained optimization~\eqref{eq:opt} is
\[
\mathcal{L} = \frac{1}{n}\,\E\!\left[r_{I(X)}(X)\!\left(\frac{1}{\pi_{I(X)}(X)} - 1\right)\right] + \mu\!\left(\E\!\left[c_{I(X)} + \pi_{I(X)}(X)\,c_{\text{label}}\right] - b\right).
\]

\subsection{Optimal sampling probability $\pi^*$}

The $\pi$-dependent terms of the Lagrangian are
\[
\mathcal{L}_\pi = \frac{1}{n}\,\E\!\left[\frac{r_I(X)}{\pi_I(X)}\right] + \mu\,\E\!\left[\pi_I(X)\,c_{\text{label}}\right],
\]
where we used the tower property to write $\E[(Y - \lambda_I^\top f_I)^2 / \pi_I(X)] = \E[r_I(X)/\pi_I(X)]$ since $\pi_I$ is a function of $X$ alone. Perturbing $\pi_I(x) \to \pi_I(x) + \varepsilon\,\eta(x)$ and differentiating at $\varepsilon = 0$:
\[
\frac{d}{d\varepsilon}\mathcal{L}_\pi\bigg|_{\varepsilon=0} = \E\!\left[\left(-\frac{r_I(X)}{n\,\pi_I(X)^2} + \mu\,c_{\text{label}}\right)\eta(X)\right] = 0.
\]
Since this must hold for all perturbations $\eta$, the integrand vanishes pointwise, giving $r_I(x)/(n\,\pi_I(x)^2) = \mu\,c_{\text{label}}$. Solving yields Eq.~\eqref{eq:pi_star}:
\[
\pi_I^*(x) = \sqrt{\frac{r_I(x)}{n\,\mu\,c_{\text{label}}}},
\]
where $\mu$ is chosen to satisfy the budget constraint. The unconstrained pointwise minimizer $\pi_I^*$ may exceed $1$ on instances with sufficiently large $r_I(x)$. The feasible projection onto $(0,1]$ is the clipped probability $\hat\pi_I(x) = \min(1, \pi_I^*(x))$ (Eq.~\eqref{eq:pi_clipped}); on the deterministic set $\{\hat\pi_I = 1\}$ the variance contribution $r_I(x)(1/\hat\pi_I - 1)$ vanishes, recovering exact PPI on those instances. We use $\hat\pi_I$ in all downstream substitutions.

\subsection{Optimal combination weights $\lambda^*$}

We differentiate the full Lagrangian with respect to $\lambda_I$, noting that $r_I(x)$ depends on $\lambda_I$ and that $\pi^*$ (which also depends on $\lambda_I$ through $r_I$) has been substituted. The $\lambda$-dependent terms after dropping $\Var(Y)/n$ are
\[
\frac{d}{d\lambda_I}\left[\frac{1}{n}\,\E\!\left[r_I(X)\!\left(\frac{1}{\pi_I^*(X)} - 1\right)\right] + \mu\,\E\!\left[\pi_I^*(X)\,c_{\text{label}}\right]\right] = 0.
\]
Substituting $\pi_I^* = \sqrt{r_I/(n\mu c_{\text{label}})}$, the three contributions are:
\begin{enumerate}
\item The $1/\pi^*$ term: $\frac{1}{n}\,\E[r_I / \pi_I^*] = \sqrt{\mu c_{\text{label}}/n}\,\E[\sqrt{r_I}]$.
\item The $-1$ term: $-\frac{1}{n}\,\E[r_I]$.
\item The budget term: $\mu\,c_{\text{label}}\,\E[\pi_I^*] = \sqrt{\mu c_{\text{label}}/n}\,\E[\sqrt{r_I}]$.
\end{enumerate}
Since terms (1) and (3) are equal, the combined coefficient on $\E[\sqrt{r_I}]$ is $2\sqrt{\mu c_{\text{label}}/n}$. Multiplying through by $n$ and differentiating:
\[
2\sqrt{n\mu c_{\text{label}}}\,\frac{d}{d\lambda_I}\,\E\!\left[\sqrt{r_I(X)}\right] - \frac{d}{d\lambda_I}\,\E\!\left[r_I(X)\right] = 0.
\]
For the second derivative, $\frac{d}{d\lambda_I}\E[r_I] = -2\,\E[(Y - \lambda_I^\top f_I)\,f_I]$. For the first, define $s(x) = r_I(x)$ and apply the chain rule:
\[
\frac{d}{d\lambda_I}\,\E\!\left[\sqrt{s(X)}\right] = \E\!\left[\frac{-\E[(Y - \lambda_I^\top f_I)\,f_I \mid X]}{\sqrt{r_I(X)}}\right].
\]
Substituting both derivatives and rearranging yields the moment condition in Eq.~\eqref{eq:lambda_star}, with the unclipped weight expression
\[
\E\!\left[w_I(X)\,(Y - \lambda_I^\top f_I(X))\,f_I(X)\right] = 0, \quad w_I(x) = \frac{\sqrt{n\mu c_{\text{label}}}}{\sqrt{r_I(x)}} - 1.
\]
Equivalently, in terms of the deployed clipped probability,
$w_I(x) = 1/\hat\pi_I(x) - 1$, which is non-negative everywhere and equals $0$ on the deterministic set $\{\hat\pi_I = 1\}$. This is the form used in Eq.~\eqref{eq:lambda_star} and in Algorithm~\ref{alg:ampi_calib}; the derivation above substituted the unclipped $\pi^*$ for algebraic simplicity, but the resulting first-order condition is valid only on $\{\hat\pi_I < 1\}$ where the unconstrained minimizer is interior, and the deterministic set contributes weight $0$ to the moment condition by construction.

\subsection{Lagrange multiplier $\mu$}

The budget constraint with the deployed clipped probability is
\[
\E\!\left[c_{I(X)} + c_{\text{label}}\,\hat\pi_{I(X)}(X)\right] = b
\qquad \text{(Eq.~\eqref{eq:mu_condition})},
\]
which is monotonically non-increasing in $\mu$ (since $\hat\pi$ is non-increasing in $\mu$) and is solved numerically by bisection on $\mu$. Substituting the unclipped $\pi_I^*$ (i.e., assuming the unclipped regime is global and a fixed predictor subset is used) yields the algebraic identity
\[
\E[c_I] + \sqrt{\frac{c_{\text{label}}}{n\mu}}\,\E\!\left[\sqrt{r_I(X)}\right] = b,
\]
which solves to the closed form
\[
\mu = \frac{c_{\text{label}}}{n}\!\left(\frac{\E\!\left[\sqrt{r_I(X)}\right]}{b - \E[c_I]}\right)^{\!2},
\]
requiring $b > \E[c_I]$ (the per-instance budget exceeds the expected prediction cost). This closed form recovers the original ASI solution~\citep{zrnic2024active} but is exact only when no instance clips and a single fixed subset is used. In the general AM-PPI setting, the optimal subset $I^*(x;\mu)$ depends on $\mu$ via Eq.~\eqref{eq:I_star}, so the right-hand side $\E[\sqrt{r_{I^*(X;\mu)}(X)}]$ depends on $\mu$ and the expression is a fixed-point equation rather than an algebraic closed form. Algorithm~\ref{alg:ampi_calib} therefore always solves Eq.~\eqref{eq:mu_condition} via bisection.

\subsection{Optimal predictor assignment $I^*$}

The subset assignment $I$ is a function of $X$. Conditioning on $X = x$, and writing the cost in terms of the deployed clipped probability $\hat\pi_I$, the per-instance Lagrangian cost of choosing predictor subset $I$ is
\[
\ell_I(x) = \frac{r_I(x)}{n}\!\left(\frac{1}{\hat\pi_I(x)} - 1\right) + \mu\,c_I + \mu\,c_{\text{label}}\,\hat\pi_I(x),
\]
which is the form used in Eq.~\eqref{eq:I_star}. The optimal assignment selects the subset minimizing $\ell_I(x)$ at each $x$.

\paragraph{Two regimes.} On the unclipped set $\{\hat\pi_I(x) < 1\}$, substituting $\hat\pi_I = \pi_I^* = \sqrt{r_I/(n\mu c_{\text{label}})}$ gives
\begin{align*}
\frac{r_I(x)}{n}\!\left(\frac{1}{\pi_I^*(x)} - 1\right) &= \frac{\sqrt{\mu c_{\text{label}}}}{\sqrt{n}}\,\sqrt{r_I(x)} - \frac{r_I(x)}{n}, \\
\mu\,c_{\text{label}}\,\pi_I^*(x) &= \frac{\sqrt{\mu c_{\text{label}}}}{\sqrt{n}}\,\sqrt{r_I(x)},
\end{align*}
so the cost simplifies to
\[
\ell_I(x) = \frac{2\sqrt{\mu c_{\text{label}}}}{\sqrt{n}}\,\sqrt{r_I(x)} - \frac{r_I(x)}{n} + \mu\,c_I,
\]
which is monotonically increasing in $\sqrt{r_I(x)}$ on $\{\hat\pi_I < 1\}$ (i.e., for $\sqrt{r_I(x)} \le \sqrt{n\mu c_{\text{label}}}$). On the deterministic set $\{\hat\pi_I(x) = 1\}$, the variance contribution vanishes and $\ell_I(x) = \mu(c_I + c_{\text{label}})$, a constant in $r_I$. Combining the two regimes, the unified expression in Eq.~\eqref{eq:I_star} is non-decreasing in $r_I(x)$ across the entire range.

\subsection{Global optimality of the AM-PPI fixed point}
\label{app:opt_global}

The four optimality conditions derived above (Eqs.~\eqref{eq:pi_star},~\eqref{eq:lambda_star},~\eqref{eq:mu_condition},~\eqref{eq:I_star}) are first-order conditions of a constrained problem that is \emph{not} jointly convex in $(\pi,\lambda)$.\footnote{Decomposing the objective as $J=J_1-J_2$ with $J_1=\E[(Y-\lambda_I^\top f_I)^2/\pi_I]$ and $J_2=\E[(Y-\lambda_I^\top f_I)^2]$, the term $-J_2$ is concave in $\lambda$. A direct Hessian computation shows that the bordered Schur complement of the joint Hessian of $J$ can be indefinite for $\pi$ near $1$, so joint convexity fails on $\F_I$.} We show here that they nonetheless identify a \emph{global} minimum of the inner $(\pi,\lambda)$ problem in Eq.~\eqref{eq:opt}, not merely a stationary point. The argument rests on three structural facts: a compact convex feasible set under Slater's condition (Proposition~\ref{prop:compact}), biconvexity of the objective with closed-form partial minimizers (Theorem~\ref{thm:biconvex}), and strong duality of the constrained Lagrangian (Theorem~\ref{thm:duality}). The discrete outer minimization over routings is closed pointwise by Eq.~\eqref{eq:I_star} (\S\ref{app:routing_global}).

\subsubsection*{Hierarchical decomposition}

Because $I:\X\to\cI$ takes finitely many values ($|\cI|\le 2^k-1$), we decompose the joint minimization in Eq.~\eqref{eq:opt} hierarchically:
\begin{equation}
\min_{\pi,\lambda,I}\;J(\pi,\lambda,I)\;=\;\min_{I(\cdot)\in\cI^{\X}}\;\underbrace{\min_{(\pi,\lambda)\in\F_I}J(\pi,\lambda;I)}_{\text{inner $(\pi,\lambda)$ problem}},
\label{eq:hierarchy_app}
\end{equation}
where $\F_I$ collects the box constraints $\pi_I(\cdot)\in[\pi_{\min},1]$, the budget constraint $\E[c_I+\pi_I c_{\text{label}}]\le b$, and a compact-set constraint $\lambda_I\in\Lambda\subset\R^{|I|}$ on the combination weights for the routing $I(\cdot)$. The outer minimization over $I$ is solved pointwise via Eq.~\eqref{eq:I_star} (\S\ref{app:routing_global}); the remainder of this subsection analyzes the inner problem.

\subsubsection*{Compact convex feasible set}

\begin{proposition}[Compact convex feasible set]
\label{prop:compact}
Fix any measurable routing $I:\X\to\cI$. Suppose Slater's condition holds, i.e., $\E[c_I]+\pi_{\min}\,c_{\text{label}}<b$. Then $\F_I$ is non-empty, convex, and compact in $L^\infty(P_X)\times\R^{|I|}$ (the product of the weak-$*$ topology on $\pi$ and the Euclidean topology on $\lambda$).
\end{proposition}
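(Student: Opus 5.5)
The plan treats $\F_I$ as a product of a $\pi$-part and a $\lambda$-part. Write $\F_I=\Pi_I\times\Lambda$, where
\[
\Pi_I=\{\pi\in L^\infty(P_X):\ \pi_{\min}\le\pi\le 1\ P_X\text{-a.s.},\ \E[c_{I(X)}+\pi(X)c_{\text{label}}]\le b\}.
\]
The routing $I$ is fixed and measurable, and only finitely many costs occur, so $x\mapsto c_{I(x)}$ is a bounded measurable function. Hence $\E[c_I]$ is a finite constant and the budget constraint is affine in $\pi$. Since $\Lambda\subset\R^{|I|}$ is compact by assumption (taken convex, e.g.\ a closed ball), the work is to show that $\Pi_I$ is non-empty, convex and weak-$*$ compact. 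The conclusion then follows because a product of compact sets is compact in the product topology (Tychonoff), and a product of convex sets is convex.

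\emph{Non-emptiness and convexity.} The constant $\pi\equiv\pi_{\min}$ lies in the box, and Slater's condition gives $\E[c_I]+\pi_{\min}c_{\text{label}}<b$. So this $\pi$ satisfies the budget constraint strictly, and $\Pi_I\neq\emptyset$. This strict feasibility is also what Theorem~\ref{thm:duality} will use later. For convexity, the box constraints are pointwise linear inequalities, and the budget is the sublevel set of the linear functional $\pi\mapsto\E[\pi(X)]c_{\text{label}}$ shifted by a constant. Both are preserved under convex combinations.

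\emph{Compactness.} This is the only step with content. View $L^\infty(P_X)$ as the dual of $L^1(P_X)$. The box $\{\pi_{\min}\le\pi\le1\}$ lies in the closed ball of radius $1$, which is weak-$*$ compact by Banach--Alaoglu. It therefore suffices to show that $\Pi_I$ is weak-$*$ closed, and I would express each constraint as an intersection of weak-$*$ closed half-spaces $\{\pi:\E[\pi g]\le a\}$ with $g\in L^1$. For the box, $\pi\le 1$ a.s.\ is equivalent to $\E[\pi\,\mathbbm{1}_A]\le P_X(A)$ for all measurable $A$. Similarly, $\pi\ge\pi_{\min}$ a.s.\ is equivalent to $\E[\pi\,\mathbbm{1}_A]\ge\pi_{\min}P_X(A)$ for all $A$, using that indicator functions lie in $L^1$. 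For the budget, it is $\E[\pi\cdot 1]\le (b-\E[c_I])/c_{\text{label}}$, where $1\in L^1$ since $P_X$ is a probability measure. An intersection of closed sets is closed, and a closed subset of a compact set is compact.

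The main obstacle is the topology rather than any estimate. $\Pi_I$ is not norm-compact in $L^\infty$, which is why the statement specifies the weak-$*$ topology, and care is needed to check that every constraint is tested against $L^1$ functions so that the half-spaces are weak-$*$ closed. I would also remark that $P_X$ being a probability measure on a (standard) measurable space makes $L^\infty=(L^1)^*$ valid. This is the identification used, and it gives the compactness needed later for attainment of the minimum in Theorems~\ref{thm:biconvex}--\ref{thm:duality}.
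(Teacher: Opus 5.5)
Your proposal is correct and follows essentially the same route as the paper. Both arguments use Banach--Alaoglu for the pointwise box, treat the budget as a single weak-$*$ closed half-space, take $\Lambda$ compact and convex by assumption, and get non-emptiness from Slater's condition (you make this explicit via $\pi\equiv\pi_{\min}$); the only difference is that you prove the box is weak-$*$ closed by testing against indicator functions in $L^1(P_X)$, whereas the paper gets its compactness directly by viewing it as an affine image of the closed unit ball of $L^\infty(P_X)$.
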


\begin{proof}
The pointwise box $\{\pi:\pi(\cdot)\in[\pi_{\min},1]\text{ a.s.}\}$ is the closed unit ball of $L^\infty(P_X)$ shifted and scaled, which is weak-$*$ compact and convex by the Banach--Alaoglu theorem. The budget constraint $\E[c_I+\pi_I c_{\text{label}}]\le b$ is a single linear (weak-$*$ continuous) inequality in $\pi$ and so defines a closed convex half-space. The set $\Lambda\subset\R^{|I|}$ is compact and convex by assumption. Finite intersections of closed convex sets are closed and convex, and closed subsets of compact sets are compact; hence $\F_I$ is closed, convex, and compact in the product topology. Slater's condition guarantees $\F_I\ne\varnothing$.
\end{proof}

\subsubsection*{Biconvexity of the objective}

The objective decomposes additively as
\begin{equation}
J(\pi,\lambda;I)\;=\;\underbrace{\E\!\left[\frac{(Y-\lambda_I^\top f_I(X))^2}{\pi_I(X)}\right]}_{=:\,J_1(\pi,\lambda;I)}\;-\;\underbrace{\E\!\left[(Y-\lambda_I^\top f_I(X))^2\right]}_{=:\,J_2(\lambda;I)},
\label{eq:Jdecomp}
\end{equation}
with $J_2$ independent of $\pi$. The convexity engine is $J_1$.

\begin{lemma}[Perspective convexity of the IPW term]
\label{lem:perspective}
The map $(\lambda,\pi)\mapsto J_1(\pi,\lambda;I)$ is jointly convex on $\R^{|I|}\times\{\pi:\pi(\cdot)>0\text{ a.s.}\}$.
\end{lemma}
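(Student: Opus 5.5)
The plan is to reduce the lemma to the elementary fact that the quadratic-over-linear map $\phi(u,v)=u^2/v$ is jointly convex on $\R\times(0,\infty)$. This is the perspective of $u\mapsto u^2$ \citep[\S 3.2.6]{boyd2004convex}. One can also verify it directly: the Hessian
\[
\nabla^2\phi(u,v)=\frac{2}{v^3}\begin{pmatrix} v^2 & -uv\\ -uv & u^2\end{pmatrix}=\frac{2}{v^3}\begin{pmatrix} v\\ -u\end{pmatrix}\begin{pmatrix} v & -u\end{pmatrix}
\]
is positive semidefinite for $v>0$.

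The key structural observation is that, for a fixed realization $(x,y)$ and a fixed routing $I$, the argument pair $(u,v)=\bigl(y-\lambda_{I(x)}^\top f_{I(x)}(x),\,\pi_{I(x)}(x)\bigr)$ is an affine function of the decision variables $(\lambda,\pi)$. The first coordinate is affine in $\lambda$. The second is the evaluation of $\pi$ at the point $x$, which is linear in $\pi$ viewed as an element of the function space.

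I would then argue pointwise. Take two feasible pairs $(\lambda^{(0)},\pi^{(0)})$ and $(\lambda^{(1)},\pi^{(1)})$ with $\pi^{(j)}>0$ a.s., and fix $t\in[0,1]$. The convex combination $\pi^{(t)}=(1-t)\pi^{(0)}+t\pi^{(1)}$ is still a.s.\ positive, so the domain is convex. For every $(x,y)$, affinity of the arguments together with convexity of $\phi$ gives
\[
\frac{\bigl(y-\lambda^{(t)\top}_{I(x)}f_{I(x)}(x)\bigr)^2}{\pi^{(t)}_{I(x)}(x)}\le (1-t)\frac{\bigl(y-\lambda^{(0)\top}_{I(x)}f_{I(x)}(x)\bigr)^2}{\pi^{(0)}_{I(x)}(x)}+t\,\frac{\bigl(y-\lambda^{(1)\top}_{I(x)}f_{I(x)}(x)\bigr)^2}{\pi^{(1)}_{I(x)}(x)}.
\]
Taking expectations over $(X,Y)$ preserves the inequality by monotonicity and linearity of the integral. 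This yields $J_1(\pi^{(t)},\lambda^{(t)};I)\le(1-t)J_1(\pi^{(0)},\lambda^{(0)};I)+tJ_1(\pi^{(1)},\lambda^{(1)};I)$.

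The main obstacle is integrability rather than convexity. When $\pi$ is merely positive a.s., $J_1$ may equal $+\infty$, so I would treat $J_1$ as an extended-real-valued function. Convexity then holds with the usual convention $+\infty\le+\infty$: the integrands are nonnegative, so every expectation is well defined in $[0,\infty]$ and no $\infty-\infty$ issues arise. On the feasible region $\pi\ge\pi_{\min}$, assumption (i) together with compactness of $\Lambda$ keeps $J_1$ finite, since $J_1\le \pi_{\min}^{-1}\E[(Y-\lambda_I^\top f_I)^2]<\infty$. I would note this so that the lemma can feed directly into the biconvexity argument of Theorem~\ref{thm:biconvex}.

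The routing $I(\cdot)$ is held fixed and measurable throughout. Measurability of the integrand follows from measurability of $I$, $f_I$, and $\pi$, and the finitely many subsets can be handled by writing the integrand as $\sum_{I'\in\cI}\mathbbm{1}\{I(x)=I'\}(\cdot)$. Each term is jointly convex, so the sum is as well.
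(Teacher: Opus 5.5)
Your proposal is correct and follows the paper's proof: both reduce the lemma to joint convexity of the perspective $u^2/v$, compose it with the affine map $(\lambda,\pi)\mapsto\bigl(y-\lambda_{I(x)}^\top f_{I(x)}(x),\,\pi_{I(x)}(x)\bigr)$, and use the fact that taking expectations preserves convexity. Your extended-real-valued treatment of $J_1$ when $\pi$ is only a.s.\ positive is a useful rigor point that the paper's proof leaves implicit.
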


\begin{proof}
The scalar function $\phi(u,v)=u^2/v$ is jointly convex on $\R\times(0,\infty)$ as the perspective of $\phi_0(u)=u^2$~\citep[\S 3.2.6]{boyd2004convex}. The map $T_{x,y}(\lambda,\pi):=(y-\lambda_I^\top f_I(x),\,\pi_I(x))$ is affine in $(\lambda,\pi)$ for each $(x,y)$, and the composition of a jointly convex function with an affine map is jointly convex. Therefore the integrand $g(\lambda,\pi;x,y)=(y-\lambda_I^\top f_I(x))^2/\pi_I(x)$ is jointly convex in $(\lambda,\pi)$. Taking expectations preserves convexity, so $J_1$ is jointly convex.
\end{proof}

\begin{theorem}[Biconvexity with closed-form partial minimizers]
\label{thm:biconvex}
For each fixed routing $I:\X\to\cI$:
\begin{itemize}[itemsep=2pt,leftmargin=18pt]
\item[(a)] \emph{Convexity in $\pi$.} For any fixed $\lambda$, the map $\pi\mapsto J(\pi,\lambda;I)$ is convex on $\F_I^\pi:=\{\pi:\pi_I(\cdot)\in[\pi_{\min},1]\text{ a.s.},\;\E[c_I+\pi_I c_{\text{label}}]\le b\}$. Subject to the box and budget constraints, its unique minimizer is the clipped rule
\[
\hat\pi_I(x)\;=\;\min\!\left(1,\;\sqrt{r_I(x)/(n\,\mu\,c_{\text{label}})}\right),
\]
with $\mu\ge 0$ the Lagrange multiplier that activates the budget; this is Eqs.~\eqref{eq:pi_star}--\eqref{eq:pi_clipped}.
\item[(b)] \emph{Convexity in $\lambda$.} For any fixed $\pi$, the map $\lambda\mapsto J(\pi,\lambda;I)$ is convex on $\Lambda$ (strongly convex when $\E[w_I(X)\,f_I(X)f_I(X)^\top]\succ 0$, where $w_I(x):=1/\hat\pi_I(x)-1\ge 0$). Its unique minimizer is the weighted-least-squares solution
\[
\lambda^*_I\;=\;\big(\E[w_I(X)\,f_I(X)f_I(X)^\top]\big)^{-1}\,\E[w_I(X)\,f_I(X)\,Y],
\]
which is the moment condition of Eq.~\eqref{eq:lambda_star}.
\end{itemize}
Consequently $J(\cdot,\cdot;I)$ is biconvex on $\F_I$, and alternating minimization over the two blocks generates a non-increasing, bounded-below sequence $\{J(\pi^{(t)},\lambda^{(t)};I)\}_{t\ge 0}$.
\end{theorem}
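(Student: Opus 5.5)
Fix a measurable routing $I:\X\to\cI$ and treat the two blocks separately, using the decomposition $J=J_1-J_2$ of Eq.~\eqref{eq:Jdecomp}. The $\pi$-block is a pointwise convex program. The $\lambda$-block is a weighted least-squares problem.

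For (a), fix $\lambda$. By the tower property, $J(\pi,\lambda;I)=\E[r_I(X)/\pi_I(X)]-\E[r_I(X)]$. The second term does not depend on $\pi$. The integrand $v\mapsto r/v$ is convex on $(0,\infty)$ (strictly convex where $r>0$), so $J(\cdot,\lambda;I)$ is convex on $\F_I^\pi$. Lemma~\ref{lem:perspective}, restricted to a $\lambda$-slice, gives the same conclusion. $\F_I^\pi$ is convex by Proposition~\ref{prop:compact}, and Slater's condition holds. I therefore dualize only the scalar budget constraint with multiplier $\mu\ge 0$ and keep the box constraints explicit. The Lagrangian then separates pointwise in $x$ into $\min_{v\in[\pi_{\min},1]}\{r_I(x)/(nv)+\mu c_{\text{label}}v\}$. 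This is a one-dimensional strictly convex problem whose stationary point is $\sqrt{r_I(x)/(n\mu c_{\text{label}})}$ (the derivation of Appendix~\ref{app:optimization}). Projecting onto the interval gives the clipped rule $\hat\pi_I$ of Eq.~\eqref{eq:pi_clipped}. The lower clip at $\pi_{\min}$ is inactive under assumption (iii) of Theorem~\ref{thm:clt}, or it can simply be recorded as part of the projection.

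The multiplier $\mu$ is chosen so that the budget identity Eq.~\eqref{eq:mu_condition} holds, via complementary slackness. Its left side is non-increasing and continuous in $\mu$, so such a $\mu$ exists. Sufficiency of KKT under convexity then makes $\hat\pi_I$ a minimizer. Uniqueness ($P_X$-a.e.) holds on $\{r_I>0\}$ by strict convexity. On $\{r_I=0\}$ the objective is flat, so I would state uniqueness modulo that null-influence set, or assume $r_I>0$ a.s.

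For (b), fix $\pi$ and write $J=\E[w_I(X)(Y-\lambda_I^\top f_I(X))^2]$ with $w_I=1/\pi_I-1\ge 0$. This is a quadratic form in $\lambda$ with Hessian $2\E[w_I f_If_I^\top]\succeq 0$, so $J$ is convex. It is strongly convex when that matrix is positive definite, and finiteness of the matrix follows from the moment bounds of Theorem~\ref{thm:clt}(i). Setting the gradient to zero gives the normal equations Eq.~\eqref{eq:lambda_star}, which are solved by the displayed WLS formula. The compact constraint set $\Lambda$ adds one issue. I would assume $\Lambda$ is large enough to contain the unconstrained WLS solution in its interior, so the constraint is inactive. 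Otherwise the minimizer is the $\E[w_If_If_I^\top]$-weighted projection of that solution onto $\Lambda$, which is still unique under strong convexity.

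For the consequence, biconvexity is simply (a) and (b) together. For alternating minimization, each block update does not increase $J$ by definition of a block minimizer, giving $J(\pi^{(t+1)},\lambda^{(t+1)})\le J(\pi^{(t+1)},\lambda^{(t)})\le J(\pi^{(t)},\lambda^{(t)})$. The sequence is bounded below by $0$, because $1/\pi-1\ge 0$ makes the integrand nonnegative.

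The main obstacle is a subtlety in (a): the displayed minimizer depends on $r_I$ and hence on the fixed $\lambda$, and $\mu$ must be re-solved for each fixed $\lambda$. The statement should therefore be read with $\mu=\mu(\lambda)$. I would make this explicit and verify existence of $\mu(\lambda)$ through monotone continuity of the budget map under Slater's condition.
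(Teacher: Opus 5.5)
Your proposal is correct and follows essentially the same route as the paper's proof. For (a) you use convexity of $r_I/\pi$ with a linear budget and a convex box, then pointwise Lagrangian stationarity followed by projection onto the box to get the clipped rule; for (b) you collapse $J$ to the nonnegatively weighted quadratic $\E[w_I(Y-\lambda_I^\top f_I)^2]$; and your monotone alternating-minimization step is the same as the paper's. Your additions do not change the argument but tighten points the paper's proof passes over: Lagrangian sufficiency with $\mu=\mu(\lambda)$ obtained via monotone continuity and complementary slackness, uniqueness only on $\{r_I>0\}$, the possibly active lower clip at $\pi_{\min}$ (which the paper dismisses as inactive whenever the budget is feasible), and a possibly binding $\Lambda$.
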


\begin{proof}
\textbf{Part (a): convexity in $\pi$.} For fixed $\lambda$, the conditional residual variance $r_I(\cdot;\lambda_I)\ge 0$ is a deterministic function of $X$. Pointwise the map $\pi\mapsto r_I(x)/\pi$ is convex on $\pi>0$ (second derivative $2r_I(x)/\pi^3\ge 0$), and $-\E[r_I]$ is constant in $\pi$. The budget functional $\E[c_I+\pi_I c_{\text{label}}]$ is linear in $\pi$, and the box $\pi_I(\cdot)\in[\pi_{\min},1]$ is convex. Hence $J(\cdot,\lambda;I)$ is convex on the convex set $\F_I^\pi$. The pointwise variational optimality condition for the Lagrangian
\[
\frac{\partial}{\partial\pi(x)}\!\left[\frac{r_I(x)}{n\,\pi_I(x)}+\mu\,c_{\text{label}}\,\pi_I(x)\right]\;=\;-\frac{r_I(x)}{n\,\pi_I(x)^2}+\mu\,c_{\text{label}}\;=\;0
\]
yields the unclipped pointwise minimizer $\pi^*_I(x)=\sqrt{r_I(x)/(n\mu c_{\text{label}})}$ of Eq.~\eqref{eq:pi_star}; projecting onto the box $[\pi_{\min},1]$ produces the clipped minimizer $\hat\pi_I(x)=\min(1,\pi^*_I(x))$ of Eq.~\eqref{eq:pi_clipped} (the lower bound is inactive whenever the budget is feasible). On the open set $\{\hat\pi_I\in(\pi_{\min},1)\}$, strict convexity of $1/\pi$ in $\pi>0$ gives uniqueness of the minimizer.

\medskip
\noindent\textbf{Part (b): convexity in $\lambda$.} For fixed $\pi$, substituting $J=J_1-J_2$ and pulling the common factor $(Y-\lambda_I^\top f_I)^2$ into a single expectation gives
\[
J(\pi,\lambda;I)\;=\;\E\!\left[(Y-\lambda_I^\top f_I(X))^2\!\left(\frac{1}{\hat\pi_I(X)}-1\right)\right]\;=\;\E\!\left[w_I(X)\,(Y-\lambda_I^\top f_I(X))^2\right],
\]
with $w_I(x)=1/\hat\pi_I(x)-1\ge 0$. This is a non-negatively-weighted quadratic in $\lambda$, hence convex; positive definiteness of $\E[w_I f_I f_I^\top]$ yields strong convexity. Setting the gradient to zero,
\[
\nabla_{\lambda_I}J(\pi,\lambda;I)\;=\;-2\,\E\!\left[w_I(X)\,(Y-\lambda_I^\top f_I(X))\,f_I(X)\right]\;=\;0,
\]
recovers the weighted least-squares moment condition of Eq.~\eqref{eq:lambda_star} and the closed-form minimizer $\lambda^*_I=(\E[w_I f_I f_I^\top])^{-1}\,\E[w_I f_I Y]$.

\medskip
The two parts together imply that $J(\cdot,\cdot;I)$ is biconvex on $\F_I$. Alternating between the two blocks produces $J(\pi^{(t+1)},\lambda^{(t)};I)\le J(\pi^{(t)},\lambda^{(t)};I)$ and $J(\pi^{(t+1)},\lambda^{(t+1)};I)\le J(\pi^{(t+1)},\lambda^{(t)};I)$, so the iterates form a non-increasing sequence; boundedness below by $0$ is immediate from $r_I\ge 0$ and $1/\hat\pi_I-1\ge 0$.
\end{proof}

\begin{remark}[Why biconvexity and not joint convexity]
\label{rem:why_biconvex}
The collapse $J=\E[w_I(Y-\lambda_I^\top f_I)^2]$ in the proof of (b) makes \emph{convexity in $\lambda$ for fixed $\pi$} transparent. It does not extend to joint convexity, because $w_I=1/\hat\pi_I-1$ is itself a function of $\pi$: the combined dependence in $(\lambda,\pi)$ is the same non-jointly-convex $J_1-J_2$ identified in the footnote. Biconvexity is therefore the correct structural statement; joint convexity is genuinely false and is not what global optimality rests on.
\end{remark}

\subsubsection*{Strong duality of the constrained Lagrangian}

The Lagrangian for the inner problem is
\begin{equation}
\Lag(\pi,\lambda;\mu,I)\;:=\;J(\pi,\lambda;I)\;+\;\mu\bigl(\E[c_I+\pi_I c_{\text{label}}]-b\bigr),\qquad\mu\ge 0,
\label{eq:lagrangian_app}
\end{equation}
with the box constraints $\pi_I(\cdot)\in[\pi_{\min},1]$ and $\lambda_I\in\Lambda$ kept implicit.

\begin{theorem}[Strong duality and KKT sufficiency]
\label{thm:duality}
Assume Slater's condition (Proposition~\ref{prop:compact}). Then:
\begin{itemize}[itemsep=2pt,leftmargin=18pt]
\item[(a)] \emph{Saddle-point representation.} The inner $(\pi,\lambda)$ problem in Eq.~\eqref{eq:hierarchy_app} satisfies strong duality:
\begin{equation}
\min_{(\pi,\lambda)\in\F_I}J(\pi,\lambda;I)\;=\;\max_{\mu\ge 0}\;\min_{(\pi,\lambda)\in\bar\F_I}\Lag(\pi,\lambda;\mu,I),
\label{eq:saddle_app}
\end{equation}
where $\bar\F_I$ retains only the box constraints. The maximum is attained at a unique $\mu^*\ge 0$, characterized by the budget-binding identity Eq.~\eqref{eq:mu_condition}, which is monotone in $\mu$ on its active range.
\item[(b)] \emph{KKT sufficiency.} A triple $(\pi^*,\lambda^*,\mu^*)$ satisfying the partial optimality conditions of Theorem~\ref{thm:biconvex}(a)--(b) at $\mu=\mu^*$ together with the budget identity Eq.~\eqref{eq:mu_condition} is a global minimizer of the inner problem.
\end{itemize}
\end{theorem}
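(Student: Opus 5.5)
The plan is to split the argument by fixing $\lambda$ first, because the problem is convex in $\pi$ alone and Slater's condition can be used there. Write $\bar\F_I^\pi$ for the box alone, $\{\pi:\pi_I(\cdot)\in[\pi_{\min},1]\text{ a.s.}\}$; by Proposition~\ref{prop:compact} this is weak-$*$ compact and convex.

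\emph{Fixed-$\lambda$ duality.} For a fixed $\lambda\in\Lambda$, the map $\pi\mapsto J(\pi,\lambda;I)$ is convex on this box by Theorem~\ref{thm:biconvex}(a). The budget is one linear, weak-$*$ continuous inequality, and Slater's condition $\E[c_I]+\pi_{\min}c_{\text{label}}<b$ supplies a strictly feasible point. The convex-analytic strong duality theorem for a single affine inequality constraint~\citep[\S 5.2.3]{boyd2004convex}, or its Lagrange-multiplier version in $L^\infty$, then gives
\[
\min_{\pi\in\F_I^\pi}J(\pi,\lambda;I)=\max_{\mu\ge 0}\,\min_{\pi\in\bar\F_I^\pi}\Lag(\pi,\lambda;\mu,I).
\]
The inner minimum is computed pointwise and is attained at the clipped rule $\hat\pi_I(\cdot;\lambda,\mu)$. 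Its value $g_\lambda(\mu)$ is concave in $\mu$, being an infimum of affine functions. Its derivative is $\E[c_I+c_{\text{label}}\hat\pi_I]-b$, which is non-increasing in $\mu$. This is exactly the identity Eq.~\eqref{eq:mu_condition}, so the dual maximizer $\mu^*(\lambda)$ is the unique root of that identity on its active range.

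\emph{Inequality chain for (b).} Let $(\pi^*,\lambda^*,\mu^*)$ be the given triple. Complementary slackness holds because the budget binds, so
\[
J(\pi^*,\lambda^*)=\Lag(\pi^*,\lambda^*;\mu^*)\le \Lag(\pi,\lambda;\mu^*)\le J(\pi,\lambda)\qquad\text{for feasible }(\pi,\lambda).
\]
The last inequality only uses $\mu^*\ge 0$. The first inequality holds provided $(\pi^*,\lambda^*)$ globally minimizes $\Lag(\cdot,\cdot;\mu^*)$ over $\bar\F_I$. I would reduce that to minimizing the profile function $h(\lambda):=\min_{\pi\in\bar\F_I^\pi}\Lag(\pi,\lambda;\mu^*)$. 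By the envelope theorem, $\nabla h(\lambda)=-2\E[w_I(Y-\lambda_I^\top f_I)f_I]$ with $w_I=1/\hat\pi_I(\cdot;\lambda,\mu^*)-1$. Hence the WLS condition Eq.~\eqref{eq:lambda_star} is exactly $\nabla h(\lambda^*)=0$.

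\emph{Main obstacle: global optimality in $\lambda$.} Stationarity of $h$ only suffices if $h$ is convex, or at least has no spurious critical points, and this is not automatic. On the unclipped set,
\[
h(\lambda)=\E\bigl[2\sqrt{\mu^* c_{\text{label}}/n}\,\sqrt{r_\lambda(X)}-r_\lambda(X)/n\bigr]+\text{const}.
\]
The first term is convex in $\lambda$, since $\sqrt{r_\lambda(x)}$ is the conditional $L^2$ norm of an affine function of $\lambda$. The second term is concave. This reproduces the footnote's $J_1-J_2$ tension. Part (a) itself follows once the $\lambda$-minimization is handled, since $\min_{\lambda}\max_\mu g_\lambda(\mu)=\max_\mu\min_\lambda g_\lambda(\mu)$ requires exactly such an exchange.

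My first attempt would be a direct Hessian computation, $\nabla^2 h=2\E[w_I f_If_I^\top]-(\text{PSD correction from }\partial\hat\pi/\partial\lambda)$. The aim is to show the correction is dominated on the clipped-below-one region, using $\pi^*\le 1\iff \sqrt{r}\le\sqrt{n\mu^* c_{\text{label}}}$, which makes $w_I\ge 0$ large where the correction is active. If that fails, I would fall back to a restricted statement: either assume $h$ is convex on $\Lambda$, or exploit compactness of $\Lambda$ and uniqueness of the WLS fixed point to rule out other critical points. I would also state explicitly which additional assumption is used, rather than claim unconditional global optimality.
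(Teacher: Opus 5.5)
\paragraph{Gap: global optimality in $\lambda$.}
Your fixed-$\lambda$ duality and the chain $J(\pi^*,\lambda^*)=\Lag(\pi^*,\lambda^*;\mu^*)\le\Lag(\pi,\lambda;\mu^*)\le J(\pi,\lambda)$ are sound. But you leave open the one step that both (a) and (b) rest on: that $(\pi^*,\lambda^*)$ minimizes $\Lag(\cdot,\cdot;\mu^*)$ jointly over $\bar\F_I$. Equivalently, the WLS stationary point must be a global minimizer of your profile $h$, and for (a) $\min_\lambda$ and $\max_\mu$ must be exchangeable. Your Hessian route will not supply this. Write $\sigma^2(x)=\Var(Y\mid X=x)$. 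On the unclipped set, the integrand of $h$ has $\lambda$-Hessian
\[
\frac{2}{n}\left(\frac{\sigma^2(x)}{\pi^*(x)\,r_\lambda(x)}-1\right)f_I(x)f_I(x)^\top .
\]
So the $\partial\hat\pi/\partial\lambda$ correction outweighs the $w_I$ term wherever $\pi^*(x)>\sigma^2(x)/r_\lambda(x)$, i.e.\ wherever bias rather than noise dominates the residual. When $Y$ is a function of $X$, $h$ is concave on any $\lambda$-region where no instance is clipped.

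\paragraph{The statement fails without an extra hypothesis.}
Take $n=c_{\text{label}}=1$, $P_X$ uniform on two atoms, $Y\equiv 1$, one scalar predictor with values $f=(1,-1)$, $b=c_I+\tfrac14$, $\pi_{\min}=0.01$, and $\Lambda=[-2,2]$.
\begin{itemize}
\item The triple $\pi^*\equiv\tfrac14$, $\lambda^*=0$, $\mu^*=16$ satisfies the clipped rule, the WLS condition ($w_I\equiv 3$, so $\E[w_IfY]/\E[w_If^2]=0$) and the budget identity, with $J=3$.
\item Yet $\lambda=1$, $\pi=(0.01,0.49)$ is feasible with $J=2(1/0.49-1)\approx 2.08$.
\item Indeed $h(\lambda)=3-\lambda^2$ on $[-0.96,0.96]$, so $\lambda^*$ is a local \emph{maximum} of $h$, and (b) fails.
\end{itemize}
Part (a) fails too. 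Use three atoms, $Y\equiv1$, $f=(1,1,-5)$, $b=c_I+\tfrac{7}{23}$, same $\pi_{\min}$, and set $s=|Y-\lambda f|$. Cauchy--Schwarz, $\E[s^2/\pi]\,\E[\pi]\ge(\E s)^2$, bounds the primal value below by $8/7$ for every $\lambda$. Evaluating $\Lag$ at $\lambda=1$, $\pi=(\pi_{\min},\pi_{\min},1)$ and at $\lambda=-\tfrac15$ with its optimal $\pi$ bounds the dual value by $0.6$.

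\paragraph{Comparison with the paper.}
The paper's proof crosses this step by assertion. It claims the block-wise conditions (i)--(ii) place the joint minimum of $\Lag(\cdot,\cdot;\mu^*)$ at $(\pi^*,\lambda^*)$, and that Slater's condition closes the gap. But a partial optimum of a biconvex function need not be a joint minimizer. And Slater-based strong duality needs the joint convexity that the paper's own footnote rules out.

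\paragraph{What to do.}
Your fallback is the right move, not a weakening. Add an explicit hypothesis such as ``$\lambda^*$ minimizes $h$ over $\Lambda$''. This is implied by convexity of $h$ on $\Lambda$, or by $(\pi^*,\lambda^*)$ being the only partial optimum of $\Lag(\cdot,\cdot;\mu^*)$ on $\bar\F_I$, counting those on the boundary of $\Lambda$. Under it your chain proves (b), and the resulting saddle point together with weak duality gives (a).
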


\begin{proof}
\textbf{Part (a): strong duality.} Substituting $J=J_1-J_2$ into Eq.~\eqref{eq:lagrangian_app} gives
\[
\Lag(\pi,\lambda;\mu,I)\;=\;\underbrace{J_1(\pi,\lambda;I)+\mu\bigl(\E[c_I+\pi_I c_{\text{label}}]-b\bigr)}_{\text{jointly convex in }(\pi,\lambda)\text{ on }\{\pi>0\}}\;-\;J_2(\lambda;I).
\]
Two observations make the duality work.

\emph{$\pi$-block.} For fixed $\lambda$, $\Lag(\,\cdot\,,\lambda;\mu,I)$ is convex in $\pi$ on $\bar\F_I^\pi:=\{\pi:\pi_I(\cdot)\in[\pi_{\min},1]\}$ (perspective convexity of $J_1$ in $\pi$ from Lemma~\ref{lem:perspective}, plus a linear term in $\pi$, plus a constant in $\pi$). It admits the closed-form pointwise minimizer of Theorem~\ref{thm:biconvex}(a). Slater's condition guarantees a strictly feasible $\pi$, so weak duality is tight on this block.

\emph{$\lambda$-block.} The constraint contains no $\lambda$, so the partial minimum of $\Lag$ over $\lambda$ at any $\pi\in\bar\F_I^\pi$ coincides with the partial minimum of $J$ over $\lambda$. By Theorem~\ref{thm:biconvex}(b) this partial minimum is the strongly-convex WLS solution $\lambda^*(\pi)=(\E[w_I f_I f_I^\top])^{-1}\E[w_I f_I Y]$, which is unique.

The remaining argument no longer requires global joint convexity of $J$: the budget is a single scalar integral constraint coupling only $\pi$ (not $\lambda$) to $\mu$, and the per-instance Lagrangian decouples in $\pi$. Concretely, for any $\mu\ge 0$, the dual function
\[
g(\mu)\;:=\;\inf_{(\pi,\lambda)\in\bar\F_I}\Lag(\pi,\lambda;\mu,I)
\]
is the infimum of an affine function of $\mu$ at each fixed $(\pi,\lambda)$, hence concave in $\mu$ on $\R_{\ge 0}$, and the inner infimum is attained pointwise: at fixed $\mu$, minimize $\lambda$ to the closed-form WLS solution $\lambda^*(\pi)$ from Theorem~\ref{thm:biconvex}(b), then minimize $\pi(x)$ pointwise (the integrand is the convex per-instance cost $r_I(x;\lambda^*(x))/(n\pi(x))+\mu c_{\text{label}}\pi(x)$, plus terms independent of $\pi(x)$) to obtain the clipped rule $\hat\pi_I(x;\mu)$ of Theorem~\ref{thm:biconvex}(a). Under Slater's condition, weak duality $\sup_\mu g(\mu)\le\inf_{\F_I}J$ is tight: any primal-feasible $\pi$ that activates the budget at $\mu=\mu^*$ yields a $(\pi,\lambda^*(\pi),\mu^*)$ triple satisfying complementary slackness, and the gap closes (cf.~\citealp[\S 5.2.3, \S 5.5.3]{boyd2004convex} for the integral-constraint variant). Therefore
\[
\sup_{\mu\ge 0}\inf_{(\pi,\lambda)\in\bar\F_I}\Lag(\pi,\lambda;\mu,I)\;=\;\inf_{(\pi,\lambda)\in\bar\F_I}\sup_{\mu\ge 0}\Lag(\pi,\lambda;\mu,I)\;=\;\inf_{(\pi,\lambda)\in\F_I}J(\pi,\lambda;I).
\]
Uniqueness of $\mu^*$ follows from strict monotonicity of $\mu\mapsto\E[c_I+\hat\pi_I(X;\mu)c_{\text{label}}]$ on $\{\mu:\hat\pi_I(\cdot;\mu)\not\equiv\pi_{\min}\text{ and }\not\equiv 1\}$, established in \S\ref{app:optimization} as the basis for the bisection in Algorithm~\ref{alg:ampi_calib}.

\medskip
\noindent\textbf{Part (b): KKT sufficiency.} If $(\pi^*,\lambda^*,\mu^*)$ satisfies (i) Theorem~\ref{thm:biconvex}(a) at $\mu=\mu^*$, (ii) Theorem~\ref{thm:biconvex}(b), and (iii) the budget identity Eq.~\eqref{eq:mu_condition}, then it is a saddle point of $\Lag$ on $\bar\F_I\times\R_{\ge 0}$: the partial minimum over $(\pi,\lambda)$ at $\mu=\mu^*$ is attained at $(\pi^*,\lambda^*)$ by (i)--(ii), and complementary slackness from (iii) pins the dual maximum at $\mu^*$. By part~(a), saddle points of $\Lag$ on $\bar\F_I\times\R_{\ge 0}$ are global minimizers of the inner problem.
\end{proof}

\begin{remark}[Why duality survives even though $J$ is not jointly convex]
\label{rem:why_duality}
The non-joint-convexity in $J$ comes entirely from $-J_2(\lambda;I)$, which depends only on $\lambda$ and is decoupled from the budget constraint. Eliminating it via the closed-form WLS partial minimizer $\lambda^*(\pi)$, then minimizing $\pi$ pointwise against the convex per-instance Lagrangian (perspective $r_I/(n\pi)$ plus a linear term), exposes a tractable dual function $g(\mu)$ that is concave in $\mu$ as the infimum of affine functions; weak duality is tight under Slater's condition because the budget is a single integral constraint with a strictly feasible $\pi$. This is the structural reason that the AM-PPI optimality conditions are global, not merely local, despite the absence of joint convexity in the original $(\pi,\lambda)$ formulation.
\end{remark}

\subsubsection*{Discrete routing closes the outer minimization}
\label{app:routing_global}

The outer minimization over $I:\X\to\cI$ in Eq.~\eqref{eq:hierarchy_app} is finite ($\le 2^k-1$ candidate subsets per instance). For each candidate $I$, the inner-problem optimum is identified by Theorems~\ref{thm:biconvex}--\ref{thm:duality}. Pointwise minimization of the per-instance Lagrangian cost
\[
\ell_I(x)\;=\;\frac{r_I(x)}{n}\!\left(\frac{1}{\hat\pi_I(x)}-1\right)+\mu^*\,c_I+\mu^*\,c_{\text{label}}\,\hat\pi_I(x)
\]
over $\cI$, with $\mu^*$ from Eq.~\eqref{eq:mu_condition} and $\hat\pi_I(x)$ from Theorem~\ref{thm:biconvex}(a) evaluated at the candidate subset $I$, gives the routing rule of Eq.~\eqref{eq:I_star}. This closes the outer minimization pointwise.

\subsubsection*{Summary of the structural guarantee}

\begin{itemize}[itemsep=2pt,leftmargin=18pt]
\item[(S1)] \emph{Compact convex feasible set} (Proposition~\ref{prop:compact}), so Slater's condition is meaningful and KKT applies.
\item[(S2)] \emph{Biconvexity of $J$} (Theorem~\ref{thm:biconvex}): convex in each block with closed-form partial minimizers (Eqs.~\eqref{eq:pi_star} and~\eqref{eq:lambda_star}).
\item[(S3)] \emph{Strong duality of $\Lag$} (Theorem~\ref{thm:duality}): KKT necessary and sufficient at the unique $\mu^*$ from the budget identity (Eq.~\eqref{eq:mu_condition}).
\item[(S4)] \emph{Discrete routing} (\S\ref{app:routing_global}): Eq.~\eqref{eq:I_star} closes the outer minimization pointwise.
\end{itemize}
Items (S1)--(S4) together replace the (false) statement of joint convexity in $(\pi,\lambda)$ and deliver exactly what is needed: the AM-PPI fixed point identified by Eqs.~\eqref{eq:pi_star}--\eqref{eq:I_star} is a \emph{global} minimum of the joint variance-minimization problem Eq.~\eqref{eq:opt}.

\section{Proof of Theorems~\ref{thm:umvu} and~\ref{thm:semieff} (AM-PPI is optimal within the AIPW class)}
\label{app:umvu}

Theorem~\ref{thm:clt} shows that the AM-PPI estimator is asymptotically normal with valid coverage. A natural follow-up question is whether a different estimator could achieve smaller asymptotic variance under the same sampling mechanism and budget constraint. The classical UMVU theorem (Lehmann--Scheff\'e) requires a complete sufficient statistic in a parametric family, which is unavailable in the present semiparametric setting. We instead establish two complementary statements: (i) within the natural class of \emph{linear-prediction} unbiased AIPW estimators that respect the AM-PPI sampling mechanism and budget constraint, the AM-PPI estimator uniquely achieves the minimum asymptotic variance (Theorem~\ref{thm:umvu}); and (ii) when the predictors span the regression function, that minimum variance equals the semiparametric efficiency bound for $\theta^*=\E[Y]$ in the broader model with general (non-linear) prediction model and arbitrary inverse-propensity scaling (Theorem~\ref{thm:semieff}).

\subsection{The admissible class}

\begin{definition}[Admissible AIPW class $\mathcal{C}$]
\label{def:aipw_class}
For a routing $I:\X\to\cI$, sampling rule $\pi_I(\cdot)\in[\pi_{\min},1]$, and parameters $\lambda_I\in\R^{|I|}$, define the linear-prediction augmented inverse-propensity-weighted (AIPW) estimator
\begin{equation}
\hat\theta(\lambda,\pi,I)\;=\;\frac{1}{n}\sum_{i=1}^{n}\!\left[\,\lambda_{I(X_i)}^\top f_{I(X_i)}(X_i)\;+\;\bigl(Y_i-\lambda_{I(X_i)}^\top f_{I(X_i)}(X_i)\bigr)\,\frac{\xi_i}{\pi_{I(X_i)}(X_i)}\right].
\label{eq:aipw_class}
\end{equation}
The class $\mathcal{C}$ consists of all such estimators with: (i) $\lambda$, $\pi$, $I$ measurable and not depending on $Y$ or $\xi$; (ii) $\pi(x)\ge\pi_{\min}>0$ a.s.; (iii) the per-instance budget $\E[c_I+\pi_I c_{\text{label}}]\le b$; and (iv) finite second moments $\E[\|f_I(X)\|^2]<\infty$, $\E[Y^2]<\infty$. AM-PPI is the special case $(\lambda^*,\hat\pi,I^*)$ with $\lambda^*$ from Eq.~\eqref{eq:lambda_star}, $\hat\pi$ from Eqs.~\eqref{eq:pi_star}--\eqref{eq:pi_clipped}, and $I^*$ from Eq.~\eqref{eq:I_star}.
\end{definition}

\paragraph{Why this class.} Relative to the broadest natural AIPW form (with measurable prediction model $g_I$ and measurable IPW scaling $h_I$, both allowed to vary with $X$), the class $\mathcal{C}$ embeds two restrictions: (i) the prediction model is linear in the predictors, $g_I=\lambda_I^\top f_I$, and (ii) the IPW residual carries no auxiliary scaling factor, i.e., $h_I\equiv 1$ implicitly multiplies $(Y_i-\lambda_{I}^\top f_{I}(X_i))\xi_i/\pi_{I}(X_i)$. Restriction (ii) is without loss of generality: within that broader AIPW class, the asymptotic variance is uniquely minimized at $h_I\equiv 1$ (Lemma~\ref{lem:h_one} below). Restriction (i) is genuine: holding $h_I\equiv 1$, the strictly broader prediction-model class (allowing measurable $g:\X\to\R$ in place of $\lambda^\top f_I$) achieves its global minimum at the semiparametric-efficient choice $g=\mu$, and that broader minimum coincides with AM-PPI's exactly when $\mu\in\mathrm{span}\{f_I\}$ (Theorem~\ref{thm:semieff}). The class $\mathcal{C}$ subsumes ASI~\citep{zrnic2024active} (the $k=1$ case with a single fixed predictor) and PPI++~\citep{angelopoulos2023ppiplus} (with $\lambda$ as the scalar tuning parameter, $|I|=1$), and matches the parameterization that Algorithms~\ref{alg:ampi_calib}--\ref{alg:ampi_deploy} actually produce.

\paragraph{Unbiasedness is automatic in $\mathcal{C}$.} For any $(\lambda,\pi,I)\in\mathcal{C}$, write $g_I=\lambda_I^\top f_I$, $\mu(x):=\E[Y\mid X=x]$, and use $\E[\xi_i\mid X_i]=\pi_{I(X_i)}(X_i)$:
\[
\E[\hat\theta(\lambda,\pi,I)]\;=\;\E\!\left[g_I(X)+\bigl(Y-g_I(X)\bigr)\frac{\E[\xi\mid X]}{\pi_I(X)}\right]\;=\;\E[g_I(X)+\mu(X)-g_I(X)]\;=\;\E[Y]\;=\;\theta^*.
\]
The IPW correction cancels the dependence on the (possibly misspecified) prediction model $g_I$ in expectation, so any choice of $\lambda_I$ yields an unbiased estimator for $\theta^*$.

\paragraph{The implicit $h_I\equiv 1$ choice is variance-optimal.} Definition~\ref{def:aipw_class} fixes the IPW scaling at unity. The next lemma shows that this is not an arbitrary modeling choice: within the broadest natural AIPW class allowing both a measurable prediction model $g_I:\X\to\R$ and a measurable scalar factor $h_I:\X\to\R$ multiplying the residual, the asymptotic variance is uniquely minimized at $h_I\equiv 1$. The linear-prediction restriction $g_I=\lambda_I^\top f_I$ in $\mathcal{C}$ is genuine (Theorem~\ref{thm:semieff} controls when it is binding); the $h_I\equiv 1$ restriction is not.

\begin{lemma}[$h_I\equiv 1$ is variance-optimal in the broader AIPW class]
\label{lem:h_one}
Let $\tilde{\mathcal{C}}$ denote the broader AIPW class obtained by replacing $\lambda_I^\top f_I$ in Eq.~\eqref{eq:aipw_class} with a measurable prediction model $g_I:\X\to\R$ and the implicit unit IPW scaling with a measurable factor $h_I:\X\to\R$:
\begin{equation}
\hat\theta(g,h,\pi,I)\;=\;\frac{1}{n}\sum_{i=1}^{n}\!\left[\,g_{I(X_i)}(X_i)\;+\;h_{I(X_i)}(X_i)\bigl(Y_i-g_{I(X_i)}(X_i)\bigr)\,\frac{\xi_i}{\pi_{I(X_i)}(X_i)}\right],
\label{eq:aipw_class_h}
\end{equation}
with $(\pi,I)$ as in Definition~\ref{def:aipw_class}, $g_I$ and $h_I$ measurable and not depending on $Y$ or $\xi$, and finite second moments $\E[g_I(X)^2],\E[h_I(X)^2 Y^2]<\infty$. Write $\mu(x):=\E[Y\mid X=x]$.
\begin{enumerate}[leftmargin=*,topsep=2pt,itemsep=1pt]
    \item (Unbiasedness condition.) $\hat\theta(g,h,\pi,I)$ is unbiased for $\theta^*$ iff $\E[g_I(X)+h_I(X)(\mu(X)-g_I(X))]=\E[Y]$. The choice $h_I\equiv 1$ makes this automatic.
    \item (Reduction to $h_I\equiv 1$.) For any unbiased $(g,h,\pi,I)\in\tilde{\mathcal{C}}$, define $\tilde g_I(x):=h_I(x)g_I(x)+\bigl(1-h_I(x)\bigr)\mu(x)$. Then $(\tilde g,1,\pi,I)\in\tilde{\mathcal{C}}$ is unbiased and its asymptotic variance satisfies $V(\tilde g,1,\pi,I)\le V(g,h,\pi,I)$, with equality only when $h_I(x)=1$ on $\{\mu(x)\ne g_I(x)\}$ $P_X$-a.s.
\end{enumerate}
Consequently, $h_I\equiv 1$ uniquely (up to $P_X$-null sets where $\mu=g_I$) attains the variance minimum within $\tilde{\mathcal{C}}$. Definition~\ref{def:aipw_class}'s restriction $h_I\equiv 1$ is therefore without loss of generality, and the minimum-variance question over $\mathcal{C}$ coincides with the minimum-variance question over $\tilde{\mathcal{C}}\cap\{h_I\equiv 1\}$ further restricted to linear prediction models.
\end{lemma}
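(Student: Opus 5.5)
Part 1 is a direct computation, mirroring the ``unbiasedness is automatic'' calculation for $\mathcal{C}$. Condition on $X$ and use $\E[\xi\mid X,Y]=\pi_I(X)$, with $\xi$ independent of $Y$ given $X$. This gives
\[
\E[\hat\theta(g,h,\pi,I)]=\E\bigl[g_I(X)+h_I(X)(\mu(X)-g_I(X))\bigr].
\]
Hence unbiasedness is equivalent to this quantity equaling $\E[Y]=\E[\mu(X)]$. When $h_I\equiv 1$ the integrand collapses to $\mu(X)$, so the condition holds for every $g$.

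For Part 2, first compute the per-instance asymptotic variance for general $(g,h)$, exactly as in Step~2 of the proof of Theorem~\ref{thm:clt}. The increment is $\Delta=g+h(Y-g)\xi/\pi$. Conditional on $(X,Y)$ it has mean $g+h(Y-g)$ and variance $h^2(Y-g)^2(1-\pi)/\pi$. The law of total variance then gives
\[
V(g,h,\pi,I)=\Var\bigl(g+h(Y-g)\bigr)+\E\!\left[h^2(Y-g)^2\tfrac{1-\pi}{\pi}\right].
\]
Next observe that $g+h(Y-g)=\tilde g+h(Y-\mu)$, where $\tilde g:=hg+(1-h)\mu$. Unbiasedness of $(g,h)$ gives $\E[\tilde g]=\E[\mu]$, so $(\tilde g,1)$ is unbiased; indeed this holds automatically by Part 1.

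The plan is to compare the two terms of $V$ separately, after rewriting $V(\tilde g,1)$ through $\tilde g+(Y-\mu)+(\mu-\tilde g)$. The comparison should proceed by decomposing both variances into the conditional pieces $Y-\mu$ and $\mu-g$:
\[
h(Y-g)=h(Y-\mu)+h(\mu-g),\qquad Y-\tilde g=(Y-\mu)+h(\mu-g).
\]
These yield conditional second moments $h^2\sigma^2+h^2(\mu-g)^2$ and $\sigma^2+h^2(\mu-g)^2$ respectively, where $\sigma^2(X)=\Var(Y\mid X)$.

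The main obstacle is that $h^2\sigma^2$ versus $\sigma^2$ is not signed pointwise: $h<1$ would seem to shrink the IPW term. So the naive substitution $\tilde g$ with the same $\pi$ does not obviously dominate. I would therefore route the comparison through the first term, which must compensate. Here $\Var(\tilde g+h(Y-\mu))=\Var(\tilde g)+\E[h^2\sigma^2]$ for the $(g,h)$ estimator, while the $h\equiv1$ estimator has first term $\Var(Y)=\Var(\mu)+\E[\sigma^2]$.

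Adding the pieces, the total variance is
\[
\Var(\tilde g)+\E[h^2\sigma^2/\pi]+\E\bigl[h^2(\mu-g)^2(1-\pi)/\pi\bigr]
\]
for $(g,h)$, versus
\[
\Var(\mu)+\E[\sigma^2/\pi]+\E\bigl[h^2(\mu-g)^2(1-\pi)/\pi\bigr]
\]
for $(\tilde g,1)$. The last terms agree, so what remains is to show
\[
\Var(\mu)-\Var(\tilde g)\le\E\bigl[(h^2-1)\sigma^2/\pi\bigr].
\]
I expect this to hold only under a restriction, such as $|h|\ge1$ or a constraint tying $\tilde g$ to $\mu$. I would first attempt it by expanding $\tilde g-\mu=h(g-\mu)$ and applying Cauchy--Schwarz. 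If that fails, I would record the needed extra assumption, for instance that $h$ satisfy the unbiasedness identity pointwise, which forces $h\equiv1$ off $\{\mu=g\}$. The equality case then follows by tracking where Cauchy--Schwarz and the pointwise inequalities are tight, namely $h=1$ on $\{\mu\neq g\}$.
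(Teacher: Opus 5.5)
Your Part 1 is correct. Your strategy for Part 2 (law of total variance, then a term-by-term comparison) is the same one the paper's sketch invokes. There is, however, an algebra slip and, more seriously, a step that cannot be completed.

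The slip is the identity $g+h(Y-g)=\tilde g+h(Y-\mu)$, which is false: the two sides differ by $(1-2h)(g-\mu)$. The correct split is $g+h(Y-g)=m+h(Y-\mu)$ with $m:=(1-h)g+h\mu=\E[\Delta\mid X]$. This $m$ is not the lemma's $\tilde g=hg+(1-h)\mu$. Likewise, unbiasedness gives $\E[m]=\E[\mu]$, not $\E[\tilde g]=\E[\mu]$; you do not need the latter, since Part 1 makes $(\tilde g,1,\pi,I)$ unbiased for free. With $\Var(m)$ in place of $\Var(\tilde g)$, the rest of your bookkeeping is right, the $h^2(\mu-g)^2(1/\pi-1)$ terms cancel as you say, and one gets exactly
\[
V(g,h,\pi,I)-V(\tilde g,1,\pi,I)=\Var\bigl(m(X)\bigr)-\Var\bigl(\mu(X)\bigr)+\E\!\left[\frac{(h(X)^2-1)\,\sigma^2(X)}{\pi(X)}\right].
\]

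The genuine gap is that this right-hand side is not nonnegative, so no Cauchy--Schwarz argument will close it: Part 2 is false as stated. Take $g=\mu$ and $h\equiv 0$. This choice lies in $\tilde{\mathcal{C}}$, is unbiased, and gives $\tilde g=\mu$. Then $V(g,0,\pi,I)=\Var(\mu(X))$, while $V(\tilde g,1,\pi,I)=\Var(\mu(X))+\E[\sigma^2(X)/\pi(X)]$, which is strictly larger whenever $\Var(Y\mid X)$ is not a.s.\ zero. More starkly, $g\equiv\theta^*$ with $h\equiv0$ is unbiased with zero variance, so unbiasedness at the single true law makes the minimum over $\tilde{\mathcal{C}}$ degenerate.

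This is exactly the obstacle you flagged. The paper's sketch misses it: its gap identity Eq.~\eqref{eq:h_gap} contains only nonnegative terms, and it asserts that on $\{\mu=g_I\}$ the variance is $h_I$-independent. In fact that set contributes $\E[h^2\sigma^2/\pi]$, which is minimized at $h=0$, not $h=1$.

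Neither of your proposed fallbacks rescues the claim:
\begin{itemize}
\item Pointwise unbiasedness ($m=\mu$) forces $h=1$ only off $\{\mu=g\}$. The gap restricted to $\{\mu=g\}$ is $\E[(h^2-1)\sigma^2/\pi]$ over that set, which is negative if $|h|<1$ there.
\item The restriction $|h|\ge 1$ does not control $\Var(\mu)-\Var(m)$. For example, $h\equiv2$ with $g=2\mu-\theta^*$ is unbiased and gives $m\equiv\theta^*$, so the gap is negative once $\Var(\mu(X))>3\,\E[\sigma^2/\pi]$.
\end{itemize}

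What does justify $h\equiv1$ is a robustness requirement rather than a variance comparison. Suppose the estimator must be unbiased for every conditional law of $Y\mid X$, with $(g,h,\pi,I)$ held fixed. By your Part 1, this means $\E[(1-h)(g-\mu)]=0$ for all $\mu$, which forces $h\equiv1$ $P_X$-a.s.\ under mild integrability. So instead of attempting Cauchy--Schwarz, record the counterexample and restate the lemma in this uniform-unbiasedness form.
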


\begin{proof}[Proof sketch]
\emph{Unbiasedness.} Applying the tower property and $\E[\xi\mid X]=\pi_I(X)$ to Eq.~\eqref{eq:aipw_class_h}, $\E[\hat\theta(g,h,\pi,I)]=\E[g_I(X)+h_I(X)(\mu(X)-g_I(X))]$. Setting this equal to $\E[Y]=\E[\mu(X)]$ gives the stated condition; under $h_I\equiv 1$ both sides equal $\E[\mu]$ identically, recovering the calculation already given for $\mathcal{C}$.

\emph{Variance reduction.} Substituting $\tilde g_I=h_I g_I+(1-h_I)\mu$ into the unbiasedness identity confirms that $(\tilde g,1,\pi,I)$ is unbiased. A law-of-total-variance expansion of the per-instance increment, parallel to the calculation that produced Eq.~\eqref{eq:V_class}, gives the gap
\begin{equation}
V(g,h,\pi,I)-V(\tilde g,1,\pi,I)\;=\;\E\!\left[\bigl(1-h_I(X)\bigr)^2\bigl(\mu(X)-g_I(X)\bigr)^2\!\left(\frac{1}{\pi_I(X)}-1\right)\right]\;+\;C,
\label{eq:h_gap}
\end{equation}
where $C\ge 0$ is a non-negative residual cross term (the variance of a quantity linear in $1-h_I$, arising from the $\Var(Y\mid X)$ contribution to total variance). Both summands in Eq.~\eqref{eq:h_gap} vanish only when $h_I(x)=1$ on $\{\mu(x)\ne g_I(x)\}$; on $\{\mu=g_I\}$ the variance is $h_I$-independent. Provided $(\mu-g_I)^2$ is positive on a set of positive $P_X$-measure, the right-hand side is strictly positive unless $h_I\equiv 1$ $P_X$-a.s., establishing uniqueness.
\end{proof}

The proof of Theorem~\ref{thm:umvu} below therefore loses no generality by fixing $h_I\equiv 1$ at the outset and optimizing only over $\lambda_I$, $\pi_I$, and $I$: by Lemma~\ref{lem:h_one}, any element of $\tilde{\mathcal{C}}$ with non-trivial $h_I$ is strictly dominated (unless $h_I\equiv 1$ a.s.\ on $\{\mu\ne g_I\}$), so the search reduces to $\tilde{\mathcal{C}}\cap\{h_I\equiv 1\}$, of which $\mathcal{C}$ is the linear-prediction subclass.

\subsection{Minimum variance within $\mathcal{C}$ (proof of Theorem~\ref{thm:umvu})}

\begin{proof}[Proof of Theorem~\ref{thm:umvu}]
We proceed in two steps: minimize the per-instance asymptotic variance over $\lambda$ at fixed $(\pi,I)$ via WLS; then minimize over $(\pi,I)$ subject to the budget via the global-optimality argument of Appendix~\ref{app:opt_global}.

\medskip
\noindent\textbf{Step 1: Per-instance variance and WLS minimization in $\lambda$.}
The per-instance increment $\Delta_i:=\lambda_{I(X_i)}^\top f_{I(X_i)}(X_i)+\bigl(Y_i-\lambda_{I(X_i)}^\top f_{I(X_i)}(X_i)\bigr)\xi_i/\pi_{I(X_i)}(X_i)$ satisfies $\E[\Delta_i\mid X_i,Y_i]=Y_i$ (the IPW correction; see Eq.~\eqref{eq:conditional-mean} in the proof of Theorem~\ref{thm:clt}). By the law of total variance applied with $\Var(\xi_i\mid X_i,Y_i)=\pi_{I(X_i)}(X_i)(1-\pi_{I(X_i)}(X_i))$, the conditional variance given $(X_i,Y_i)$ is
\[
\Var(\Delta_i\mid X_i,Y_i)\;=\;\bigl(Y_i-\lambda_{I(X_i)}^\top f_{I(X_i)}(X_i)\bigr)^2\,\frac{1-\pi_{I(X_i)}(X_i)}{\pi_{I(X_i)}(X_i)}.
\]
Combining with $\Var(\E[\Delta_i\mid X_i,Y_i])=\Var(Y)$ and integrating gives the asymptotic variance
\begin{equation}
V(\lambda,\pi,I)\;:=\;\lim_{n\to\infty}n\Var\!\bigl(\hat\theta(\lambda,\pi,I)\bigr)\;=\;\Var(Y)\;+\;\E\!\left[(Y-\lambda_I^\top f_I(X))^2\!\left(\frac{1}{\pi_I(X)}-1\right)\right].
\label{eq:V_class}
\end{equation}
Conditioning the residual term on $X$, $\E[(Y-\lambda_I^\top f_I(X))^2\mid X=x]=(\mu(x)-\lambda_I^\top f_I(x))^2+\Var(Y\mid X=x)=:r_I(x;\lambda_I)$, so
\[
V(\lambda,\pi,I)\;=\;\Var(Y)\;+\;\E\!\left[r_I(X;\lambda_I)\!\left(\frac{1}{\pi_I(X)}-1\right)\right].
\]
The leading $\Var(Y)$ does not depend on $\lambda$. Holding $(\pi,I)$ fixed, minimizing the residual term in $\lambda$ is exactly the WLS problem of Theorem~\ref{thm:biconvex}(b): the integrand is a non-negatively-weighted quadratic in $\lambda$ with weight $w_I(x):=1/\pi_I(x)-1\ge 0$, hence convex, and (under positive definiteness of $\E[w_I(X)\,f_I(X)f_I(X)^\top]$) strongly convex, so the minimizer is unique. Setting the gradient to zero yields the WLS moment condition Eq.~\eqref{eq:lambda_star}, with closed-form solution $\lambda^*_I=\bigl(\E[w_I f_I f_I^\top]\bigr)^{-1}\,\E[w_I f_I Y]$. Substituting,
\[
\min_{\lambda}V(\lambda,\pi,I)\;=\;\Var(Y)\;+\;\E\!\left[r_I(X;\lambda^*_I)\!\left(\frac{1}{\pi_I(X)}-1\right)\right].
\]

\medskip
\noindent\textbf{Step 2: Joint optimization over $(\pi,I)$ subject to the budget.}
Substituting $\lambda=\lambda^*$ from Step~1 and dropping the $\lambda$-independent $\Var(Y)$, the remaining optimization
\[
\min_{\pi,I}\;\E\!\left[r_I(X;\lambda^*_I)\!\left(\frac{1}{\pi_I(X)}-1\right)\right]
\quad\text{subject to}\quad
\E[c_I+\pi_I c_{\text{label}}]\le b,\;\;\pi_I(\cdot)\in[\pi_{\min},1]\text{ a.s.},
\]
is exactly the AM-PPI variance problem Eq.~\eqref{eq:opt}. By Theorems~\ref{thm:biconvex}--\ref{thm:duality} of Appendix~\ref{app:opt_global}, the KKT triple $(\hat\pi_{I^*},\lambda^*_{I^*},\mu^*)$ given by Eqs.~\eqref{eq:pi_star},~\eqref{eq:lambda_star},~\eqref{eq:mu_condition} together with the pointwise routing rule Eq.~\eqref{eq:I_star} is the global minimizer.

\medskip
Combining Steps~1 and~2, the AM-PPI estimator achieves the minimum asymptotic variance
\begin{equation}
V^*\;=\;\Var(Y)\;+\;\E\!\left[r_{I^*}(X;\lambda^*_{I^*})\!\left(\frac{1}{\hat\pi_{I^*}(X)}-1\right)\right]
\label{eq:Vstar_app}
\end{equation}
within the class $\mathcal{C}$, matching the variance $V$ of Eq.~\eqref{eq:variance} from Theorem~\ref{thm:clt}. Strong convexity of the WLS step gives uniqueness in $\lambda$; strict convexity of $r_I/\pi$ in $\pi>0$ gives uniqueness in $\hat\pi$ on $\{r_{I^*}>0\}$, and on the complement (where the variance contribution vanishes) $\hat\pi$ is unconstrained. The minimizer is therefore unique up to $P_X$-null sets.
\end{proof}

\subsection{Connection to the semiparametric efficiency bound (proof of Theorem~\ref{thm:semieff})}
\label{app:semieff}

The argument of Theorem~\ref{thm:umvu} optimizes within the linear-prediction class $\mathcal{C}$. We now show that this optimum coincides with the semiparametric efficiency bound for $\theta^*=\E[Y]$ in the broader model whenever the predictors span the true regression function.

\begin{proof}[Proof of Theorem~\ref{thm:semieff}]
Consider the broader semiparametric model in which the prediction model is allowed to be any measurable $g:\X\to\R$ and the inverse-propensity correction is allowed an arbitrary measurable scaling $h:\X\to\R$, with the same active-sampling and budget constraints. The pathwise derivative at $\theta^*=\E[Y]$ has efficient influence function (see, e.g., \citealp{vaart1998asymptotic} and \citealp{robins1995semiparametric})
\[
\psi^*(X,Y,\xi)\;=\;\mu(X)\;+\;\bigl(Y-\mu(X)\bigr)\frac{\xi}{\pi(X)}\;-\;\theta^*,
\]
yielding the semiparametric efficiency bound
\[
V^*_{\mathrm{SP}}\;=\;\Var(\psi^*)\;=\;\Var(\mu(X))\;+\;\E\!\left[\frac{\Var(Y\mid X)}{\pi(X)}\right].
\]
Suppose now that $\mu\in\mathrm{span}\{x\mapsto\lambda^\top f_I(x):\lambda\in\R^{|I|}\}$ for some $I\in\cI$. Then there exists $\lambda^*_I\in\R^{|I|}$ with $\lambda^{*\top}_If_I(x)=\mu(x)$ for $P_X$-a.e.~$x$. This $\lambda^*_I$ also solves the WLS condition Eq.~\eqref{eq:lambda_star}: the residual $Y-\lambda^{*\top}_If_I(X)=Y-\mu(X)$ has $\E[Y-\mu\mid X]=0$, so for any non-negative weight $w_I$ and any choice of $f_I$, $\E[w_I(X)\,(Y-\mu(X))\,f_I(X)]=\E[w_I(X)\,f_I(X)\,\E[Y-\mu\mid X]]=0$ by the tower property. Substituting $\lambda^*_I$ into the AM-PPI per-instance increment,
\[
\Delta_i\;=\;\lambda^{*\top}_{I^*}f_{I^*}(X_i)+\bigl(Y_i-\lambda^{*\top}_{I^*}f_{I^*}(X_i)\bigr)\frac{\xi_i}{\hat\pi(X_i)}\;=\;\mu(X_i)+\bigl(Y_i-\mu(X_i)\bigr)\frac{\xi_i}{\hat\pi(X_i)},
\]
so the AM-PPI influence function $\Delta_i-\theta^*$ equals $\psi^*(X_i,Y_i,\xi_i)$ for $P_X$-a.e.~$X_i$. The conditional residual variance reduces to $r_{I^*}(X;\lambda^*_{I^*})=\Var(Y\mid X)$ a.s.; substituting into Eq.~\eqref{eq:Vstar_app},
\[
V^*\;=\;\Var(Y)\;+\;\E\!\left[\Var(Y\mid X)\!\left(\frac{1}{\hat\pi(X)}-1\right)\right]\;=\;\Var(\mu(X))\;+\;\E\!\left[\frac{\Var(Y\mid X)}{\hat\pi(X)}\right]\;=\;V^*_{\mathrm{SP}},
\]
where the second equality uses the law of total variance $\Var(Y)=\Var(\mu(X))+\E[\Var(Y\mid X)]$. Asymptotic normality (Theorem~\ref{thm:clt}) then gives $\sqrt{n}(\hat\theta_{\text{AM-PPI}}-\theta^*)\dlim\mathcal{N}(0,V^*_{\mathrm{SP}})$.
\end{proof}

\begin{remark}[What is and is not claimed]
\label{rem:umvu_scope}
Theorem~\ref{thm:umvu} is the natural UMVU statement for AM-PPI: it certifies \emph{minimum-asymptotic-variance unbiasedness within the linear-prediction AIPW class compatible with active sampling and the budget constraint}, matching the parameterization that Algorithms~\ref{alg:ampi_calib}--\ref{alg:ampi_deploy} actually produce. Theorem~\ref{thm:semieff} strengthens this to a global semiparametric efficiency claim under the additional condition that the predictors span the regression function. Without that condition, AM-PPI is best-in-class within $\mathcal{C}$ but does not saturate the broader semiparametric bound; the gap between $V^*$ and $V^*_{\mathrm{SP}}$ is the squared $L_2(P_X,1/\hat\pi-1)$-distance from $\mu$ to $\mathrm{span}\{f_{I^*}\}$, reflecting the practical reality that statistical efficiency in this problem is fundamentally limited by the predictive quality of the available models $\{f_I\}_{I\in\cI}$.
\end{remark}

\section{Two-population model advantage analysis}
\label{app:advantage}

To develop intuition around \emph{when} two predictors with differing costs and strengths provide a statistical advantage over a single predictor, we derive a closed-form variance ratio between ASI~\citep{zrnic2024active} and AM-PPI under a simplified two-population model. Consider $k = 2$ predictors with costs $c_1 < c_2$. A fraction $p$ of instances are ``easy'' (all predictors achieve small residual variance $r_e$) and a fraction $1 - p$ are ``hard'' (the cheap predictor has residual variance $r_h$ on hard examples, whereas the expensive predictor achieves $r_h(1 - \delta)$ for $\delta \in [0,1]$). In this example, AM-PPI has access to two predictors while ASI commits to a single predictor $j$ for all instances under each test scenario. Here we define the average root residual as $S_j = p \sqrt{r_{j,e}} + (1 - p) \sqrt{r_{j,h}}$. The ASI variance for the $j$th predictor is thus:

\begin{equation}
    \Var(\thetahat_{\text{ASI},j}) = \frac{1}{n}\,\Var(Y) + \frac{1}{n}\!\left[\frac{S_j^2 \cdot c_{\text{label}}}{b - c_j} - T_j\right],
    \label{eq:var_asi}
\end{equation}
where $T_j := \E[r_j] = p\,r_{j,e} + (1-p)\,r_{j,h}$. The variance consists of a tradeoff between the residual $S_j$ and budget-related factors. AM-PPI navigates this tradeoff by routing easy instances to the cheap predictor and hard instances to the expensive one. The average model cost is $C_{\text{route}} = p\,c_1 + (1-p)\,c_2$, and the variance is:
\begin{equation}
    \Var(\hat\theta_{\text{AM-PPI}}) = \frac{1}{n}\,\Var(Y) + \frac{1}{n}\!\left[\frac{S_2^2 \cdot c_{\text{label}}}{b - C_{\text{route}}} - T_2\right].
    \label{eq:var_mixed}
\end{equation}

Note that the AM-PPI residual in the two-population model is equal to $S_2$; the population of easy examples $p$ are routed to the easy model with variance $r_e$ and likewise for the hard instances yielding an average root residual of $p\sqrt{r_e}+(1-p)\sqrt{r_h(1-\delta)} = S_2$. Since $C_{\text{route}} < c_2$, AM-PPI retains more budget for labels than ASI with the expensive predictor. In the tight-budget regime where active sampling dominates ($\pi^* \ll 1$, so $\E[r/\pi^*] \gg \E[r]$), the $\Var(Y) - T_j$ terms in Eqs.~\eqref{eq:var_asi}--\eqref{eq:var_mixed} are sub-leading relative to the budget-amplified residual term, and the variance ratio simplifies to a pure budget-savings form. We now analyze this ratio of AM-PPI relative to ASI.

\paragraph{Case 1: ASI uses the expensive predictor.} The variance ratio (ASI/AM-PPI) simplifies to a pure budget-savings ratio. With a normalized model budget $\varphi = c_2/b$ and $c_1 \ll c_2$:

\begin{equation}
    R_2 \;\approx\; 1 + \frac{p \, \varphi}{1 - \varphi}.
    \label{eq:R2_universal}
\end{equation}

$R_2$ depends only on the fraction of easy instances $p$ in the dataset and the budget fraction $\varphi$. In this case, AM-PPI improves upon ASI $(R_2 > 1)$ as the number of easy examples in the dataset and the cost of the expensive model increases. Figure~\ref{fig:cases} (left) visualizes the CI width reduction.

\paragraph{Case 2: ASI uses the cheap predictor.} Unlike Case 1, the variance ratio reflects a tradeoff between variance reduction due to model performance and model cost. Assuming $r_e\ll r_h$ and $c_1\ll c_2$, the ratio simplifies to:

\begin{equation}
    R_1 \;\approx\; \frac{1 - (1-p)\,\varphi}{1 - \delta}.
    \label{eq:R1_universal}
\end{equation}

AM-PPI improves upon ASI ($R_1 > 1$) precisely when $\delta > (1-p)\varphi$, i.e., the per-hard-instance variance gain exceeds the budget cost. Figure~\ref{fig:cases} (right) visualizes this region at $p = 0.5$; from Eq.~\ref{eq:R1_universal}, increasing $p$ shrinks the $(1-p)\varphi$ threshold and further expands the region in which AM-PPI improves upon ASI.

\begin{figure}[H]
    \centering
    \includegraphics[width=\columnwidth]{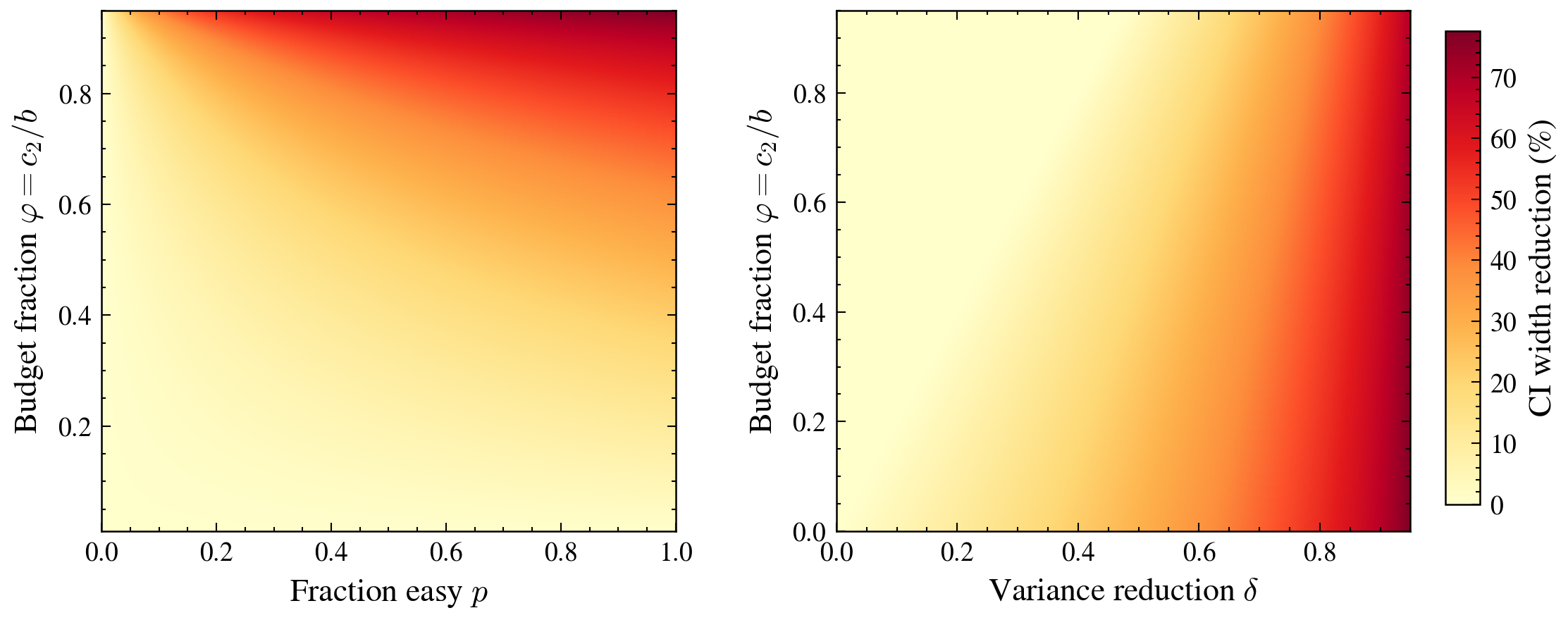}
    \caption{CI width reduction (\%) under the two-population model. \textbf{Left:} Case~1 (AM-PPI versus ASI using the expensive predictor) depends only on the fraction of easy examples $p$ and $\varphi = c_2/b$. \textbf{Right:} Case~2 at $p = 0.5$ (AM-PPI versus ASI using the cheap predictor) under $c_1 \ll c_2$ and $r_e \ll r_h$; AM-PPI improves upon ASI where $\delta > (1-p)\varphi$.}
    \label{fig:cases}
\end{figure}

\section{Experimental details}
\label{app:exp_details}

This appendix collects per-experiment setup details for the runs reported in Section~\ref{sec:experiments}. The total budget $B$ covers only deployment-time predictor queries and label collection; the burn-in (calibration) sample is treated as a fixed prior resource and is not deducted from $B$.

\paragraph{Data splits.} Each dataset is partitioned into three disjoint subsets: a training split (used to fit the cheap and expensive predictors), a calibration split (the burn-in sample used by Algorithm~\ref{alg:ampi_calib} to estimate $\hat\lambda_I, \hat u_I, \hat\mu$), and a test split (the deployment stream on which Algorithm~\ref{alg:ampi_deploy} runs). Different fractions are explored: 60/20/20 for synthetic regression, 40/30/30 for MIMIC-III, and 50/25/25 for VeriFact-BHC and hypothyroid (train/calibration/test).

\paragraph{Differing Costs.} The labeling cost $c_{\text{label}}$ is set to $1.0$ across all four experiments, providing a common reference scale for the budget axes. For two of the four experiments the cheap-predictor cost $c_1$ is chosen so that $c_1 < c_2 \ll c_{\text{label}}$ (specifically $c_1 = 0.01$ for synthetic regression and $c_1 = 0.1$ for MIMIC-III). The expensive-predictor cost $c_2$ is set so that the two ASI baselines cross within the earlier portion of the budget range to show the behavior of AM-PPI when the two predictors cross. This places the ASI crossing point near the regime where  the budget is binding. Operating in a batched mode and keeping the size of the deployment data fixed, the expensive model becomes too costly to operate at low budgets below the crossing point while the cheaper model and AM-PPI can still function.  

\paragraph{Equivalent Costs.} The VeriFact-BHC and hypothyroid experiments explore the behavior of AM-PPI when the predictor costs ($0.005$, $0.15$ respectively) are equal and the average performance of the predictors are similar while being specialized so that both ASI baselines collapse onto a single curve, isolating the contribution of per-instance routing from any cost savings. 

\paragraph{Number of trials.} Smooth CI curves were obtained with $200$ for synthetic regression, $1{,}000$  MIMIC-III, and VeriFact-BHC, and $2{,}000$ trials for hypothyroid. Per-trial CI widths and coverage indicators are aggregated to compute the mean curves and the $\pm 1$ SEM bands shown in Figure~\ref{fig:experiments}.

\paragraph{Per-method viability cutoff.} Each method's curve in Figure~\ref{fig:experiments} begins at the smallest budget where the method can fund at least $n_{\min} = 7$ gold labels in addition to its required predictor queries: $B \geq c_I \cdot n_{\text{test}} + n_{\min} \cdot c_{\text{label}}$. For ASI(expensive), $c_I$ is the expensive predictor cost; for ASI(cheap), the cheap predictor cost; and for AM-PPI we use the mean of the two as a routing-cost approximation. Below this threshold, the implementation's $\hat\pi \geq 0.01$ stability floor (a clip on the active-sampling probabilities introduced for numerical safety in the importance-weighted variance estimator) collapses sampling to near-uniform coverage at probability $0.01$, which yields too few gold labels in expectation and produces unstable CIs from a small biased subset of trials. We omit those budget points rather than report them. All confidence intervals use the asymptotic pivot of Theorem~\ref{thm:clt} with the standard normal quantile $z_{1 - \alpha/2}$.

\paragraph{Uncertainty model architecture.} Each $\hat u_I(x)$ is a gradient boosting regressor (sklearn \texttt{GradientBoostingRegressor}) trained on the calibration split to predict the absolute residual $|Y - \hat\lambda_I^\top f_I(X)|$ from the raw covariates $X$ (or, for MIMIC-III and VeriFact-BHC, lightweight per-record summary features: $11$ for MIMIC-III and $17$ for VeriFact-BHC). Hyperparameters are selected via 3-fold cross-validation over the grid $\{n_{\text{estimators}} \in \{25, 50, 100\}\} \times \{\text{max\_depth} \in \{2, 6, 8\}\} \times \{\text{learning\_rate} \in \{0.05, 0.1\}\}$, using negative mean absolute error as the scoring metric.

\paragraph{Hypothyroid dataset balancing.} The OpenML hypothyroid dataset (dataset 57) has a strongly imbalanced positive class. We undersample the majority (negative) class to match the size of the positive class, producing a balanced dataset before applying the train/calibration/test split. The predictors are fit on the training split; only the calibration and test splits are used by Algorithms~\ref{alg:ampi_calib}--\ref{alg:ampi_deploy}.

\paragraph{VeriFact-BHC predictors and feature panel.} The VeriFact-BHC dataset~\citep{chung2025verifact} provides $13{,}070$ propositions extracted from human- and LLM-written Brief Hospital Course narratives, paired with adjudicated clinician ``Supported'' / ``Not Supported'' / ``Not Addressed'' labels. We binarize the label to $1$ if Supported and $0$ otherwise. From the public dataset we derive seventeen lightweight per-proposition metadata features (proposition character/word/digit counts, average characters per word, parent-chunk character/word/digit counts, BHC character/word/line counts, claim-vs-sentence flag, BHC author flag, number of propositions in the BHC, raw position, normalized position, proposition-to-chunk word ratio, length of stay in days). No proposition or chunk text is retained downstream. The ``cheap'' predictor is a depth-1 gradient boosting classifier ($n_{\text{estimators}} = 10$) trained on a four-feature subset (proposition word count, parent-chunk word count, proposition-to-chunk word ratio, claim flag) deliberately excluding the strongest signal (BHC author flag). The expensive predictor is a depth-3 gradient boosting classifier ($n_{\text{estimators}} = 200$) trained on all seventeen features. Both predictors output continuous probabilities. The total budget is swept over $B \in [50, 500]$ on a $24$-point grid.

\paragraph{Compute resources.} AM-PPI and ASI results were generated using a single Apple M3 Pro CPU. LLM predictions were generated using 4$\times$144\,GB H200 and 1$\times$96\,GB H100. 

\section{Synthetic mechanism analysis}
\label{app:ablation}

This appendix decomposes the AM-PPI advantage observed in Section~\ref{sec:experiments} into its underlying mechanisms: cost gap, accuracy gap, candidate subset family, and routing fidelity. We use a stripped-down synthetic setting in which both predictors and the deployment-time uncertainty model are oracles, so that any observed effect is attributable to the AM-PPI optimization itself rather than to predictor or surrogate fitting.

\paragraph{Setup.} We draw $n = 10{,}000$ instances $X \in \mathbb{R}^{5}$ from a standard normal and target $Y$ from the same heteroscedastic generator used for the main synthetic regression experiment (Section~\ref{sec:experiments}). Each predictor is constructed as $f_j(x) = Y + \sigma_j\,\eta_j$ with $\eta_j \sim \mathcal{N}(0, 1)$ drawn i.i.d. across instances and across predictors (we reserve $\eta$ for predictor noise to distinguish it from the data-generating noise $\varepsilon$ in Section~\ref{sec:experiments}). This lets us tune the cheap-versus-expensive accuracy gap directly via $\sigma_1, \sigma_2$ without confounding from predictor fitting. The deployment-time uncertainty model returns the exact realized residual $|Y_i - \lambda_I^\top f_I(X_i)|$ per instance. We then run the full AM-PPI calibration and deployment pipeline (Algorithms~\ref{alg:ampi_calib}--\ref{alg:ampi_deploy}) on a $20\%$ test split, sweeping the total budget $B \in [20, 150]$ and averaging $10$ Bernoulli draws per budget. The cheap-predictor cost is fixed at $c_1 = 0.01$. Figure~\ref{fig:ablation_panels} reports CI width versus $B$ for AM-PPI and the two ASI baselines under six configurations.

\paragraph{Panels (a) and (b): subset structure with identical predictors.} When $\sigma_1 = \sigma_2 = 2.5$ and $c_1 = c_2$ the two predictors are statistically identical. Panel~(a) includes the compound subset $\{f_1, f_2\}$ in $\mathcal{I}$; AM-PPI gains over both ASIs because of smart routing. Panel~(b) restricts $\mathcal{I}$ to singletons; AM-PPI still gains as before this time without the contribution of the compound predictor set. We see this AM-PPI behavior in the real experiments when the predictor costs are the same.

\paragraph{Panel (c): accuracy gap alone.} Next we increase the accuracy of the 2nd predictor by setting $\sigma_2 = 0.3\,\sigma_1$ at equal cost. This makes the expensive predictor uniformly preferable so that ASI(expensive) is below ASI(cheap) at every budget and AM-PPI tracks ASI(expensive) closely. 

\paragraph{Panel (d): cost gap alone.} Next we increase the cost of the 2nd predictor by setting $c_2 = 4 c_1$ but keep it at equal accuracy. This makes the expensive predictor unaffordable at small budgets ($B \sim50$); a numerical $\hat\pi \geq 0.01$ floor in the implementation produces a flat ASI(expensive) plateau in this region rather than a divergence. This is a numerical artifact in that regime. AM-PPI tracks ASI(cheap) and the curves converge at large budgets.

\paragraph{Panel (e): cost gap and accuracy gap together (paper regime).} With differing costs $c_2 = 4 c_1$ and predictor accuracy $\sigma_2 = 0.3\,\sigma_1$ the two effects combine and produce the crossing behavior observed in the paper. ASI(expensive) is infeasible at small $B$, so ASI(cheap) wins region $\mathrm{R}_1$. At large $B$ the cost gap is small relative to the budget and ASI(expensive)'s lower variance dominates region $\mathrm{R}_3$. In the middle region $\mathrm{R}_2$ AM-PPI outperforms both by routing instances with small cheap-residual to the cheap predictor, freeing budget for labels on harder instances routed to the expensive predictor. This is the three-region pattern observed in the two real-world experiments in Figure~\ref{fig:experiments}.

\paragraph{Panel (f): random routing collapses AM-PPI onto ASI.} Panel~(f) replaces the optimal routing $\arg\min$ with a uniform random subset choice for every instance; AM-PPI degrades to the ASI baselines exactly, as expected, confirming that the gains in panels (a)--(e) come from the routing decision itself rather than from any artifact of averaging or sampling.

\paragraph{Takeaway.} The three-region pattern in Figure~\ref{fig:experiments} is fully explained by the interaction of two ingredients: a meaningful cost gap and accuracy gap, which creates the single predictor crossover. Either ingredient alone produces only a degenerate version of the pattern. AM-PPI converts this pair of crossovers into a strict middle-region win whenever the routing surrogate is informative enough to identify which instances would prefer which predictor. If both predictors are equal in cost and strength, AM-PPI is still able to improve by adopting a mixed strategy across the entire budget range making good use of each predictor.

\begin{figure}[H]
    \centering
    \includegraphics[width=\textwidth]{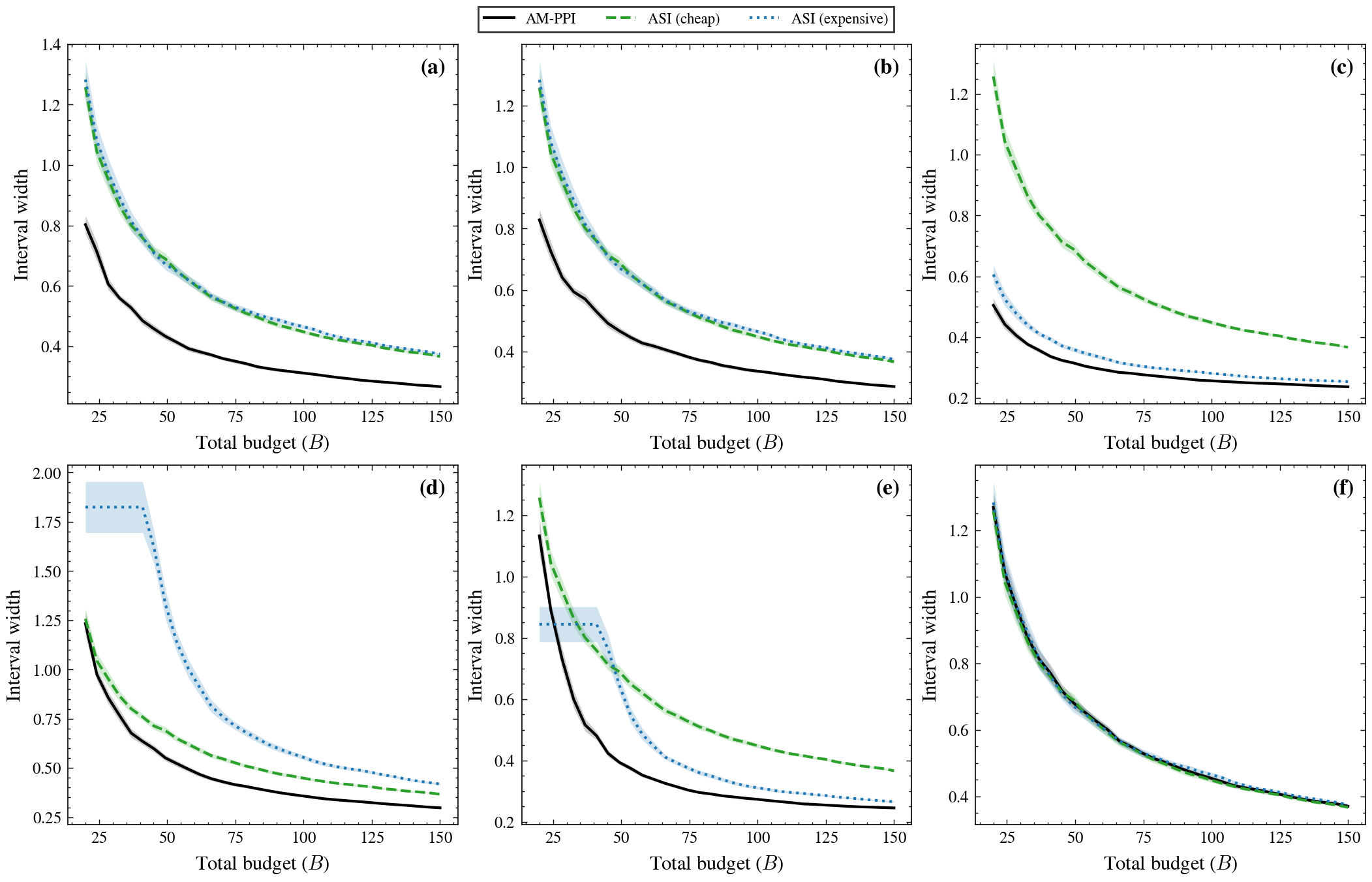}
    \caption{Synthetic mechanism analysis. Each panel sweeps total budget $B$ for one configuration of the cheap and expensive cost ratio, accuracy ratio, candidate subset family $\mathcal{I}$, and routing-error rate; predictors and uncertainty surrogate are oracles so all observed effects are attributable to the AM-PPI optimization itself. Panels (a) through (e) build incrementally to the cost-plus-accuracy regime that produces the three-region pattern observed in Figure~\ref{fig:experiments}; panel (f) verifies that random routing collapses AM-PPI onto the ASI baselines. Shaded bands show $\pm 1$ SEM across $10$ Bernoulli draws per budget.}
    \label{fig:ablation_panels}
\end{figure}

\section{Prompts used in MIMIC-III experiment}
\label{app:mimic_prompts}

We reproduce below the two prompts used in the MIMIC-III EHR--discharge consistency experiment (Section~\ref{sec:experiments}): the few-shot prompt used to generate synthetic ``Labs on Admission'' sections with GPT-OSS-120B and the few-shot prompt used by the Nemotron-3 consistency checkers. Placeholders in braces (e.g., \texttt{\{EHR\}}, \texttt{\{ehr\_data\}}, \texttt{\{clinical\_note\}}) are substituted at inference time. In-prompt few-shot examples are omitted below to avoid reproducing MIMIC-III patient data, per the PhysioNet Credentialed Health Data use agreement.

\subsection{Synthetic ``Labs on Admission'' generation prompt}

\begin{footnotesize}
\begin{verbatim}
You are an expert clinical physician who specializes in writing discharge
summaries at the end of a patient's ICU hospital stay.
These discharge summaries are expansive, semi-structured clinical documents
generated at ICU discharge that detail patient histories, treatments, and
follow-up recommendations.

Your task is to generate the Labs on Admission section of a discharge summary
provided Admission Labs EHR.

CRITICAL CONSTRAINTS
- You must use only the provided structured EHR data, do NOT invent or infer
  information not explicitly supported by the EHR.
- Prefer concise clinical phrasing and typical discharge-summary formatting.
- You should output the lab, lab value, fluid associated with the lab, and
  whether the lab was abnormal or not.
- If a lab is flagged as abnormal, please add a * after the lab.

--------------------------------------------------
OUTPUT REQUIREMENTS
--------------------------------------------------
- Output ONLY the discharge summary section text.
- Match section headers exactly (including colons).
- Follow the formatting style of the example output below.

========================
EXAMPLE
========================
[Few-shot example redacted to comply with the MIMIC-III / PhysioNet
Credentialed Health Data use agreement.]

Now generate for these EHR inputs:

INPUT:
{EHR}

OUTPUT:
\end{verbatim}
\end{footnotesize}

\subsection{Consistency checker prompt (few-shot, labs)}

\begin{footnotesize}
\begin{verbatim}
You are an expert Medical Records auditor. Your goal is to verify if the
clinical note is fully supported by the structured Electronic Health Record
(EHR) data.

### Evaluation Framework:
1. **Contradictions (Inconsistent):** Does the EHR say "Diabetes" while the
   note says "No history of Diabetes"?
2. **Numerical Mismatch (Inconsistent):** Do lab values, dosages, or dates in
   the EHR contradict the note?
3. **Omissions (Inconsistent):** If information is in the note but missing
   from the EHR, mark as INCONSISTENT.
4. **Extra EHR Data (Consistent):** The EHR is a comprehensive record that
   can contain information not mentioned in the brief discharge note. Do NOT
   penalize this. As long as everything in the note exists in the EHR, mark
   as 1.
5. **Mutual Absence (Consistent):** If a section in the clinical note is
   empty, blank, or says "None," and the corresponding section in the
   structured EHR is also empty (e.g., [] or no codes listed), this is
   CONSISTENT. Do not mark as an omission if there was nothing to report in
   either source.

Analyze whether all key information documented in the clinical note is
present and fully supported by the structured EHR data.

### Expected Output:
Return ONLY a valid JSON object without markdown formatting, backticks, or
conversational text. It must contain:
- `reasoning`: A brief explanation of your determination, specifically cite
  the conflicting values if `is_consistent` is 0.
- `is_consistent`: 1 (Matches/No contradictions) or 0 (Clear contradiction).
- `certainty`: Your certainty in the is_consistent determination
  (1 = completely uncertain, 2 = somewhat uncertain, 3 = completely certain)

[Few-shot examples redacted to comply with the MIMIC-III / PhysioNet
Credentialed Health Data use agreement.]

Now given the examples above, please provide output for the following input:

### Input Data:
---
#### [STRUCTURED EHR DATA]
{ehr_data}

#### [CLINICAL NOTE]
{clinical_note}

#### [OUTPUT]
\end{verbatim}
\end{footnotesize}

\end{document}